\documentclass[sn-chicago]{sn-jnl}



\jyear{2021}%
\usepackage{verbatim}
\usepackage{microtype}
\usepackage{enumitem}
\usepackage{booktabs} 
\usepackage{color,graphicx}
\usepackage{comment}
\usepackage{thmtools, thm-restate}
\usepackage{natbib}
\usepackage{hyperref,xcolor}

\usepackage{amssymb,amsmath,dsfont,bm,mathtools, bbm} 
\newcommand{\red}[1]{{\leavevmode\color{red}{#1}}}

\newcommand{\green}[1]{{\leavevmode\color[RGB]{0,128,0}{#1}}}
\newcommand{\purple}[1]{{\leavevmode\color[RGB]{128,0,255}{#1}}}

\newcommand\todo[1]{{\red{TODO: {#1}}}}
\newcommand\tocite{\red{[CITE]}}

\newcommand\toref{\red{[REF]}}

\newcommand{\g}[1]{\green{[Ganesh: {#1}]}}
\newcommand{\gadd}[1]{\green{{#1}}}
\newcommand{\shweta}[1]{\purple{{#1}}}

\newcommand{\calC}{\mathcal{C}}

\newcommand{\balance}{\textsf{Balance}}
\DeclareMathOperator*{\argmax}{argmax}
\DeclareMathOperator*{\argmin}{argmin}

\newtheorem{theorem}{Theorem}

\newtheorem{definition}{Definition}
\newtheorem{lemma}[theorem]{Lemma}
\newtheorem{claim}[theorem]{Claim}
\newtheorem{proposition}[theorem]{Proposition}
\newtheorem{corollary}[theorem]{Corollary}
\newtheorem{conjecture}[theorem]{Conjecture}
\newcommand{\RR}{\textsc{FRAC}}
\newcommand{\bad}{\text{bad}}
\newcommand{\good}{\text{good}}

\newcommand{\tfair}{\textsf{$\tau$-ratio}}
\usepackage{multirow}
\usepackage{booktabs}
\usepackage{subcaption}
\usepackage{adjustbox}
\newcommand{\RROE}{\RR$_{OE}$}
\usepackage{cancel}
\usepackage{graphicx}
\usepackage[linesnumbered,ruled,vlined]{algorithm2e}
\SetKwInput{KwInput}{Input}                
\SetKwInput{KwOutput}{Output}

\theoremstyle{thmstyleone}%

\theoremstyle{thmstyletwo}%

\theoremstyle{thmstylethree}%


\begin{document}

\title[Efficient Algorithms For Fair Clustering with a New Fairness Notion]{Efficient Algorithms for Fair Clustering with a New Notion of Fairness }


\author*[1]{\fnm{Shivam} \sur{Gupta}}\email{shivam.20csz0004@iitrpr.ac.in}

\author[2]{\fnm{Ganesh} \sur{Ghalme}}\email{ganeshghalme@ai.iith.ac.in}

\author[1]{\fnm{Narayanan} \sur{C. Krishnan}}\email{ckn@iitrpr.ac.in}

\author[1]{\fnm{Shweta} \sur{Jain}}\email{shwetajain@iitrpr.ac.in}

\affil*[1]{\orgname{Indian Institute of Technology (IIT)} \orgaddress{ \city{Ropar}, \country{India}}}

\affil[2]{\orgname{Indian Institute of Technology (IIT) Hyderabad}, \orgaddress{\city{Kandi}, \country{India}}}


\abstract{
    We revisit the problem of fair clustering,  first introduced by  \citet{chierichetti2018fair}, that requires each protected attribute to have approximately equal representation in every cluster; i.e., a \balance\ property.  Existing solutions to fair clustering are either not scalable or do not achieve an optimal trade-off between clustering objective and fairness. In this paper, we propose a new notion of fairness, which we call  \tfair\ fairness, that strictly generalizes the  \balance\ property and enables a  fine-grained efficiency vs. fairness trade-off. Furthermore, we show that  simple greedy round-robin based algorithms achieve this trade-off efficiently. Under a more general setting of multi-valued protected attributes, we rigorously analyze the theoretical properties of the our algorithms. Our experimental results suggest that the proposed solution  outperforms all the state-of-the-art algorithms and works exceptionally well even for a large number of clusters.}


\keywords{Fairness, Clustering, Machine Learning, Unsupervised Learning}

\maketitle

\section{Introduction}

Advances in machine learning research  have resulted in the development of increasingly accurate models, leading to the wide adoption of these algorithms in applications ranging from self-driving cars, approving home loan applications, criminal risk prediction, college admissions, and health risk prediction. While improving the accuracy is the primary objective of these algorithms, their use to allocate social goods and opportunities such as access to healthcare and  job and educational opportunities warrants a closer look at the societal impacts of their outcomes (\cite{carey2022fairness}; \cite{ntoutsi2020bias}). Recent studies have exposed a discriminatory outlook in the outcomes of these algorithms leading to  treatment disparity towards  individuals belonging to marginalized groups based on gender and race in real-world applications like automated resume processing \citep{amazon},  loan application screening, and criminal risk prediction \citep{julia2016propublica}.    Designing fair and accurate machine learning models is thus an essential and immediate requirement for these algorithms to make a meaningful real-world impact.

While fairness in  supervised learning  is studied 
\citep{introCiteFair, fairClassifier4, introCiteFair2,MLBiasSurvey,le2022survey,dwork2012fairness}, the fairness in unsupervised learning is still in its formative stages (\cite{ClusteringFairSurvey1}; \cite{SurveyClustering}). To emphasize the importance of fairness in unsupervised learning, we consider the following hypothetical scenario:  An employee-friendly company is looking to open branches at multiple locations across the city and distribute its workforce in these branches to improve work efficiency and minimize overall travel time to work. The company has employees with diverse backgrounds  based on, for instance, race and gender and does not prefer any group of employees over other groups based on these attributes. The company's diversity policy dictates hiring a minimum fraction of employees from each group in every branch. Thus, the natural question is: where should the branches be set up to maximize work efficiency, minimize travel time, and maintain diversity. In other words, the problem is to devise an unsupervised learning algorithm for identifying branch locations with the fairness (diversity) constraints applied to each branch. This problem can be naturally formulated as a clustering problem with additional fairness constraints on allocating the data points to the cluster centers.
Clustering, along with classification, forms the core of powerful machine learning algorithms with significant societal impact through applications such as automated assessment of job suitability  \citep{Whither} and facial recognition \citep{Li_2020_CVPR}.   These constraints arise naturally in applications where data points correspond to individuals, and cluster association signifies the partitioning of individuals based on features.

Typically, fairness in supervised learning is measured by the algorithm's performance over different groups based on protected(sensitive) attributes such as gender, race, and ethnicity. The first fairness notion for clustering was  proposed by  \cite{chierichetti2018fair}, wherein each cluster is required to exhibit a  \balance; defined as the ratio of protected attribute and non-protected attribute in each cluster  to the level of this ratio in the entire dataset. Their methodology--- apart from having significant computational complexity---applies only to binary-valued protected attributes and does not allow for trade-offs between the clustering objective and fairness guarantees. The subsequent literature \cite{backurs2019scalable,FairCoresetsStreaminKmeans,CoresetLowDimensionSpace,CoresetsFairnessHuang} improve efficiency; however, do not facilitate explicit  trade-off between the clustering objective  cost  and the fairness guarantee. 
In this paper we define a new notion of fairness which we call \tfair\ guarantee. To each cluster, a \tfair\ guarantee ensures a certain fraction of data points for a  given protected attribute. We show that this simple notion of fairness has several advantages. First, the definition of \tfair\  naturally extends to multi-valued protected attributes; second \tfair\ fairness strictly generalizes the \balance\ property; third,   it admits an intuitive and computationally efficient round-robin approach to fair  allocation; and fourth, it is straightforward  for the algorithm designer to  input the requirement into the algorithm as constraints and easy to interpret and evaluate it from the output. In our running example, if a company wants to have minimum fraction of employees from each group in every branch (clusters) then one can simply specify this  in the form of a  vector $\tau$ of size equal to number of protected groups. Through rigorous theoretical analysis, we show that the proposed algorithm \RROE\ provides a $2(\alpha + 2)$-approximate guarantee on the objective cost with  \tfair\ fairness guarantee up to three clusters. Here, $\alpha$ is the approximation factor achieved by the vanilla clustering algorithm. 
We further experimentally demonstrate that our approach can achieve better clustering objective costs  than any state-of-the-art (SOTA) approach on real-world data sets, even for a large number of clusters. Overall, the following are the contributions of our work.

\subsection{Our Contribution }
\label{sec:contri}
\paragraph{Conceptual Contribution}
 We introduce a new notion of fairness which we call a \tfair\ guarantee and show that any algorithm satisfying a \tfair\ guarantee also satisfies the  \balance\ property  (Theorem \ref{thm:balance}). Also, we show that every parameter setting of \balance\ collapses to a degenerate value of \tfair\ fairness showing generalisation of proposed notion.  We propose two simple and efficient round-robin-based algorithms for the \tfair\  fair allocation problem (see, Section \ref{sec:algo}). Our algorithms use the clustering algorithm as a black-box implementation and modify its output appropriately to ensure \tfair\ guarantee. The fairness guarantee is deterministic and verifiable, i.e., holds for every run of the algorithm, and can be verified from the outcome without explicit knowledge of the underlying clustering algorithm. The guarantee on objective cost, however, depends on the approximation guarantee of the clustering algorithm. 
 
Our algorithms can handle multi-valued protected attributes, allow  user-specified bounds on \balance, are computationally efficient, and incur only an additional time complexity of $O(kn\log(n))$, best in the current literature. Here, $n$ is the size of the dataset, and $k$ is the number of clusters. 

\paragraph{Theoretical Contributions}

We show theoretical guarantees for our first algorithm; $\textsc{Frac}_{OE}$.  First, we show that our algorithm achieves $2(\alpha+2)$-approximate fairness for clustering instances  upto three clusters  (Theorem \ref{thm:main} and Lemma \ref{thm:k=3}) with respect to optimal fair clustering cost for $\tau$=$1/k$; here $\alpha$ is a clustering algorithm specific  constant. That is, given a fair clustering instance with $k \leq 3$ clusters, $n$ datapoints and a  fairness vector $\tau$, our proposed algorithm returns an allocation that has objective cost of $2(\alpha +2)$ times the objective cost of optimal assignment that also satisfies the \tfair\ guarantee. We further show that this guarantee is tight (Proposition \ref{prop:tightness}).    
 For $k>3$ clusters we  show $2^{k-1}(\alpha + 2)$-approximation guarantee on the \tfair . We conjecture that the exponential dependence of the approximation guarantee on $k$ can be reduced to a constant. The guarantees are extended to work for any general $\tau$ vector (see Section \ref{sec:generalTauFrac}). We also theoretically analyse the convergence of \RROE\ (Lemma \ref{lemmaConverge}).
\paragraph{Experimental Contributions}
 Through extensive experiments on four datasets (Adult, Bank, Diabetes, and Census II), we show that the proposed algorithm outperforms all the existing algorithms on fairness and objective costs. Perhaps the most important insight from our experiments is that the performance of our proposed algorithms does not deteriorate with increasing $k$, experimentally validating our conjecture. We compare our algorithms with SOTA algorithms for their fairness guarantee, objective cost, and runtime analysis.  We also note that our algorithms do not require hyper-parameter tuning,  making our method easy to train and scalable. 
 While our algorithms are applicable to  center based clustering approach, we demonstrate its efficacy using $k$-means and $k$-median.

\section{Related Work}
\label{sec:relatedWork}
 While there is abundant literature on  fairness in supervised learning (\cite{fairClassifier4};
\cite{fairClassfier5}; 
\cite{fairClassfier6};
\cite{fairClassifier}; \cite{fairClassifier1};  \cite{fairClassifier2}; 
\cite{fairClassifier3};), research on fair clustering is still in infancy and is rapidly gathering attention (\cite{chierichetti2018fair}; \cite{FairDataSumma}; \cite{ziko2019variational};  \cite{liu2021stochastic}; \cite{davidson2020making};\cite{bercea2018cost}; \cite{SurveyClustering}). These studies include extending the existing fairness notions such as group and individual fairness to clustering (\cite{bera2019fair}; \cite{ANotionOfIndiFair4clust}; \cite{ProportionallyFairClustering}), proposing new problem-specific fairness notions such as social fairness (\cite{FairEquitableRepres}; \cite{approximationAlgoSociallyFair}),  characterizing the fairness and efficiency trade-off  (\cite{ziko2019variational}; \cite{FairnessMultipleAttri} ) and developing and analyzing fair and efficient algorithms  (\cite{CoresetApplication}; \cite{FairCoresetsStreaminKmeans}). 

The fairness in clustering is introduced at different stages of implementation namely -- pre-processing, in-processing and post-processing. 

\noindent \textbf{Pre-processing:}
Following a disparate impact doctrine (\cite{disparateImpact}), \citet{chierichetti2018fair}, in their pioneering work, defines fairness in clustering through a \balance\ property. \balance\ is the ratio of data points with different protected attribute values in a cluster. A balanced clustering ensures \balance\  in all the clusters equal to the  \balance\ in the original dataset (see Definition \ref{eqnBalanceRelated}).  \citet{chierichetti2018fair} achieve balanced clustering through the partitioning of the data into balanced sets called fairlets, followed by merging of the partitions. Subsequently,  \cite{backurs2019scalable} proposes an efficient algorithm to compute the fairlets.  Both the approaches have two major drawbacks: they are limited to the datasets having only binary-valued protected attributes, and can only create clusters exhibiting the exact \balance\ present in the original dataset, thereby not being flexible in achieving an optimal trade-off between \balance\ and accuracy. \cite{FairCoresetsStreaminKmeans} extend the notion of coresets to fair clustering and provide an efficient and scalable algorithm using \emph{composable} fair coresets (see also  \cite{ CoresetsFairnessHuang, CoresetLowDimensionSpace,CoresetApplication,DimenReducSumOfMetric}).  A coreset is a set of points approximating the optimal clustering objective value for any $k$ cluster centers. Though the coreset construction can be performed in a single pass over the data as opposed to the fairlets construction, storing coresets takes exponential space in terms of the dimension of the dataset. \citet{CoresetApplication} though reduces this exponential size requirement to linear in terms of space; the algorithm still has the running complexity that is exponential in the number of clusters. Our proposed approach is efficient because we do not need any additional space. Simultaneously, the running complexity is linear in the number of clusters and near-linear in the number of data points.   

\noindent \textbf{In-processing: } \cite{multiplecolor} propose an ($\alpha$+2)-approximate algorithm for fair clustering using minimum cost-perfect matching algorithm. While the approach works with a multi-valued protected attribute,  it has O($n^3$) time complexity and is not scalable. \cite{ziko2019variational} propose a  variational framework for fair clustering.   Apart from being applicable on datasets with multi-valued protected attributes, the approach works for both prototype-based ($k$-mean/$k$-median) and graph-based clustering problems ($N$-cut or Ratio-cut).  However, the sensitivity of the hyper-parameter to various datasets and the number of clusters necessitates extensive tuning rendering the approach computationally expensive. Further, the clustering objective also deteriorates significantly under strict fairness constraints when dealing with many clusters (refer Section \ref{sec:varyExpCluster}).  Along the same lines, \cite{FairnessMultipleAttri}  devise an optimization-based approach for fair clustering with multiple multi-valued protected attributes with a trade-off hyper-parameter similar to \cite{ziko2019variational}. 

\noindent \textbf{Post-processing: }\cite{bera2019fair} converted fair clustering into a fair assignment problem and formulated a linear programming (LP) based solution. The LP-based formulation leads to a higher execution time (refer to Section \ref{sec:runtimeAnalysis}). Also, the approach fails to converge when dealing with a large number of clusters. 
The proposed approach takes a similar route as  \cite{bera2019fair} to convert the fair clustering problem into a fair allocation problem. However, we give a simple polynomial-time algorithm which, in  $O(nk\log n)$ additional computations,  guarantees   a more general notion of fairness which we call \tfair\ fairness.  Our allocation algorithms  have following main advantages over the current state of the art;
\begin{enumerate}
    \item they are  computationally efficient,
    \item they  work for multi-valued protected attributes,
    \item no hyperparameter tuning is required and,
    \item they are simple and more interpretable (refer Section \ref{sec:prelimm}).
\end{enumerate}

 The work by \cite{bera2019fair} is extended by \cite{KFCScalable} for $k$-center problem whereas we in present study consider $k$-means and $k$-median based centering techniques. Similarly the works by (\cite{ahmadian2019clustering,FairKcenterMaxMatching,bandyapadhyay2019constant,jia2020fair,anegg2020technique,chakrabarti2022new,brubach2020pairwise}) are applicable only for $k$-center clustering.
 While we focus on the fairness notion of \balance\ based on the protected attribute value, other perspectives on fairness are defined in the literature. \cite{ANotionOfIndiFair4clust} define individual fairness: every data point on average is closer to the points in its cluster than to the points in any other cluster, while \cite{ProportionallyFairClustering,individualFairnessKclustering,vakilian2022improved,negahbani2021better} uses a radii-based approach to characterize fairness. \cite{SociallYFairKmeansClustering}; \cite{FairEquitableRepres}; \cite{RepresentativityFairnessClustering}; \cite{approximationAlgoSociallyFair}; \cite{goyal2021tight} study social fairness inspired by  equitable representation. This body of work mainly seeks to equalize the objective cost across all groups.  The notion of  proportionally fair clustering is proposed by  (\cite{proportionallyFairCluste,PropFairClusRevis}) wherein subset of points are allowed to form their own clusters if a center exists that is close to all points in subset. While existing works tightly integrate achieving fairness with the clustering algorithms, \cite{FairAntidoteData} recently devised the idea to use a pre-processing technique by addition of a small number of extra data points called antidotes. Vanilla clustering techniques applied to this augmented dataset result in fair clusters with respect to the original data. The pre-processing technique to add antidotes requires solving a bi-level optimization problem. While the pre-processing routine makes fair clustering algorithms irrelevant, its high running time limits its usability.  

Another line of related works studying fairness in clustering revolves around hierarchical clustering, spectral clustering algorithms for graphs, and hypergraph clustering  (\cite{bose2019compositional};\cite{kleindessner2019guarantees}).  \cite{FairKcenterMaxMatching} define fairness on the cluster centers, wherein each center comes from a demographic group. Clustering has also been used for solving fair facility location problems (\cite{serviceInNeighFacilityLocation, PropFairClusRevis, ProportionallyFairClustering}).  Recently,  \cite{ApproxGroupFairnessClustering} propose a new fairness notion of core fairness that is motivated by both group and individual fairness (\cite{kar2021feature}). \cite{FairAlgoAllocProblem} use fair clustering for resource allocation problems. \cite{FairDataSumma} use fair clustering for data summarization. Fair clustering is also being studied in dynamic (\cite{FullyDynamicKcenter}), capacitated (\cite{fairCapaciClus}), bounded cost (\cite{FairBoundedCost}), budgeted (\cite{byrka2014improved}), privacy preserving (\cite{rosner2018privacy}), probabilistic (\cite{esmaeili2020probabilistic}), correlated (\cite{CorrelatedClustering}), diversity aware (\cite{thejaswi2021diversity}) and distributed environments (\cite{DistribCluster}).  Finally, our fairness notion (\tfair), resembles to that of  balanced (in terms of number of points in each cluster) clustering studied by  \cite{BalanceScalableAlgoACMend} without fairness constraint.  However, their proposed sampling technique  is not designed to guarantee \tfair\  fairness and does not analyze loss incurred due to having these fairness constraint.

\section{Preliminaries} 
\label{sec:prelimm}
Let $X \subseteq \mathbb{R}^{d}$ be a finite  set of points that needs to be partitioned into $k$ clusters. Each data point $x_i \in X$ is a feature  vector described using $d$ real valued features.  A $k$-clustering \footnote{Throughout the paper, for simplicity, we call a $k$-clustering algorithm as a clustering algorithm.} algorithm $\mathcal{C} = (C, \phi)$ produces a partition of $X$ into $k$ subsets ($[k]$) with centers $C = \{c_j\}_{j=1}^{k}$ using an assignment function $\phi:X \rightarrow C$ which maps each point to corresponding cluster center.  Throughout this paper we consider that each point $x_{i} \in X$ is associated with a \emph{single}  protected attribute $\rho_i$ (say ethinicity from a pool of other available protected attributes) which takes values from the set $m$ values denoted by $[m]$. The number of distinct protected attribute values is finite and much smaller than size of set $X$ \ \footnote{Otherwise,  the problem is uninteresting as the balanced clustering may not be feasible.}.   Furthermore, let $d: X \times X  \rightarrow \mathbb{R}_{+}$ be a distance metric defined on $X$ and  measures the dissimilarity between features.  Additionally, we are also given a vector $\tau = \{\tau_{\ell}\}_{\ell=1}^m$ where each component $\tau_{\ell}$ satisfies $0 \le \tau_{\ell} \le \frac{1}{k}$ and denotes the fraction of data points from the protected attribute value $\ell \in [m]$ required to be present in each cluster. An end-user can  simply specify a $\ell$ dimensional vector with values between $0$ to $1/k$ as fairness target.  Also, let us denote $X_{\ell}$, $n_{\ell}$ as set of datapoints and number of points having value $\ell$ in $X$. Let $\mathbb{I}{(.)}$ denote the indicator function.  A  vanilla (an unconstrained) clustering algorithm determines the cluster centers as to minimize the clustering objective cost which is defined as follows:

\begin{definition}[Objective Cost] Given $p$, the cluster objective cost with respect to the metric space $(X,d)$ is defined as:
\begin{equation}
L_p(X, C, \phi) = \left(\sum_{x_i \in  X }\sum_{j \in [k] } \mathbb{I}(\phi(x_i) =j)d(x_i,c_j)^p\right)^\frac{1}{p}
\label{eqn:objCostPrelim}
\end{equation}
\end{definition}
Different values of $p$, will result in different objective cost: $p=1$ for $k$-medians, $p=2$ for $k$-means, and $p =\infty$ for $k$-centers. Our aim is to develop an algorithm that minimizes the  objective cost irrespective of $p$ while ensuring the fairness.

\noindent \textbf{Group Fairness Notions: }We begin with first defining the most popular notion of group fairness which is called \balance. 
The notion is first put forward for binary protected groups by \cite{chierichetti2018fair} and extended to multi-valued group by \cite{bera2019fair,ziko2019variational}. The balanced fairness notion is defined as follows.


\begin{definition}[\balance][\cite{chierichetti2018fair}]
The \balance\ of an assignment function $\phi$ is defined as
\begin{equation}
\begin{split}
\balance(\phi) = \min_{j \in [k]}\left( 
 \min\left( \frac{\sum_{x_i \in X}\mathbb{I}{(\phi(x_i)=j)} \mathbb{I}{(\rho_i=a)}}{\sum_{x_i \in X}\mathbb{I}{(\phi(x_i)=j)} \mathbb{I}{(\rho_i=b)}} \right)
\right)\ \forall a,b \in [m]
\end{split}
\label{eqnBalanceRelated}
\end{equation}
\end{definition}

\balance\ is computed by finding the minimum possible ratio of protected (say. male) and non-protected group (say. female) over all clusters. 
Any fair clustering algorithm using \balance\ as a measure of fairness would produce clusters that maximize the \balance.  Note that the maximum \balance\ achieved by an algorithm is equal to the ratio of points available in the dataset having $a$ and $b$ as the protected attribute values and is known as dataset balance. Further, the clusters maximizing the \balance\ are not unique.

A generalization of \balance\ to multi-valued protected attributes is proposed by \cite{bera2019fair} in terms of cluster sizes. The fairness notion constraints the upper and lower bound on the number of points from each protected group in every cluster.

\begin{definition}[Minority Protection] A clustering $\mathcal{C}$ is $\tau$\textsc{-MP} if 
\begin{equation}
\label{tauMP}
    \sum_{x_i \in X } \mathbb{I}{(\phi(x_i)=j)}\mathbb{I}{( \rho_i = \ell )} \ge \tau_\ell\  \sum_{x_i \in X } \mathbb{I}{(\phi(x_i)=j)} 
   \ \forall \ell \in [m], \forall j \in [k]
\end{equation}
\end{definition}

\begin{definition}[Restricted Dominance] A clustering $\mathcal{C}$ is $\tau$\textsc{-RD} if
\begin{equation}
\label{tauRD}
   \sum_{x_i \in X} \mathbb{I}{( \rho_i = \ell )} \mathbb{I}{(\phi(x_i)=j)} \le \tau_\ell\  \sum_{x_i \in X } \mathbb{I}{(\phi(x_i)=j)}
   \ \forall \ell \in [m], \forall j \in [k] 
\end{equation}
\end{definition}

The generalization by \cite{bera2019fair} needs cluster sizes that are not known beforehand. Thus, \cite{bera2019fair} proposes a linear programming-based solution.

We now define our proposed \tfair\ fairness notion which ensures that each cluster has a predefined fraction of points for each protected attribute value. \tfair\ requires only priorly known dataset composition, which helps achieve polynomial-time algorithms.

\begin{definition}[\tfair\ Fairness]
An assignment function $\phi$ satisfies \tfair\ fairness if
\begin{equation}
\sum_{x_i \in X}\mathbb{I}{( \phi(x_i) = j)}\mathbb{I}{(\rho_i=\ell)} \ge \tau_{\ell}\sum_{x_i \in X} \mathbb{I}{(\rho_i=\ell)}\ \forall j \in [k] \text{ and} \  \forall \ell\ \in [m]
\end{equation}
\end{definition}

The \tfair\ fairness is different from the balanced fairness of \cite{chierichetti2018fair} that tries to \balance\ the ratio of points for any pair of values corresponding to the protected attribute in each cluster. 

 Our first theorem (Theorem \ref{thm:balance}) in Section \ref{sec:TheorySection} shows that an algorithm satisfying \tfair\ fairness notion produces one set of clusters that maximizes the \balance. In particular, when $\tau_{\ell} = \frac{1}{k}$, then \tfair\ fairness achieve the \balance\ equal to the dataset ratio. We also show that a perfectly balanced cluster need not imply \tfair\ fairness for arbitrary $\tau$ (Lemma \ref{lemmaObervation} in Section \ref{sec:TheorySection}). Hence \tfair\ is a more generalized fairness notion.
 
 We now define the fair clustering problem with respect to the proposed fairness notion:

\begin{definition}[\tfair\ Fair Clustering Problem]
The objective of a \tfair\ fair clustering problem $\mathcal{I}$ is to estimate $\mathcal{C} = (C, \phi)$ that minimizes the objective cost $L_p(X, C, \phi)$ subject to the \tfair\ fairness guarantee. The optimal  objective cost of a \tfair\ fair clustering problem is denoted by $\mathcal{OPT}_{clust}(\mathcal{I})$.
\end{definition}

A solution to this problem is to rearrange the points (learn a new $\phi$) with respect to the cluster centers obtained after a traditional clustering algorithm to guarantee \tfair\ fairness. The problem of rearrangement of points with respect to the fixed centers is known as the fair assignment problem, which we define below:
\begin{definition}[\tfair\ Fair Assignment Problem]
Given $X$ and $C = \{c_j\}_{j=1}^{k}$, the solution to the fair assignment problem $\mathcal{T}$ produces an assignment $\phi: X \rightarrow C$ that ensures \tfair\ fairness and   minimizes $L_p(X, C, \phi)$. The optimal objective function value to a \tfair\ fair assignment problem is denoted by $\mathcal{OPT}_{assign}(\mathcal{T})$. 
\label{defTaurRatioProblem}
\end{definition}

However, this transformation of the fair clustering problem $\mathcal{I}$ into a fair assignment problem $\mathcal{T}$ should ensure that $\mathcal{OPT}_{assign}(\mathcal{T})$ is not too far from $\mathcal{OPT}_{clust}(\mathcal{I})$. The connection between fair clustering and fair assignment problem is established through the following lemma.  
\begin{lemma}
\label{lemmabera}
Let $\mathcal{I}$ be an instance to fair clustering problem and $\mathcal{T}$ is an instance to \tfair\ fair assignment problem after applying $\alpha$-approximate algorithm to the vanilla clustering problem, then $\mathcal{OPT}_{assign}(\mathcal{T}) \le (\alpha + 2)\mathcal{OPT}_{clust}(\mathcal{I})$.
\label{bera_stmt}
\end{lemma}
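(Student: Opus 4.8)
The plan is to adapt the classical reduction argument (in the spirit of \cite{bera2019fair}) to the one-sided \tfair\ notion: I would exhibit one explicit \tfair\ assignment to the approximate centers whose cost is within a factor $(\alpha+2)$ of $\mathcal{OPT}_{clust}(\mathcal{I})$, and then conclude since $\mathcal{OPT}_{assign}(\mathcal{T})$ is by definition the minimum cost over all such assignments. First I would fix notation: let $(C^\star,\phi^\star)$ be an optimal solution to the fair clustering instance $\mathcal{I}$, so $L_p(X,C^\star,\phi^\star)=\mathcal{OPT}_{clust}(\mathcal{I})$ and $\phi^\star$ is \tfair; let $C=\{c_j\}$ be the centers returned by the $\alpha$-approximate vanilla algorithm and let $\psi$ denote the nearest-center assignment to $C$. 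Two immediate observations drive the argument: since any fair clustering is also a feasible unconstrained clustering, the optimal vanilla cost is at most $\mathcal{OPT}_{clust}(\mathcal{I})$, whence $L_p(X,C,\psi)\le \alpha\,\mathcal{OPT}_{clust}(\mathcal{I})$; and it suffices to construct a single \tfair\ assignment $\phi$ to the fixed centers $C$ with $L_p(X,C,\phi)\le (\alpha+2)\,\mathcal{OPT}_{clust}(\mathcal{I})$.

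For the construction I would map each optimal center to its nearest approximate center, $\pi(j)=\argmin_{c\in C} d(c^\star_j,c)$, and set $\phi(x_i)=\pi(\phi^\star(x_i))$. The cost estimate is then a chain of triangle inequalities applied pointwise: $d(x_i,\phi(x_i)) \le d(x_i,\phi^\star(x_i)) + d(\phi^\star(x_i),\phi(x_i))$, and because $\phi(x_i)$ is the center of $C$ closest to $\phi^\star(x_i)$ we may upper bound the second term by $d(\phi^\star(x_i),\psi(x_i)) \le d(\phi^\star(x_i),x_i) + d(x_i,\psi(x_i))$. Combining gives $d(x_i,\phi(x_i)) \le 2\,d(x_i,\phi^\star(x_i)) + d(x_i,\psi(x_i))$. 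Treating the per-point distances as vectors and invoking Minkowski's inequality for the $\ell_p$ objective yields $L_p(X,C,\phi)\le 2\,\mathcal{OPT}_{clust}(\mathcal{I}) + L_p(X,C,\psi) \le (\alpha+2)\,\mathcal{OPT}_{clust}(\mathcal{I})$, uniformly in $p$.

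It remains to verify that $\phi$ is \tfair, and this is where I expect the main obstacle. Each nonempty cluster under $\phi$ is a union of optimal clusters $\{x_i:\phi^\star(x_i)=j\}$ over indices $j$ with $\pi(j)=c$; since every optimal cluster already contains at least $\tau_\ell\, n_\ell$ points of each type $\ell$, so does any union, so the lower-bound constraint is preserved under merging. The delicate point is that $\pi$ need not be surjective, so some center of $C$ may receive no points, which violates \tfair\ whenever some $\tau_\ell>0$. Here the one-sided nature of \tfair\ is what I would exploit: surplus points can be shifted into a deficient cluster without breaking any lower bound, and such repairs can be absorbed into the already-established cost estimate. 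I would therefore regard confirming feasibility of the final assignment, namely retaining all $k$ clusters nonempty and \tfair\ while preserving the cost bound, as the crux of the argument, with the triangle-inequality cost accounting being routine once a feasible assignment is in hand.
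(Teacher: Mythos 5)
Your construction and cost accounting are exactly the paper's: the paper defines $nrst(c^{*})=\argmin_{c\in C}d(c,c^{*})$, sets $\phi'(x)=nrst(\phi^{*}(x))$, proves the quota lower bound by the same union-over-merged-optimal-clusters observation, and outsources the cost bound to Claim 5 of \cite{bera2019fair}; your pointwise chain $d(x,\phi(x))\le 2\,d(x,\phi^{*}(x))+d(x,\psi(x))$ followed by Minkowski is precisely that omitted step, so on the cost side your write-up is actually more self-contained than the paper's.

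The genuine issue is the feasibility step you defer as the ``crux,'' and it is not routine: your proposed repair (shift surplus points into an empty cluster and absorb the cost) cannot work in general, because under the paper's \emph{absolute} per-cluster quota (each cluster must contain at least $\tau_{\ell}n_{\ell}$ points of type $\ell$) the inequality itself can fail when $nrst$ collapses optimal centers. Concretely: take two sites $A,B$ at distance $D$, with $n-2$ points at $A$ split evenly between two attribute values and one point of each value at $B$; let $k=2$ and $\tau_{\ell}=1/2$. The optimal \tfair\ clustering places both centers at $A$ at cost $\Theta(D)$ (only the two $B$-points travel), while any $\alpha$-approximate vanilla solver returns centers essentially at $A$ and $B$ (vanilla optimum cost $0$), and then every \tfair\ assignment to these centers must route about $n/2-2$ points across distance $D$, giving $\mathcal{OPT}_{assign}(\mathcal{T})=\Theta(nD)\gg(\alpha+2)\,\mathcal{OPT}_{clust}(\mathcal{I})=\Theta(D)$ for $p=1$ (the same gap appears for any fixed $p$). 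This is exactly the non-surjectivity scenario you flagged: both optimal centers map to the center at $A$, and the center at $B$ is starved. Note that the paper's own proof has the same hole, silently: its displayed bound $\vert\{x\in X_{\ell}:\phi'(x)=c\}\vert=\vert\cup_{c^{*}\in N(c)}n_{\ell}(c^{*})\vert\ge n_{\ell}\tau_{\ell}$ is vacuous when $N(c)=\emptyset$, and the Bera et al. argument it imports is sound only for their \emph{proportional} constraints (Definitions of $\tau$\textsc{-MP}/$\tau$\textsc{-RD}), which an empty cluster satisfies vacuously. So your instinct about where the difficulty sits is correct, but the fix is not local point-shifting within the $(\alpha+2)$ budget; one needs an extra hypothesis (e.g., $nrst$ surjective, i.e., every vanilla center is nearest to some optimal center) or a proportional relaxation of \tfair\ for the lemma as stated to hold.
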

\begin{proof}
Let $C$ the cluster centers obtained by running a vanilla clustering algorithm on instance $\mathcal{I}$. The proof of the Lemma depends on the existence of an assignment $\phi'$ satisfying \tfair\ fairness such that $L_p(X, C, \phi') \le (\alpha + 2) \mathcal{OPT}_{clust}(\mathcal{I})$. As $\mathcal{OPT}_{assign}(\mathcal{T}) \le L_p(X, C, \phi') \le (\alpha + 2) \mathcal{OPT}_{clust}(\mathcal{I})$.  Let $(C^*,\phi^*)$ denote the optimal solution to $\mathcal{I}$. Define $\phi'$ as follows: for every $c^* \in C^*$, let $nrst(c^*) = \argmin_{c \in C} d(c,c^*)$ be the nearest center to $c^*$. Then, for every $x\in X$, define $\phi'(x) = nrst(\phi^*(x))$. Then we have the following two claims:
\begin{claim}
$\phi'$ satisfies \tfair\ fairness.
\end{claim}
\begin{proof}
Let set of points having protected attribute value ${\ell}$ in cluster $c^* \in C^*$ be $n_{\ell}(c^*)$. Since $(C^*, \phi^*)$ satisfy \tfair\ fairness we have $\vert$ $n_{\ell}(c^*)$ $\vert$ $\ge \tau_{\ell}n_{\ell}$ $\forall c^* \in C^*$. For any center $c \in C$, let $N(c) = \{c^* \in C^*: nrst(c^*) = c\}$ be all the centers in $C^*$ for which $c$ is the nearest center.
Then: $\vert \{x\in X_{\ell}: \phi'(x) = c\} \vert = \vert \cup_{c^* \in N(c)}n_{\ell}(c^*) \vert \ge n_{\ell}\tau_{\ell}$  that is union over combined assignments for each center in $N(c)$ and since each set of assignments satisfy \tfair\ so union will also satisfy \tfair\ fairness. 
\end{proof}
\begin{claim}
$L_p(X, C, \phi') \le (\alpha+2)\mathcal{OPT}_{clust} (\mathcal{I})$.
\end{claim}
The proof of this claim uses triangle inequality and is exactly same as claim 5 of \cite{bera2019fair}.
\end{proof}
A similar technique of converting fair clustering to a fair assignment problem was proposed by \cite{bera2019fair}. However, \cite{bera2019fair} proposed a linear programming based solution to obtain the \balance\ fair assignment. Although, the solution is theoretically strong, there are two issues with the algorithm. Firstly, the time complexity is high (as can be seen from the experiments in Section \ref{sec:runtimeAnalysis}) and secondly, the solution obtained is not easy to interpret due to the use of the complicated linear program. By interpretability we try to find the answer to the following question --  Why is a point assigned to a specific cluster to maintain fairness? What criteria did the algorithm decide for a data-point to go to a particular cluster? To answer these, our paper proposes a simple round-robin algorithm for fair assignment problem with a time complexity of $O(kn\log(n))$. 


\section{Fair Round-robin Algorithm for Clustering Over End (\RROE)}
\label{sec:algo}

\begin{algorithm}[h]
\SetAlgoLined
\KwInput{set of datapoints $X$, number of clusters $k$, fairness requirement vector $\tau$, range of protected attribute values $m$, clustering objective norm $p$}
 \KwOutput{cluster centers $\hat{C}$ and assignment function $\hat{\phi}$}
  Solve the vanilla $(k,p)$-clustering problem and let $(C, \phi)$ be the solution obtained.\\
  \If{\tfair\ fairness is met}{
  return $(C, \phi)$\\
  \Else{
  $(\hat{C}, \hat{\phi})$ = \textsc{FairAssignment}($C, X, k, \tau, m, p, \phi$)\\
  return $(\hat{C}, \hat{\phi})$
  }}
 \caption{$\tau$-\RROE\ }
 \label{algo:FRAC-OE}
\end{algorithm}

\begin{algorithm}[h]
\SetAlgoLined
\KwInput{Cluster centers $C$, Set of datapoints $X$, Number of clusters $k$, Fairness requirement vector $\tau$, Range of protected attribute $m$, clustering objective norm $p$, Assignment function $\phi$} 
 \KwOutput{Cluster centers $\hat{C}$ and assignment function $\hat{\phi}$}
Fix a random ordering on centers and let the centers are numbered from $1$ to $k$ with respect to this random ordering.\\
Initialize $\hat\phi(x)\gets 0\ \forall x \in X$.\\
\For{$\ell\gets1$ \KwTo $m$}{
$n_{\ell}\gets$ number of datapoints having value of protected attribute ${\ell}$.\\
$X_{\ell}\gets$ set of datapoints having value of protected attribute ${\ell}$.\\
 \For{$t\gets1$ \KwTo $\tau_{\ell}n_{\ell}$}{
 \For{$j\gets1$ \KwTo $k$}{
 $x_{min}\gets \argmin_{x \in X_{\ell}: \hat{\phi}(x)=0} d(x, c_j)$\\
 $\hat\phi(x_{min}) = j$\\
 }
 }
 For all $x \in X_{\ell}$ such that $\hat\phi(x) = 0$, set $\hat\phi(x) = \phi(x)$
 }
 Recompute the centers $\hat{C}$ with respect to the new allocation function $\hat\phi$.\\
 Return ($\hat{C}, \hat{\phi}$).
 \caption{\textsc{FairAssignment}}
 \label{algo:fair-assignment}
\end{algorithm}

Fair Round-robin Algorithm for Clustering Over End (\RROE) algorithm  first runs a vanilla clustering algorithm to produce the initial clusters $\mathcal{C} = (C,\phi)$  and then \emph{make corrections} as follows.  The algorithm first checks if \tfair\ fairness is met with the current allocation $\phi$, in which case it returns $\hat{\phi} = \phi$ and $\hat{C} = C$. If the assignment $\phi$ violates the \tfair\ fairness constraint then the new assignment function $\hat{\phi}$ is computed according to \textsc{FairAssignment} procedure in Algorithm \ref{algo:fair-assignment}. 

 Algorithm \ref{algo:fair-assignment} iteratively allocates the data points with respect to each protected attribute value. Let $X_{\ell}$ and $n_{\ell}$ denote the set of data points and the number of data points having $\ell$ as the protected attribute value. The algorithm allocates $\lfloor \tau_{\ell} \cdot  n_{\ell} \rfloor $ number of points \footnote{For the sake of simplicity we assume $\tau_{\ell} \cdot n_{\ell} \in \mathbb{N}$ and ignore the floor notation.  } to each cluster in a round-robin fashion as follows. Let $\{c_1, c_2, \ldots, c_k\}$ be a random ordering of the cluster centers. At each round $t$, each center $c_j$ picks the point $x$ of its preferred choice from $X_{\ell}$ i.e.  $\hat{\phi}(x) = j$. Once the $\tau_{\ell}$ fraction of points are assigned to the centers, i.e., after $ \tau_{\ell} \cdot n_{\ell}  $ number of rounds, the allocation of remaining data points is set to its original assignment $\phi$. Note that this algorithm will certainly satisfy \tfair\ fairness as, in the end, the algorithm assures that at least $\tau_{\ell}$ fraction of points are allotted to each cluster for a protected attribute value ${\ell}$. We defer to theoretical results to assert the quality of the clusters.  

\RROE\ ensures fairness at the last step. The run time complexity of Algorithm \ref{algo:fair-assignment} is $O(kn\log(n))$ as step $4$ requires the data points to be sorted in the increasing order of their distances with the cluster centers.
\section{Theoretical Results}
\label{sec:TheorySection}
Our first result provides the relationship between the two notions of fairness, namely \tfair\ fairness and the \balance\ fairness.

\begin{theorem}
\label{thm:balance}
 Let $a$ and $b$ be two values  of a given binary protected attribute with $n_a$ and $n_b$ being the total number of datapoints respectively. Suppose an allocation returned by a clustering algorithm satisfies \tfair\ guarantee,  then  the \balance\ of the given allocation is atleast $\frac{\tau_a n_a}{n_b(1 - k\tau_b + \tau_{b})}$.  
\end{theorem}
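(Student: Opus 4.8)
The plan is to fix an arbitrary cluster $j$ and bound its per-group count ratio from below, then observe that $\balance$ is by definition the minimum of this ratio over all clusters, so a uniform lower bound valid for every $j$ immediately yields the stated bound. To set up notation, let $a_j = \sum_{x_i \in X}\mathbb{I}(\phi(x_i)=j)\mathbb{I}(\rho_i=a)$ and $b_j = \sum_{x_i \in X}\mathbb{I}(\phi(x_i)=j)\mathbb{I}(\rho_i=b)$ be the number of points of type $a$ and $b$ assigned to cluster $j$. Then $\balance(\phi) = \min_{j\in[k]} a_j/b_j$ for the binary attribute, and it suffices to lower-bound each $a_j/b_j$.

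First I would record the two consequences of the \tfair\ guarantee specialized to this binary attribute: for every cluster $j$ we have $a_j \ge \tau_a n_a$ and $b_j \ge \tau_b n_b$. The numerator is then bounded directly as $a_j \ge \tau_a n_a$, so the only remaining work is to upper-bound the denominator $b_j$.

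The key step is to upper-bound $b_j$ using the conservation identity $\sum_{j'=1}^{k} b_{j'} = n_b$ together with the fairness floors on the other clusters. Since each of the remaining $k-1$ clusters must receive at least $\tau_b n_b$ points of type $b$, we obtain $b_j = n_b - \sum_{j'\neq j} b_{j'} \le n_b - (k-1)\tau_b n_b = n_b(1 - k\tau_b + \tau_b)$. Combining the numerator bound with this denominator bound gives $a_j/b_j \ge \tau_a n_a / \bigl(n_b(1 - k\tau_b + \tau_b)\bigr)$ for every $j$, and taking the minimum over $j$ finishes the proof.

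I do not expect a serious obstacle here; the only conceptual point is recognizing that the worst case for the ratio occurs when one cluster absorbs as many $b$-points as possible while all the others sit exactly at their fairness floor, which is precisely what the conservation-plus-lower-bound argument extracts. For completeness one should also note that the denominator is strictly positive: because $\tau_b \le 1/k$ we have $k\tau_b \le 1$, hence $1 - k\tau_b + \tau_b \ge \tau_b \ge 0$, so the bound is well-defined whenever $n_b > 0$.
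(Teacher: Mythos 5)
Your proof is correct and follows essentially the same route as the paper: the numerator is bounded below by the \tfair\ floor $\tau_a n_a$, and the denominator is bounded above by $n_b(1-k\tau_b+\tau_b)$ by observing that the other $k-1$ clusters must each retain at least $\tau_b n_b$ points of type $b$, which is exactly the paper's ``leftover points'' argument phrased as a conservation identity. Your added remark that $\tau_b \le 1/k$ keeps the denominator positive is a small completeness point the paper leaves implicit.
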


\begin{proof}
Suppose an algorithm satisfies \tfair\ fairness then for any cluster $C_j$ and protected attribute value $a$, we have:
\begin{align*}
\tau_an_a \le \sum_{x_i \in X}\mathbb{I}{(\phi(x_i) = j)}\mathbb{I}(\rho_i = a) \le n_a(1-k\tau_a+\tau_a) 
\end{align*}
Here, the lower bound comes directly from the fairness definition and upper bound is derived from the fact that all the clusters together will be allocated at least $k\tau_an_a$ number of points. The extra points that a particular cluster can take is upper bounded by $n_a - kn_a\tau_a$. Thus, the \balance\ of the cluster with respect to the two values $a$ and $b$ should follow 
\begin{align*}
    \frac{\sum_{x_i \in X}\mathbb{I}{(\phi(x_i) = j)}\mathbb{I}(\rho_i = a)}{\sum_{x_i \in X}\mathbb{I}{(\phi(x_i) = j)}\mathbb{I}(\rho_i = b)} \ge \frac{\tau_an_a}{n_b(1-k\tau_b+\tau_b)}
\end{align*}
\end{proof}
 We remark here that the notion of \balance\  which is concerned with allocation of the points to clusters such that each cluster satisfies the  dataset balance.  We now show that  the  \tfair\ guarantee strictly  generalizes \balance\ as follows. We first show  that setting $\tau_{i} = 1/k$ for all attributes values $i$ implies dataset balance. \begin{corollary}
For $\tau_a = \tau_b = \frac{1}{k}$, \tfair\ fairness guarantee ensures the dataset \balance\ for all the clusters.
\end{corollary}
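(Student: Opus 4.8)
The plan is to combine the lower bound from Theorem~\ref{thm:balance} with a short double-counting argument that pins down the number of points of each protected value in every cluster exactly.

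First I would substitute $\tau_a = \tau_b = 1/k$ directly into the bound of Theorem~\ref{thm:balance}. The denominator $n_b(1 - k\tau_b + \tau_b)$ collapses to $n_b(1 - 1 + 1/k) = n_b/k$, while the numerator $\tau_a n_a$ becomes $n_a/k$, so the guaranteed \balance\ is at least $\frac{n_a/k}{n_b/k} = \frac{n_a}{n_b}$. This is precisely the dataset ratio for the pair $(a,b)$, so any \tfair\ allocation with this choice of $\tau$ already attains \balance\ at least the dataset \balance.

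Next I would show this lower bound is in fact tight, so that the \balance\ equals (not merely exceeds) the dataset ratio. The key is a saturation step: with $\tau_\ell = 1/k$, the \tfair\ constraint forces each of the $k$ clusters to contain at least $\frac{1}{k}n_\ell$ points of protected value $\ell$. Summing these $k$ inequalities gives a total of at least $n_\ell$ points of value $\ell$ across all clusters; but the total number of such points is exactly $n_\ell$. Hence every inequality must be tight, and each cluster contains exactly $\frac{n_\ell}{k}$ points of value $\ell$. Applying this to both $a$ and $b$ simultaneously, the ratio in every cluster is exactly $\frac{n_a/k}{n_b/k} = \frac{n_a}{n_b}$; taking the minimum over clusters and over ordered pairs in the definition of \balance\ then returns exactly the dataset \balance.

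I do not expect any substantive obstacle here, as the heart of the argument is the pigeonhole/saturation observation above. The only point needing a little care is the bookkeeping in the definition of \balance\ as a minimum of a ratio over all ordered pairs of attribute values: one must verify both the $a/b$ and $b/a$ directions, but since each cluster realizes the exact dataset proportions for every value at once, every such ratio equals the corresponding dataset ratio and the minimum is clean. Implicitly I am using the paper's standing simplifying assumption that $\tau_\ell n_\ell \in \mathbb{N}$, so that ``exactly $n_\ell/k$'' is integral; without it one carries floors through the same argument and recovers the dataset \balance\ up to the induced rounding.
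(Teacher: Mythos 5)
Your proof is correct, and its first half is exactly the paper's argument: the paper disposes of this corollary in one line by substituting $\tau_a=\tau_b=\tfrac{1}{k}$ into the bound of Theorem~\ref{thm:balance}, just as you do, obtaining \balance\ at least $\frac{n_a/k}{n_b/k}=\frac{n_a}{n_b}$ (and symmetrically $\frac{n_b}{n_a}$), hence at least the dataset \balance. Where you go beyond the paper is the saturation step: the paper's substitution alone only yields a lower bound, and the claim that the dataset \balance\ is actually \emph{attained} implicitly leans on the earlier remark that no clustering can exceed the dataset \balance. Your pigeonhole argument --- summing the $k$ constraints $\ge n_\ell/k$ against the total $n_\ell$ to force equality in every cluster --- replaces that implicit appeal with a self-contained proof of the stronger structural fact that each cluster contains exactly $n_\ell/k$ points of every protected value, so every pairwise ratio in every cluster equals the dataset ratio exactly. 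This is a modest but genuine improvement in completeness; your closing caveats (checking both ordered directions of the ratio, and the standing integrality assumption $\tau_\ell n_\ell \in \mathbb{N}$, with floors otherwise) are exactly the right bookkeeping and consistent with the paper's conventions.
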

This result follows from trivially by replacing the attribute constraints in Theorem \ref{thm:balance}. We now show that the converse is not true. That is, a  clustering satisfying \balance\ equal to dataset balance can result in arbitrary bad \tfair\ fairness.   

\begin{restatable}{lemma}{de}
\label{lemmaObervation}
There exists a fair  clustering instance and an allocation of points such that the allocation satisfies the \balance\ property and has arbitrarily low  \tfair\ fairness. 
\end{restatable}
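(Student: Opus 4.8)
The plan is to exhibit one explicit instance together with an allocation that decouples the two notions. The conceptual point driving the construction is that \balance\ constrains only the \emph{ratio} between protected groups inside each cluster, whereas \tfair\ fairness constrains the \emph{absolute fraction} of each group that must be placed in every cluster. A ratio is invariant under scaling the total size of a cluster, so I can keep every cluster perfectly balanced while shrinking one cluster to be arbitrarily small, which is precisely what annihilates the \tfair\ guarantee.

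Concretely, first I would take a binary protected attribute with values $a$ and $b$ and set $n_a = n_b = N$ for a large integer $N$, so that the dataset balance equals $n_a/n_b = 1$. I would fix $k = 2$ and specify the allocation $\phi$ directly; the coordinates of the points are irrelevant here, since both \balance\ and \tfair\ fairness depend only on $\phi$ and the protected labels, not on the metric $d$. Put $N-1$ points of type $a$ and $N-1$ points of type $b$ into cluster $1$, and the single remaining point of each type into cluster $2$.

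Next I would check the two sides. For \balance: each cluster holds equal counts of the two types, so the internal ratio is $1$ in both clusters and $\balance(\phi) = 1$, which coincides with the dataset balance; hence the allocation attains the maximum possible balance. For \tfair\ fairness I would use the natural target $\tau_a = \tau_b = 1/k = 1/2$ (the setting that, by the Corollary, corresponds to dataset balance): the definition demands at least $\tau_a n_a = N/2$ type-$a$ points in every cluster, yet cluster $2$ contains only one such point. The largest uniform level $\tau$ for which the allocation actually satisfies \tfair\ fairness is therefore $\min\{(N-1)/N,\,1/N\} = 1/N$, which tends to $0$ as $N \to \infty$.

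There is no genuine obstacle in the argument; the only thing to get right is the separation between a scale-invariant ratio constraint and a scale-sensitive fraction constraint. The main step is simply to notice that shrinking one cluster while preserving its internal composition leaves \balance\ untouched but drives the attainable \tfair\ level down to $1/N$, and then to let $N \to \infty$ to conclude that an allocation can satisfy the \balance\ property exactly while having arbitrarily low \tfair\ fairness.
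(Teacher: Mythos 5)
Your proposal is correct and matches the paper's own proof essentially verbatim: both take $k=2$ with a binary attribute, equal group sizes, assign one point of each type to a tiny cluster and the rest to the other, observe that \balance\ equals the dataset balance of $1$, and note that the attainable uniform $\tau$ is on the order of $1/N$, which vanishes as $N \to \infty$. Your added remark contrasting the scale-invariant ratio with the scale-sensitive fraction is a nice framing but does not change the argument.
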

\begin{proof}
 Consider a fair clustering instance with $k=2$ and let the protected attribute be binary; call them $a$ and $b$. Further, let $n_a = n_b = n/2$. It is easy to see that  the dataset balance is $1$. Consider the following allocation that satisfies the dataset balance for each cluster. Cluster $1$ is assigned two points, one belonging to each attribute value and rest of the points are allocated to cluster $2$. Note that for this allocation, $\tau_a = \tau_b = 1/n_a = 1/n_b = 2/n$. For large value of $n$ this value can be made arbitrarily small.     
\end{proof}



Along with Theorem \ref{thm:balance}, Lemma \ref{lemmaObervation} shows that  \tfair\  is a more general fairness  notion than \balance. Apart from above technical difference, these fairness notions differ  conceptually in the way they  induce fair clustering. The \balance\ property requires a certain minimum representation ratio guarantee to hold in each cluster without any additional constraint on relative size of each of the cluster. This may lead  to (potentially) skewed cluster sizes. Whereas under \tfair\ the algorithm can appropriately control the minimum number of points to be assigned to each cluster. 

We now provide the theoretical guarantees of \RROE\ with respect to \tfair\ fairness. We begin by providing  guarantees for a perfectly balanced clusters i.e. $\tau_\ell = 1/k\ \forall  \ell \in [m]$.
\subsection{Guarantees for \RROE\ for $\tau$=$\{1/k\}_{l=1}^m$ }
\label{SectionFRACoeGuran}
\begin{theorem}
\label{thm:main}
Let  $k=2$ and $\tau_{\ell} = \frac{1}{k}$  for all ${\ell} \in [m]$. An allocation returned by \RROE\   guarantees \tfair\ fairness and satisfies  $2$-approximation guarantee with respect to an optimal fair assignment upto an instance-dependent  additive constant. \end{theorem}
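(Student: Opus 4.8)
The plan is to prove the two assertions separately. The \tfair\ guarantee is immediate from the construction of \textsc{FairAssignment} (Algorithm \ref{algo:fair-assignment}): for each value $\ell$ the inner loop runs for $\tau_\ell n_\ell = n_\ell/2$ rounds, and in every round each of the $k=2$ centers is handed one fresh point of value $\ell$, so each cluster ends with at least $\tau_\ell n_\ell$ points of every value $\ell$; this holds for every run and every random ordering. The substantive part is the cost bound, and here I would first reduce to a per-color subproblem. Since $\tau_\ell = 1/k = 1/2$ for all $\ell$ and the centers $C=\{c_1,c_2\}$ are fixed, both the objective $L_p^p$ and the \tfair\ constraint decompose additively over the protected values: a feasible assignment is optimal iff, independently for each $\ell$, it splits the $n_\ell$ points of value $\ell$ as evenly as possible between $c_1$ and $c_2$. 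Writing $\mathcal{OPT}_{assign}^p=\sum_\ell O_\ell$ and $L_p(X,\hat C,\hat\phi)^p=\sum_\ell R_\ell$ for the $p$-th-power costs of the optimal even split and of the round-robin split of value $\ell$, it suffices to show $R_\ell \le 2\,O_\ell + C_\ell$ for each $\ell$, then sum and take $p$-th roots, which produces a multiplicative factor $2^{1/p}\le 2$ and an additive term $(\sum_\ell C_\ell)^{1/p}$.

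For a single value (drop $\ell$, set $n:=n_\ell$, $m:=\lfloor n/2\rfloor$), let $a_1,\dots,a_m$ and $b_1,\dots,b_m$ be the successive picks of $c_1$ and $c_2$. I would record two structural facts. First, because each pick is the nearest remaining point to its center and the globally nearest point is always removed first, the pick costs $d(a_t,c_1)$ and $d(b_t,c_2)$ are nondecreasing in $t$. Second (a prefix/rank property), when $c_1$ makes its $t$-th pick only the $2(t-1)$ already-removed points can be strictly closer, so $a_t$ lies among the $2t-1$ globally nearest points to $c_1$, and symmetrically for $b_t$. I would then compare the round-robin partition $(A_1,A_2)$ to the optimal even split $(O_1,O_2)$ via their symmetric difference: since $|A_1|=|O_1|=m$, the points that \RROE\ sends to $c_1$ while the optimum sends them to $c_2$ are in bijection with those it sends to $c_2$ while the optimum sends them to $c_1$, so $R_\ell-O_\ell$ is a sum over these ``swap'' pairs.

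The crux, and the step I expect to be hardest, is to amortize these swaps against $2\,O_\ell$. A single pair can have round-robin-to-optimum cost ratio well above $2$ (an explicit two-center instance drives it toward $3$), so a pair-by-pair bound fails and the saving must be aggregated; moreover the naive rank bound $d(a_t,c_1)^p\le$ ($(2t-1)$-th smallest $p$-th-power distance to $c_1$) is by itself too lossy, because when many points sit near $c_1$ the algorithm keeps them at $c_1$ rather than realizing that worst case, so the two centers' picks must be tracked jointly. The leverage I would exploit is that the \tfair\ constraint forces \emph{every} feasible assignment, optimum included, to place exactly $m$ points of value $\ell$ at each center; hence $O_\ell$ is bounded below by the $m$ smallest $p$-th-power distances to $c_1$ plus the $m$ smallest to $c_2$, and it is the interaction of this constraint-induced lower bound with the rank property that I would use to charge the aggregate round-robin cost to $2\,O_\ell$. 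The residual instance-dependent constant $C_\ell$ comes from the boundary of this charging, namely the at-most-one leftover point when $n_\ell$ is odd (which Algorithm \ref{algo:fair-assignment} reverts to its original assignment $\phi$) together with the last rounds for which no unpicked optimum witness remains; each such term is governed by a single point-to-center distance, so it is a fixed quantity of the instance rather than a multiple of $O_\ell$. Finally I would check that this analysis is tight, exhibiting the limiting family in which the ratio approaches $2$, consistent with Proposition \ref{prop:tightness}; combined with Lemma \ref{lemmabera} this yields the $2(\alpha+2)$ end-to-end guarantee claimed in the introduction.
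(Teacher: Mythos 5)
There is a genuine gap, and you have located it yourself: the step you call ``the crux'' --- amortizing the swap pairs against $2\,O_\ell$ --- is never actually carried out. Everything after ``The leverage I would exploit\ldots'' is a statement of intent, not an argument: you assert that the interaction of the \tfair-induced lower bound on $O_\ell$ with the rank property will let you charge the aggregate round-robin cost to $2\,O_\ell$, but no charging scheme is defined, no inequality is derived, and it is not at all clear how the rank bound (which you yourself note is too lossy pointwise) combines with the constraint-induced lower bound to close the factor-$2$ gap globally. Since the multiplicative constant is the entire content of the theorem, the proof is incomplete at its only substantive point.

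Moreover, your premise that ``a pair-by-pair bound fails'' is what sends you down this harder aggregate road, and it is exactly the premise the paper's proof refutes by choosing the right local unit. The paper does not compare a swapped pair $(g_i,h_i)$ in isolation; it classifies rounds as $1$-\bad\ or $2$-\bad, pairs complementary bad points (its Lemma \ref{lem:complimentary} is essentially your bijection between the two swap directions), and then bounds a \emph{bundle of two rounds}: for two $1$-\bad\ rounds it includes the two \good\ points $x,y$ picked in those same rounds in the sub-instance $\mathcal{T}_i$, and for $2$-\bad\ rounds it pairs two consecutive such rounds. Within each bundle, the greedy pick inequalities (e.g.\ $d(h_i,c_2)\le d(g_i,c_2)$ and $d(h_i,c_2)\le d(y,c_2)$, which hold because the bad point was chosen while the others were still available) absorb the triangle-inequality detour $d(c_1,c_2)$, giving $\text{\RROE}(\mathcal{T}_i)\le 2\,\mathcal{OPT}_{assign}(\mathcal{T}_i)$ locally, with a single additive $\beta$ term for the at-most-one unpaired round when the count is odd. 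So the missing idea is precisely to enlarge the comparison unit so that the optimum's cost on the companion points enters the denominator; once that is done, no global amortization, rank property, or constraint-induced lower bound on $O_\ell$ is needed. (Your monotonicity and rank observations are correct but end up unused in any completed argument; also note that both your $L_p^p$ decomposition and the paper's distance-sum computations implicitly argue in $p=1$ style, since the triangle-inequality steps do not transfer verbatim to $p$-th powers.)
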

\begin{proof}

 \textbf{Correctness and Fairness:} Clear from the construction of the algorithm. \newline 
 \textbf{Proof of (approximate) Optimality:} We will prove $2$-approximation with respect to each value $\ell$ of protected attribute separately. Let $n_\ell$ be the number of data points corresponding the value $\ell$. Let  $c_1$ and $c_2$ be the cluster centers and $\mathcal{C}_1$, $\mathcal{C}_2$ be the optimal fair assignment of data points with respect to these centers.\footnote{Note that an optimal fair allocation need not be unique. Our result holds for any optimal fair allocation.}

We now show that \RROE($\mathcal{T}$)\ $\leq 2$ $\mathcal{OPT}_{assign}(\mathcal{T})$ + $\beta$, where \RROE($\mathcal{T}$) and $\mathcal{OPT}_{assign}(\mathcal{T})$ denote the objective value of the solution returned by \RROE\ and optimal assignment algorithm respectively on given instance $\mathcal{T} = (C, X)$. Let, $\beta := 2\sup_{x,y \in X} d(x,y)$ be the diameter of the feature space. We begin with the following useful definition. 
\begin{definition}
Let $\mathcal{C}_1$ and $\mathcal{C}_2$ represent the set of points assigned to $c_1$ and $c_2$ by optimal assignment algorithm. The $i^{th}$ round (i.e. assignments $g_{i}$ to $c_1$ and $h_{i}$ to $c_2$) of  \RROE\  is called \begin{itemize}
    \item $1$-\bad\ if exactly one of 1) $g_{i} \notin \mathcal{C}_1$ and 2) $h_{i} \notin \mathcal{C}_2$ is true, and    
    \item $2$-\bad\ if both  1)  and 2) above are true. 
\end{itemize}
 Furthermore, a round is called \bad\ if it is either $1$-\bad\ or $2$-\bad\ and called \good\ otherwise.
\end{definition}
Let all incorrectly assigned points in a \bad\ round be called \bad\ assignments. We use following convention to distinguish between different \bad\ assignments.  If $g_{i} \notin \mathcal{C}_1$ holds we refer to it as  type 1 \bad\ assignment i.e. if point $g_i$ is currently assigned to $\mathcal{C}_1$ but should belong to  optimal clustering $\mathcal{C}_2$.  Similarly if $h_i \notin \mathcal{C}_2$ holds it is a type 2 \bad\ assignment  i.e. $h_i$ should belong to optimal clustering  $\mathcal{C}_1$ but is currently assigned to $c_2$. Hence a $2$-\bad\ round results in 2 \bad\ assignments one of each i.e.  $g_i \in \mathcal{C}_2$ and $h_i$  $\in \mathcal{C}_1$.   Finally let  $B$ be the set of  all \bad\ rounds.
\begin{definition}{(Complementary Bad Pair) }
\label{defCmpPair}
A pair of points  $w, z \in B$ such that  $w$ is a \bad\ point of type $t$ and  
 $z$ is a \bad\ point of type $ \vert 3-t \vert $ is called a complimentary \bad\ pair if,  
 
 1) $w$ and $z$ are allocated in same round (i.e. a $2$-\bad\ round) or 
 
 2) if they are allocated in $i^{th}$ and $j^{th}$ $1$-\bad\ rounds respectively with $i<j$, then $z$ is the first \bad\ point of type $ \vert 3-t \vert $ which has not been assigned a complementary point. 
 \label{defPointToAssign}
 \end{definition}
\begin{restatable}{lemma}{ComplimentaryBad}
\label{lem:complimentary}
If $n_\ell$ is even, every \bad\ point in the allocation returned by  \RROE\  has a complementary point. If $n_\ell$ is odd, at most one \bad\ point will be left without a complementary point.  
\end{restatable}
\begin{proof}
 Let $B  = B_1 \cup B_2 $, where $B_t$ is a set of $t$-\bad\ rounds. Note that the claim is trivially true if $B_1 = \emptyset$. Hence, let $\vert B_1 \vert > 0$ and write $B_1 = B_{1,1} \cup B_{1,2}$. Here $B_{1,t}$ is a $1$-\bad\ round that resulted in type $t$ bad point. Let $H_{1, t}$ be  the set of \good\ points  of type $t$ (i.e. correctly assigned to the center $c_t$)  allocated in $1$-\bad\ rounds. 
 When $n_\ell$ is even, $\vert \mathcal{C}_1 \vert = \vert \mathcal{C}_2 \vert$ we have $\vert B_{1,2} \vert + \vert  H_{1,1} \vert=\vert B_{1,1} \vert + \vert H_{1,2} \vert $. This is true because one can ignore \good\ rounds and $2$-\bad\ rounds as   every $2$-\bad\ round can be converted into a \good\ round by switching the assignments.  
Further observe that, as \RROE\ distributes two points per round and each round assigns exactly one \bad\ point,  each round must assign exactly one \good\ point i.e. $\vert H_{1,t} \vert = \vert B_{1,(3-t)}\vert$. Together, we have $ \vert B_{1,1} \vert=  \frac{ \vert B_{1,2} \vert + \vert H_{1,1} \vert}{2} =  \vert B_{1,2} \vert $. When $n_\ell$ is odd, we might have one additional point left in the last 1-bad round that is not being assigned any complementary point. This completes the proof of the lemma. 
\end{proof}
\noindent We will bound the optimality of $1$-\bad\ rounds and $2$-\bad\ rounds separately.
\begin{figure}
\centering
\begin{subfigure}[b]{0.45\linewidth}
  \includegraphics[scale=0.2]{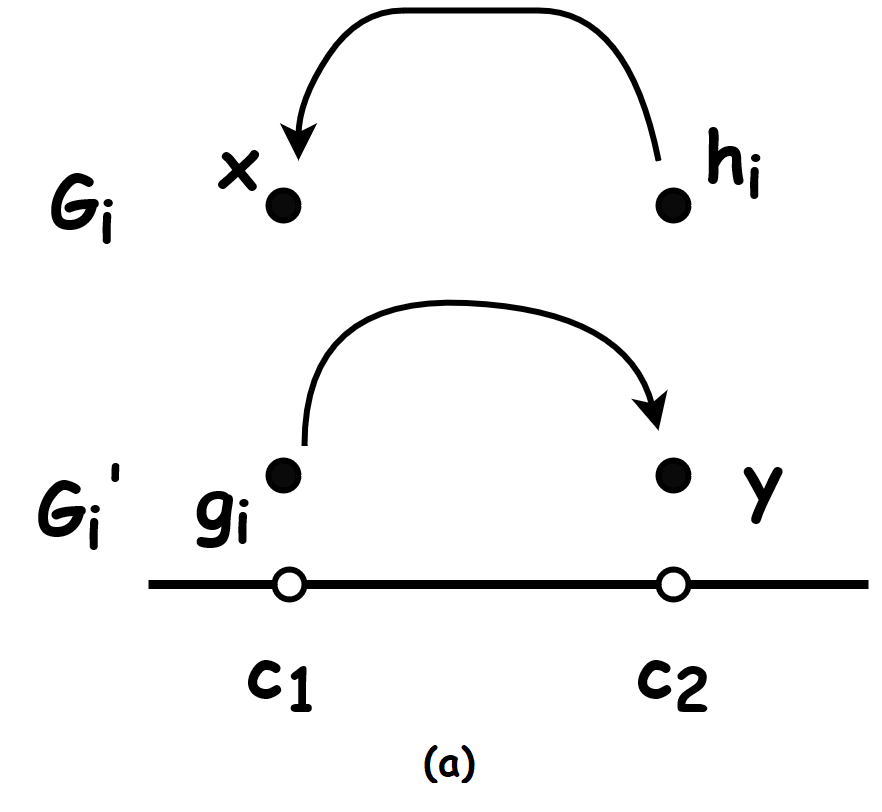}
  \caption{Two $1$-\bad\ round pairs}
  \label{fig:sub1}
\end{subfigure}%
\begin{subfigure}[b]{0.45\linewidth}
  \includegraphics[scale=0.2]{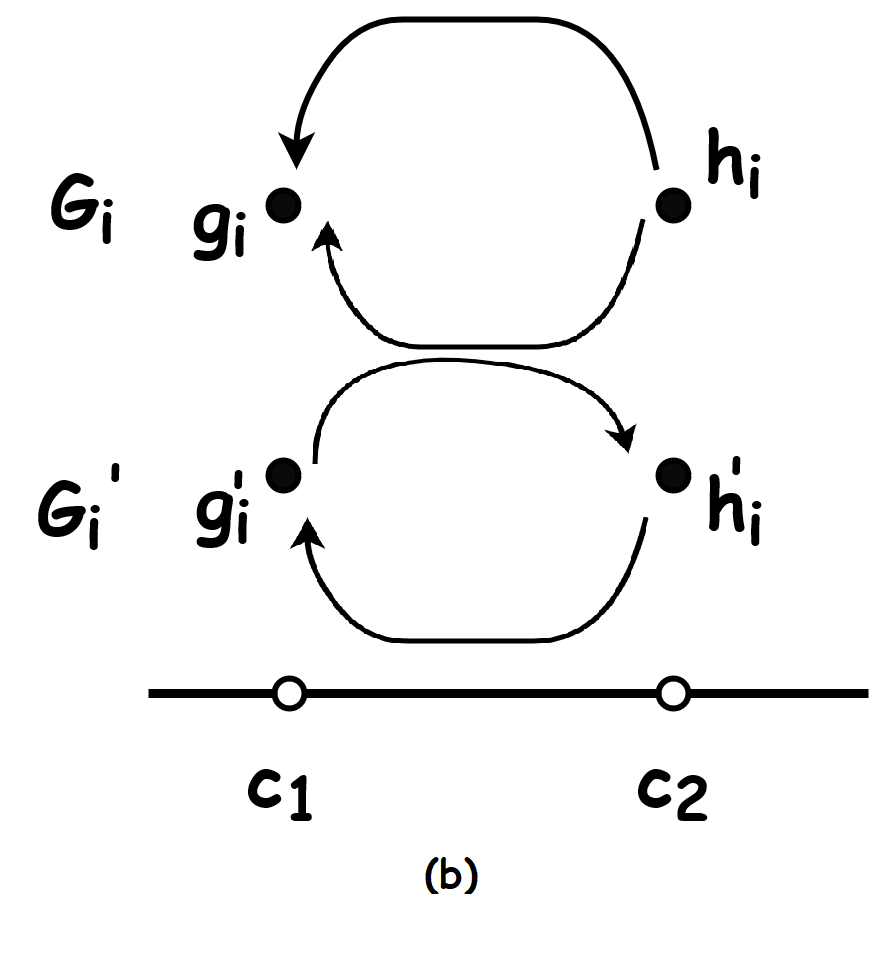}
  \caption{Two $2$-\bad\ round pairs}
  \label{fig:sub2}
\end{subfigure}
\caption{Different cases for $k=2$. (a) Shows two 1-bad rounds with four points such that $x$, $y$ are good points and allocated to the optimal center by algorithm, whereas $g_i$ and $h_i$ are bad points with an arrow showing the direction to the optimal center from the assigned center. (b) Shows four bad points such that $g_i$, $g_i'$ are assigned to $c_1$ but should belong to $c_2$ in optimal clustering (the arrow depicts the direction to optimal center). Similarly $h_i$, $h_i'$ should belong to $c_1$ in optimal clustering. }
\label{fig:k=2}
\end{figure}

\paragraph{Bounding $1$-\bad\ rounds:}
When $n_\ell$ is even, from Lemma \ref{lem:complimentary}, there are even number of $1$-\bad\ rounds; two for each complimentary bad pair. Let the $4$ points of corresponding two $1$-\bad\ rounds be  $G_{i}: (x, h_i)$  and $G_{i}^{'}: (g_i, y)$  as  shown in Fig. \ref{fig:sub1}. Note that $x \in \mathcal{C}_1$ and $y \in \mathcal{C}_2$  ie. both are good points  and $g_i \notin \mathcal{C}_1$, $h_i \notin \mathcal{C}_2$ ie. are bad points. Now, consider an instance $\mathcal{T}_i = \{C,\{x,h_i,g_i,y\}\}$, then 
 $ \mathcal{OPT}_{assign}(\mathcal{T}_i) =  d(x,c_1) + d(h_i, c_1)  + d(g_i, c_2) + d(y, c_2).$ We consider, without loss of generality, that the round $G_{i}$ takes place before $G_{i}^{'}$ in the execution of \RROE. The proof is similar for the other case.  First note that since \RROE\  allocated the point $h_i$ to cluster 2 while both the points $g_i$ and $y$ were available, we have 
\begin{equation}
    d(h_i, c_2) \leq d(g_i, c_2) \ \text{and} \ d(h_i, c_2) \leq d(y, c_2)
    \label{eq:AggAlloc}
\end{equation}
So,  
\begin{align*}
&\text{\RROE}(\mathcal{T}_i)\\
&=\ d(x,c_1) + d(h_i, c_2)  +  d(g_i,c_1)+ d(y, c_2)\\
&\le\ d(x, c_1) + d(h_i, c_2) + d(g_i,c_2) + d(c_1,c_2) + d(y,c_2) \tag{triangle  inequality}\\
&\le\ d(x, c_1) + d(h_i, c_2) + d(g_i,c_2) + d(h_i,c_2) + d(h_i,c_1) + d(y,c_2)\\
&\le\ d(x, c_1) + d(y, c_2) + d(g_i,c_2) + d(g_i,c_2) + d(h_i,c_1) + d(y,c_2)
\tag{ Eqn. \ref{eq:AggAlloc} }\\ 
&\le 2\ \mathcal{OPT}_{assign}(\mathcal{T}_i)\\
\end{align*}

If $n_\ell$ is odd, then all the other rounds can be bounded using the above cases except one extra $1$-\bad\ round. Let the two points corresponding to this round $G_i$ be $(g_i, y)$. Thus, 
$\text{\RROE}(\mathcal{T}_{i}) \le 2\mathcal{OPT}_{assign}(\mathcal{T}_{i}) + \beta$.Here $\beta$=$2 \sup_{x,y \in \mathcal{X}} d(x,y)$ is the diameter of the feature space.

\paragraph{Bounding $2$-\bad\ rounds:}
First assume that there are even  number of $2$-\bad\ rounds. In this case consider the pairs of  consecutive 2-\bad \  rounds as $G_i : (g_i, h_i)$ and $G_i^{'} = (g_i', h_{i}')$ with $G_i^{'}$ bad round followed by $G_{i}$ (Fig. \ref{fig:sub2}). Note that $g_i, g_i' \in \mathcal{C}_2$ and $h_i, h_{i}' \in \mathcal{C}_1 $. Now consider instance $\mathcal{T}_i = \{C, \{g_i, g_i', h_i, h_i'\}\}$, then , $\mathcal{OPT}_{assign}(\mathcal{T}_i) = d(h_{i}, c_1) + d(h_{i}', c_1) + d(g_{i}, c_2) + d(g_{i}', c_2) $. As a consequence of allocation rule used by \RROE\ we have 
\begin{equation}
\begin{split}
    &d(g_i, c_1) \leq d(h_i, c_1), \ d(g_i', c_1) \leq d(h_i', c_1)\ \text{and} \ d(h_i, c_2) \leq d(h_i', c_2). 
    \label{eq:algAllocTwo}
    \end{split}
\end{equation}
Furthermore, 
\begin{align*}
    \text{\RROE}(\mathcal{T}_i) &= d(g_i, c_1) + d(g_i', c_1) + d(h_i, c_2) + d(h_i', c_2) \\ 
    & \leq d(h_i, c_1) + d(h_i', c_1) + d(g_{i}', c_2) + d(h_i', c_2) \tag{using Eqn. \ref{eq:algAllocTwo}} \\ 
    & \leq d(h_i, c_1) + d(h_i', c_1) + d(g_{i}', c_2) + d(h_{i}', c_1) + d(c_1, c_2) \tag{triangle inequality}\\ 
    & \leq d(h_i, c_1) + d(h_i', c_1) + d(g_{i}', c_2)+ d(h_{i}', c_1) +  d(g_i, c_1)\\
    &+ d(g_i, c_2) \tag{triangle inequality}\\ 
    & \leq d(h_i, c_1) + d(h_i', c_1) + d(g_{i}', c_2) + d(h_{i}', c_1) +  d(h_i, c_1)\\
    &+ d(g_i, c_2) \tag{Using Eqn. \ref{eq:algAllocTwo}} \\ 
    & \leq 2d(h_i, c_1) + 2d(h_i', c_1) + d(g_i, c_2) + d(g_i', c_2)\\
    &\leq 2 \mathcal{OPT}_{assign}(\mathcal{T}_i)
\end{align*}
If there are odd number of $2$-\bad\ rounds then, let  $G = (g_i,h_i)$ be the last $2$-\bad\  round. It is easy to see that \RROE$(\mathcal{T}_i) - \mathcal{OPT}_{assign}(\mathcal{T}_i)$ =  $d(g_i,c_1) + d(h_i,c_2) - d(g_i,c_2) - d(h_i,c_1) \leq d(g_i,c_1) + d(h_i,c_2) \leq \beta$. 
Thus, 

\begin{align*}
    \text{\RROE}(\mathcal{T})\   & = \begin{cases}  \sum_{i=1}^{r/2}\text{\RROE}(\mathcal{T}_i)  & \text{if even no. of $2$-\bad\ rounds} \\ \sum_{i=1}^{\lfloor r/2 \rfloor}\text{\RROE}(\mathcal{T}_i) + \beta  & \text{Otherwise} \end{cases}\\ 
    & \leq 2 \sum_{i=1}^{\lfloor r/2 \rfloor} \mathcal{OPT}_{assign}(\mathcal{T}_i) +  \beta = 2 \mathcal{OPT}_{assign}(\mathcal{\mathcal{T}}) + \beta
\end{align*}
Here, $r$ is the number of $2$-bad rounds. and $\beta$=$2 \sup_{x,y \in \mathcal{X}} d(x,y)$ is the diameter of the feature space.
\end{proof}

\begin{corollary}
For $k=2$ and $\tau_{\ell} = \frac{1}{k}$  for all $\ell \in [m]$, we have \RROE($\mathcal{I}$) $\le (2(\alpha + 2)\mathcal{OPT}_{clust}(\mathcal{I}) + \beta)$-approximate where $\alpha$ is approximation factor for vanilla clustering problem for any given instance $\mathcal{I}$.
\end{corollary}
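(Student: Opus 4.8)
The plan is to derive the bound by composing the two results already in hand, namely Lemma \ref{lemmabera} and Theorem \ref{thm:main}, since the corollary merely tracks the cost that \RROE\ accumulates across its two stages: the vanilla $\alpha$-approximate clustering step and the subsequent \tfair\ fair reassignment step.

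First I would fix an instance $\mathcal{I}$ of the \tfair\ fair clustering problem and run \RROE\ (Algorithm \ref{algo:FRAC-OE}). Let $C$ be the centers produced by the $\alpha$-approximate vanilla clustering algorithm in its first step, and let $\mathcal{T} = (C, X)$ be the fair assignment instance that is then handed to \textsc{FairAssignment}. The key bookkeeping observation is that the final output $(\hat{C}, \hat\phi)$ recomputes the centers $\hat{C}$ to be optimal for the assignment $\hat\phi$; since for a fixed assignment the center recomputation can only decrease the objective, we have
\[
\text{\RROE}(\mathcal{I}) = L_p(X, \hat{C}, \hat\phi) \le L_p(X, C, \hat\phi) = \text{\RROE}(\mathcal{T}).
\]
Thus it suffices to bound the cost of the reassignment relative to the fixed centers $C$.

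Next I would invoke Theorem \ref{thm:main}, which applies precisely because $k = 2$ and $\tau_\ell = 1/k$ for every $\ell$, to obtain $\text{\RROE}(\mathcal{T}) \le 2\,\mathcal{OPT}_{assign}(\mathcal{T}) + \beta$. Finally, Lemma \ref{lemmabera} relates the optimal fair assignment cost on $\mathcal{T}$ back to the optimal fair clustering cost on $\mathcal{I}$, giving $\mathcal{OPT}_{assign}(\mathcal{T}) \le (\alpha+2)\,\mathcal{OPT}_{clust}(\mathcal{I})$; this step uses that $C$ is exactly the output of the $\alpha$-approximate vanilla clustering, so the lemma applies verbatim. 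Chaining the three inequalities yields
\[
\text{\RROE}(\mathcal{I}) \le 2\big((\alpha+2)\,\mathcal{OPT}_{clust}(\mathcal{I})\big) + \beta = 2(\alpha+2)\,\mathcal{OPT}_{clust}(\mathcal{I}) + \beta,
\]
which is the claimed bound.

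The argument is essentially a composition of established facts, so there is no deep obstacle; the points requiring care are all about alignment. I would verify that the centers $C$ feeding the assignment instance $\mathcal{T}$ are the same ones for which Lemma \ref{lemmabera} was proved, and that replacing $C$ by the recomputed $\hat{C}$ in the output cannot increase cost (as argued above). The most delicate bookkeeping item is the additive term: Theorem \ref{thm:main} is established per protected attribute value $\ell$, each potentially contributing a boundary $\beta$ in the odd-$n_\ell$ case, so one must confirm that these collapse into the single instance-dependent constant $\beta$ stated in the corollary rather than scaling with $m$, and that the factor of two in front of $\mathcal{OPT}_{assign}$ does not multiply this additive constant.
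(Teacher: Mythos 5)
Your proposal is correct and matches the paper's argument exactly: the paper disposes of this corollary in one line, citing Lemma \ref{lemmabera} together with the observation that $\text{\RROE}(\hat{C}, X) \le \text{\RROE}(C, X)$, and chaining these with Theorem \ref{thm:main} precisely as you do. Your closing remark about the additive constant is in fact more careful than the paper itself, which silently absorbs the per-attribute boundary terms (potentially one $\beta$ for each $\ell$ with $n_\ell$ odd) into the single ``instance-dependent'' constant $\beta$.
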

The above corollary is a direct consequence of Lemma \ref{lemmabera} and the fact that \RROE($\hat{C}, X$) $\le$ \RROE($C, X$). 
The result can easily be extended for $k$ clusters to directly obtain  $2^{k-1}$-approximate solution with respect to \tfair\ fair assignment problem. 
\begin{theorem}
\label{thm:generalk}
When $\tau_{\ell} = \frac{1}{k}$  for all ${\ell} \in [m]$, an allocation returned by \RROE\ for given centers and data points  is \tfair\ fair and satisfies  $2^{k-1}$-approximation guarantee with respect to an optimal \tfair\ fair assignment problem up to an instance-dependent  additive constant. \end{theorem}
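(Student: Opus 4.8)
The plan is to build directly on the $k=2$ analysis of Theorem~\ref{thm:main} and extend its combinatorial core to arbitrary $k$. Fairness and correctness are again immediate from the construction of \RROE, so the whole effort goes into the approximation bound, and as in Theorem~\ref{thm:main} I would argue it one protected-attribute value $\ell$ at a time. The key simplification afforded by $\tau_\ell = 1/k$ is that each cluster receives exactly $n_\ell/k$ points of value $\ell$ under both \RROE\ and the optimal fair assignment $\mathcal{C}_1,\dots,\mathcal{C}_k$. Consequently the two assignments move the same number of value-$\ell$ points into every center, so the discrepancy between them is a \emph{balanced} transportation pattern rather than an arbitrary one.

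First I would define \bad\ points exactly as before: a point is \bad\ if \RROE\ assigns it to some center $c_j$ while the optimal fair assignment places it in a different $\mathcal{C}_{j'}$. Because the in-flow and out-flow of each center are balanced (equal counts on both sides), the set of \bad\ points decomposes into directed cycles $c_{j_1}\!\to c_{j_2}\!\to\cdots\to c_{j_L}\!\to c_{j_1}$ on the centers, where a cycle carries one \bad\ point $x_i$ that \RROE\ puts at $c_{j_i}$ but the optimum puts at $c_{j_{i+1}}$ (indices mod $L$). For $L=2$ this is precisely the complementary \bad\ pair of Definition~\ref{defCmpPair} and Lemma~\ref{lem:complimentary}; the general statement is the natural cycle decomposition of a balanced mismatch, and the same parity caveat (one leftover point when $n_\ell$ is odd) is absorbed into the additive term $\beta$.

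The technical heart is to bound, for a single cycle of length $L$, the \RROE\ cost $\sum_i d(x_i,c_{j_i})$ by $2^{L-1}$ times the optimal cost $\sum_i d(x_i,c_{j_{i+1}})$. I would do this by chaining triangle inequalities along the cycle, each time replacing a center-to-center distance $d(c_{j_i},c_{j_{i+1}})$ by point-to-center distances and invoking the greedy selection rule of the round-robin (the analogue of Eqns.~\ref{eq:AggAlloc} and~\ref{eq:algAllocTwo}): whenever $c_{j_i}$ picked its \bad\ point over a still-available point that the optimum wanted there, the chosen point is at least as close. Each such reduction removes one intermediate center distance at the cost of a factor two, and there are $L-1$ links to resolve along a cycle, which is exactly what produces the $2^{L-1}$ blow-up. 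Since any cycle visits at most $k$ distinct centers we have $L\le k$, so summing the per-cycle bounds over all cycles (and over all $\ell$) gives $\text{\RROE}(\mathcal{T}) \le 2^{k-1}\,\mathcal{OPT}_{assign}(\mathcal{T}) + \beta$; composing with Lemma~\ref{lemmabera} then yields the stated guarantee against $\mathcal{OPT}_{clust}$.

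The main obstacle I anticipate is making the chaining step rigorous for long cycles, because the greedy inequalities supplied by the round-robin hold only between points that are \emph{simultaneously available}, and the rounds in which a cycle's points are picked may be arbitrarily interleaved with one another and with \good\ and shorter-cycle rounds. Controlling the order in which the $L$ points of a cycle are selected, and showing that the available comparison point at each step can always be charged to the correct optimal distance, is what forces the factor-two-per-link accounting and hence the exponential dependence on $k$; it is also the place where I would expect the bound to be loose, consistent with the paper's conjecture that $2^{k-1}$ can be reduced to a constant.
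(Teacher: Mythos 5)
Your proposal matches the paper's own argument essentially step for step: the paper likewise decomposes the mismatch between \RROE\ and the optimal fair assignment into directed cycles of length $q \le k$ over the centers (its sets $X_i^j$ with $\vert X_1^q\vert = \vert X_2^1\vert = \cdots = \vert X_q^{q-1}\vert$), resolves the cycle one link at a time by invoking the $k=2$ exchange bound of Theorem~\ref{thm:main}, pays a factor of $2$ per link for $q-1$ links to get $2^{q-1}$ plus an additive term in $\beta$, and concludes $2^{k-1}$ since no cycle exceeds length $k$. Your flagged concern about interleaved rounds and simultaneous availability of comparison points is real but is glossed over in the paper's proof as well, so it does not distinguish your route from theirs.
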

\begin{proof}
In the previous proof we basically considered two length cycles. Two $1$-\bad\ allocations resulted in 1 cycles and one $2$-\bad\ allocations resulted in another type of cycles. When the number of clusters are greater than two, then any $ 2 \le q \le k$ length cycles can be formed. Without loss of generality, let us denote $\{c_1, c_2, \ldots, c_{q}\}$ as the centers that are involved in forming such cycles. Further denote by set $X_{i}^j$ to be the set of points that are allotted to cluster $i$ by \RROE\ but  should have been allotted to cluster $j$ in an optimal fair clustering. The $q$ length cycle can then be visualized in the Fig. \ref{fig:2^k-1proof}. Since the cycle is formed with respect to these points, we have $ \vert X_{1}^q \vert  =  \vert X_{2}^{1} \vert  = \ldots =  \vert X_{q}^{q-1} \vert $ The cost by \RROE\ algorithm is then given as:
\begin{align*}
    &\sum_{i=2}^q\sum_{x \in X_{i}^{i-1}}d(x, c_i) + \sum_{x \in X_{1}^{q}}d(x, c_1)\\
    &\le 2\left(\sum_{x \in X_{2}^1}d(x, c_1) +\sum_{x \in X_{1}^q}d(x, c_2 ) + \beta \right) + \sum_{i=3}^q\sum_{x \in X_{i}^{i-1}}d(x, c_i)\\
    &\le 2(\sum_{x \in X_{2}^1}d(x, c_1) + \beta) + 2^2 \left(\sum_{x \in X_3^2}d(x, c_2) + \sum_{x \in X_{1}^q}d(x, c_3) + \beta\right) + \sum_{i=4}^q\sum_{x \in X_{i}^{i-1}}d(x, c_i)\\
    &\le 2^{q-1}\left(\sum_{i=2}^q \sum_{x \in X_i^{i-1}} d(x, c_{i-1}) + \sum_{x\in X_1^q}d(x, c_{q})\right) + 2^q \beta
\end{align*}
Here, the first inequality follows by exchanging the points in $X_2^1$ and $X_1^q$ using Theorem \ref{thm:main}. Since the maximum length cycle possible is $k$, we straight away get the proof of $2^{k-1}$- approximation.
\end{proof}
\begin{figure}[ht!]
\centering
\includegraphics[height= 0.3 \textwidth]{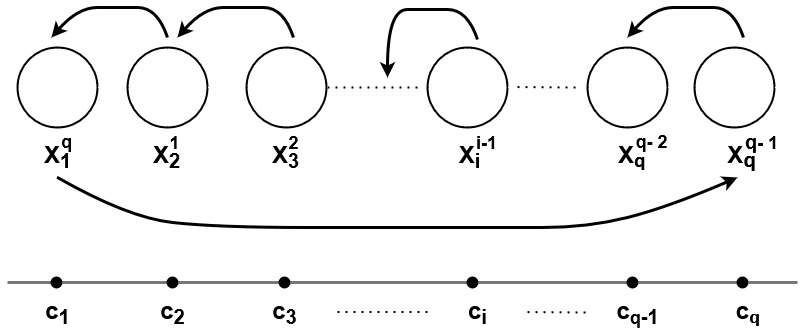}

\caption{Visual representation of set $X_{i}^{j}$ and cycle of length $q$ for Theorem \ref{thm:generalk}. The arrow represents the direction from the assigned center to the center in optimal clustering. Thus, for each set $X_i^j$ we have $c_i$ as the currently assigned center and $c_j$ as the center in optimal assignment.   }
\label{fig:2^k-1proof}
\end{figure}
\noindent Next, in contrast with Theorem \ref{thm:generalk} which guarantees a 4-approximation for $k=3$,  we show that one can achieve a 2-approximation  guarantee. The proof of this result relies on explicit case analysis and, as the number of cases to be solved increase exponentially with $k$, one needs a better proof technique for larger values of $k$. We leave this analysis as an interesting future work.

\begin{theorem}
For k=3 and $\tau_{\ell}$ = $\frac{1}{k}$ allocation returned by \RROE\  with arbitrary centers and data points  is $2$-approximate with respect to optimal \tfair\ fair assignment. 
\label{thm:k=3}
\end{theorem}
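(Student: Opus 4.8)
The plan is to reduce everything to the structure of the \bad\ points and then treat the short cycles by hand. As in the proof of Theorem~\ref{thm:main}, I would fix a protected value $\ell$, write $c_1,c_2,c_3$ for the three centers and $\mathcal{C}_1,\mathcal{C}_2,\mathcal{C}_3$ for an optimal fair assignment of the points of value $\ell$. Comparing this optimal assignment with the one \RROE\ produces defines, on the set of \bad\ points, a permutation of the centers whose cycle decomposition consists only of cycles of length $2$ and $3$ (length-$1$ cycles are exactly the \good\ points and contribute no excess cost). First I would dispose of the length-$2$ cycles: restricted to the two centers they involve, these are precisely the complementary \bad\ pairs of Lemma~\ref{lem:complimentary}, so the $k=2$ estimate of Theorem~\ref{thm:main} already bounds their \RROE\ cost by twice their optimal cost, the required greedy inequalities being read off from the three-centre round-robin order. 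The whole difficulty is therefore concentrated in the length-$3$ cycles, for which Theorem~\ref{thm:generalk} only yields the factor $4$ we want to sharpen to $2$.

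For a length-$3$ cycle I would group its points into triples, one point from each of the three legs $X_1^2,X_2^3,X_3^1$ (which have equal cardinality, cf. the proof of Theorem~\ref{thm:generalk}), and bound the \RROE\ cost of each triple $\mathcal{T}_i$ by $2\,\mathcal{OPT}_{assign}(\mathcal{T}_i)$. For a single triple $g\in X_1^2$, $h\in X_2^3$, $f\in X_3^1$ the target is $d(g,c_1)+d(h,c_2)+d(f,c_3)\le 2\big(d(g,c_2)+d(h,c_3)+d(f,c_1)\big)$. The two ingredients are exactly those used for $k=2$: (i) the round-robin \emph{greedy} inequalities, namely that whenever \RROE\ assigns a point $p$ to $c_j$, every point $p'$ still unassigned at that moment satisfies $d(p,c_j)\le d(p',c_j)$, which lets me replace an \RROE\ distance by an optimal one; and (ii) the triangle inequality, used to trade a distance to the assigned centre for a distance to the optimal centre at the cost of an inter-centre distance $d(c_i,c_j)$, which is itself bounded through the two distances of one of the \bad\ points (exactly as $d(c_1,c_2)$ is handled in Theorem~\ref{thm:main}) rather than through a crude diameter bound. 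Keeping the inter-centre distances under control in this telescoping manner, instead of discharging them onto $\beta$, is what prevents the factor from compounding along the cycle and keeps it at $2$.

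The main obstacle, as anticipated in the remark preceding the statement, is the case analysis needed to select the correct chain of inequalities for each triple. Which greedy inequalities are available depends on the relative order in which $g,h,f$ are assigned during the execution of \RROE\ (which centre picks first in a given round, and whether the three points fall in one round or across several) and on the orientation of the $3$-cycle, and each combination calls for a slightly different stitching of (i) and (ii). I would enumerate these finitely many cases and verify the target inequality in each, just as the $k=2$ proof verifies its two cases. Summing the per-triple bounds over all length-$3$ cycles and adding the length-$2$ contributions then gives $\RROE(\mathcal{T})\le 2\,\mathcal{OPT}_{assign}(\mathcal{T})$ up to a single instance-dependent additive term of order $\beta=2\sup_{x,y\in X}d(x,y)$, arising only from the at most constantly many boundary rounds that cannot be completed into a full triple or pair (the analogue of the leftover odd point in Lemma~\ref{lem:complimentary}). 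It is precisely the combinatorial blow-up in the number of such cases as $k$ grows that blocks a direct extension of this argument beyond $k=3$.
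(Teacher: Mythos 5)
Your setup (cycle decomposition into $2$- and $3$-cycles, greedy inequalities plus triangle inequality, case analysis over assignment orders) matches the paper's, but your central step --- bounding each triple of a $3$-cycle by \emph{twice its own} optimal cost, $\text{\RROE}(\mathcal{T}_i)\le 2\,\mathcal{OPT}_{assign}(\mathcal{T}_i)$ --- is a genuine gap: that inequality is not derivable from ingredients (i) and (ii), and is in fact false. Take centers on a line, $c_1=0$, $c_2=10$, $c_3=20$, round order $c_1,c_2,c_3$, and points $g=0.4$, $f=-1$, $h=20$, with $g$ assigned to $c_1$, $h$ to $c_2$, $f$ to $c_3$ in one round. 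All greedy premises hold ($d(g,c_1)=0.4\le\min\{d(f,c_1),d(h,c_1)\}=1$ and $d(h,c_2)=10\le d(f,c_2)=11$), and the cheapest permutation assignment of this triple is exactly the $3$-cycle $g\to c_2$, $h\to c_3$, $f\to c_1$ with cost $10.6$, while \RROE\ pays $0.4+10+21=31.4$, a ratio near $3$. So no stitching of greedy and triangle inequalities within a single triple can certify the factor $2$; your planned enumeration would hit cases where the target inequality simply fails. This is also why your suggestion to bound $d(c_i,c_j)$ through ``one of the \bad\ points'' of the same triple breaks down: the only bridging points with the right orientation reintroduce the very \RROE\ terms you are trying to bound (e.g.\ $d(c_2,c_3)\le d(h,c_2)+d(h,c_3)$ is circular in $d(h,c_2)$).

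The idea you are missing is the paper's amortization \emph{across} cycles: the paper explicitly notes that, unlike the $k=2$ case, one must ``bound the cost corresponding to each cycle with respect to the cost of another cycle.'' Concretely, for consecutive $3$-cycles it proves $\text{\RROE}(\mathcal{T}_i)\le \mathcal{OPT}_{assign}(\mathcal{T}_i)+\mathcal{OPT}_{assign}(\mathcal{T}_i')$, where $\mathcal{T}_i'$ is the \emph{next} cycle: the greedy inequalities invoked compare the points of cycle $i$ against the still-unassigned points of cycle $i+1$ (whose optimal distances then absorb the inter-center terms), not against points of cycle $i$ itself. Summing, each cycle's optimal cost is charged at most twice --- once by itself and once by its predecessor --- which yields the global factor $2$, with the single last cycle (your isolated-triple situation, exactly the counterexample above) relegated to the instance-dependent additive constant. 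Your treatment of the $2$-cycles via Lemma~\ref{lem:complimentary} and the $k=2$ bound is fine and agrees with the paper; the $3$-cycle analysis needs to be restructured from a per-cycle bound into this cross-cycle charging scheme (the paper's three cases in Fig.~\ref{fig:k=3_diagrams}) before the case enumeration you propose can close.
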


\begin{proof}
We will here find the approximation for $k=3$ using number of possible cases where one can have cycle of three length. Let the centers involved in three cycles be denoted by $c_i, c_j$, and $c_k$. Note that if there is only one cycle involving these three centers, then it will lead to only constant factor approximation. The challenge is when multiple such cycles are involved. Unlike $k=2$ proof, here we bound the cost corresponding to each cycle with respect to the cost of another cycle. The three cases shown in Fig. \ref{fig:k=3_diagrams} depicts multiple rounds when the two $3$-length cycles can be formed. In the figure, if $c_i$ is taking a point from $c_j$ it is denoted using an arrow from $c_i$ to $c_j$. It can further be shown that it is enough to consider these three cases. Further, let $\mathcal{T}_i = \{C, \{x_i, x_j, x_k, g_i, g_j, g_k\}\}$ and $\mathcal{T}_i' = \{C, \{y_i, y_j, y_k, g_i', g_j', g_k'\}\}$ denote the two instances.

\begin{figure}[hpt!]
\centering
\includegraphics[width= 0.8 \textwidth]{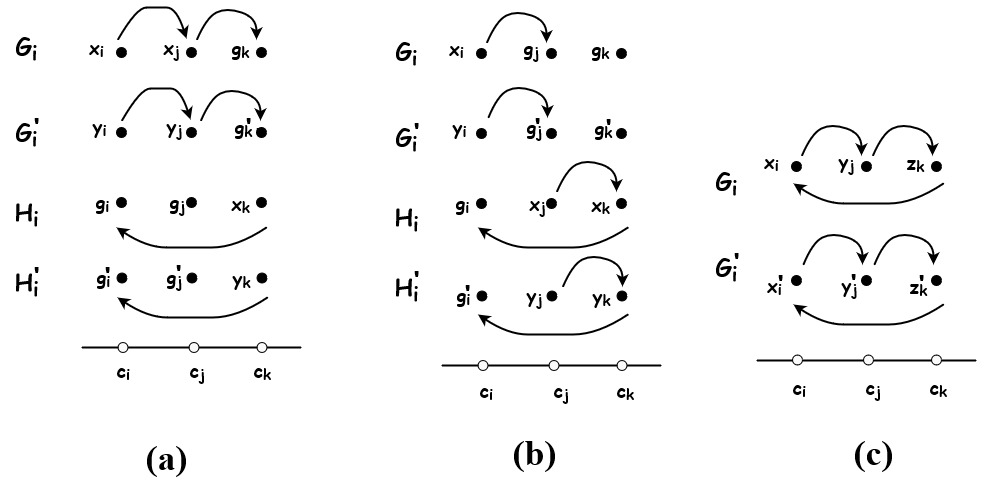}
\caption{Different use cases for 3-length cycle involving k=3 clusters  (a) Case 1: Two-three length cycle pair ($G_i,H_i$) and ($G_i',H_i'$)  (b) Case 2: Second possibility of two-three length cycle pair ($G_i,H_i$) and ($G_i',H_i'$)    (c) Case 3:Three length cycle pair ($G_i,G_i'$).}
\label{fig:k=3_diagrams}
\end{figure}
\textbf{Case 1}:
In this case we bound the rounds shown in Fig. \ref{fig:k=3_diagrams}(a). Let, one cycle completes in rounds $G_i, H_i$ and another cycle completes in rounds $G_i', H_i'$. Then,
\begin{align*}
\mathcal{OPT}_{assign}(\mathcal{T}_i) &=  d(x_i, c_j) + d(x_j, c_k) + d(g_k, c_k) + d(g_i, c_i) + d(g_j, c_j)  + d(x_k, c_i)\\
\mathcal{OPT}_{assign}(\mathcal{T}_i') &= d(y_i, c_j)
+ d(y_j, c_k) + d(g_k', c_k) + d(g_i', c_i) + d(g_j', c_j)  + d(y_k, c_i) 
\end{align*}
Further,
\begin{align*}
\text{\RROE}(\mathcal{T}_i) &= d(x_i, c_i)+ d(x_j, 
c_j)+ d(g_k, c_k)+ d(g_i, c_i)+ d(g_j, c_j)+ d(x_k, c_k)\\
&\le d(g_i', c_i) + d(g_j', c_j) + d(g_k, c_k) + d(g_i, c_i) + d(g_j, c_j) + d(x_k, c_k)
\end{align*}
Now,
\begin{align*}
d(x_k, c_k) &\le d(x_k, c_i) + d(c_i, c_k) \le d(x_k, c_i) + d(c_i, c_j) + d(c_j, c_k)\\
    &\le d(x_k, c_i) +d(x_i, c_i) + d(x_i, c_j) + d(x_j, c_j) + d(x_j, c_k)\\
    &\le d(x_k, c_i) +d(y_k, c_i) + d(x_i, c_j) + d(y_i, c_j) + d(x_j, c_k)
\end{align*}
Combining the above two, we get:
\begin{align*}
    \text{\RROE}(\mathcal{T}_i) \le \mathcal{OPT}_{assign}(\mathcal{T}_i) + \mathcal{OPT}_{assign}(\mathcal{T}_i')
\end{align*}
Thus, the cost of each cycle can be bounded by the sum of optimal cost of its own and the optimal cost of the next cycle. If we take sum over all such cycles, we will get $2$-approximation result plus a constant due to the last remaining cycle.

\textbf{Case 2}:
In this case we bound the rounds shown in Fig. \ref{fig:k=3_diagrams}(b). The optimal assignments in this case will be 
\begin{align*}
\mathcal{OPT}_{assign}(\mathcal{T}_i) &=  d(x_i, c_j) + d(g_j, c_j) + d(g_k, c_k) + d(g_i, c_i) + d(x_j, c_k)  + d(x_k, c_i)\\
\mathcal{OPT}_{assign}(\mathcal{T}_i^{'}) &= d(y_i, c_j)
+ d(g_j', c_j) + d(g_k', c_k) + d(g_i', c_i) + d(y_j, c_k)  + d(y_k, c_i) 
\end{align*}
Also, we know that 
\begin{align*}
\text{\RROE}(\mathcal{T}_i) &= d(x_i, c_i)+ d(g_j, 
c_j)+ d(g_k, c_k)+ d(g_i, c_i)+ d(x_j, c_j)+ d(x_k, c_k)\\ 
&\le d(g_i',c_i) + d(g_j, 
c_j)+ d(g_k, c_k)+d(g_i, c_i)+ d(y_k,c_i) + d(c_i,c_j) +\\ 
&d(y_j,c_k)\\
&\le d(g_i',c_i) + d(g_j, 
c_j)+ d(g_k, c_k)+d(g_i, c_i)+ d(y_k,c_i) + d(x_k,c_i) +\\ 
&d(x_i,c_j)+d(y_j,c_k)\\
\end{align*}
Combining the above two, we get:
\begin{align*}
    \text{\RROE}(\mathcal{T}_i) \le \mathcal{OPT}_{assign}(G_i,H_i) + \mathcal{OPT}_{assign}(G_i',H_i')
\end{align*}
\textbf{Case 3}:
Here again we will have two allocation rounds namely $G_i, G_i^{'}$ as shown in Fig. \ref{fig:k=3_diagrams} (c). It is easy to see that for this case, 
\begin{align*}
    \text{\RROE}(G_i) \le \mathcal{OPT}_{assign}(G_i')
\end{align*}
This completes the proof for $k=3$. 
\end{proof}

The following proposition proves that $2$-approximation guarantee is tight with respect to \RROE\ algorithm.

\begin{proposition}
\label{proposition}
There is an instance  with arbitrary centers and data points  on which \RROE\ achieves  $2$-approximation  with respect to optimal assignment. 
\label{prop:tightness}
\end{proposition}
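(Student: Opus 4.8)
The plan is to exhibit an explicit family of instances, parameterized by the number of points $n$, on which the cost of \RROE\ approaches twice the optimal fair-assignment cost. The construction exploits the fact that the round-robin rule forces every center to greedily grab its nearest \emph{available} point in each round; this can cascade into a long chain of suboptimal pickups even though an optimal fair assignment would keep each center paired with its own natural cluster. The goal is to design the geometry so that the optimal assignment pays essentially $\mathcal{OPT}_{assign}(\mathcal{T}) = n\Delta + n\delta$ (up to constants) while \RROE\ is driven into a cascade whose accumulated cost exceeds $2\,\mathcal{OPT}_{assign}(\mathcal{T})$.

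First I would fix the geometry. I would place $k+1$ centers $\{c_i\}_{i=1}^{k+1}$ so that $c_1,\dots,c_k$ are mutually close, with pairwise distance $\delta$, while $c_{k+1}$ is far away. The data is partitioned into sets $\{X_i\}_{i=1}^{k+1}$, with each $X_i$ for $i \le k-1$ sitting essentially on top of $c_i$ (carrying $n$ points from each of two protected groups), $X_k$ carrying only $n/2$ points per group, and $X_{k+1}$ split into a near region $A$ at distance $\Delta/3$ from $c_{k+1}$ (holding $n$ points per group) and a far region $B$ at distance $2\Delta/3$ (holding $n/2$ per group). Imposing the separation conditions $\Delta/3 > \delta$ and $2\Delta/3 > k\delta$ guarantees that in the optimal assignment every center retains its own set, which is what pins down $\mathcal{OPT}_{assign}(\mathcal{T})$.

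Next I would trace the execution of \RROE\ round by round. The crucial observation is that, in every round before $A$ is exhausted, $c_{k+1}$ remains the nearest center to the surviving points of $A$ and keeps consuming them at cost $\Delta/3$ per point, while the progressive emptying of the smaller sets ($X_k$, then $X_{k-1}$, and so on) frees the inner centers, forcing them to poach points from the next still-populated set at an extra cost proportional to $\delta$. Summing these contributions yields a cost of roughly $\tfrac{k n\delta}{2} + \tfrac{n\Delta}{3}\bigl(1 + \tfrac12 + \tfrac16 + \tfrac1{12} + \cdots\bigr)$ up to the point $A$ empties, plus a further $\tfrac{n\Delta k}{k+1} + \tfrac{n\delta k}{2} + \tfrac{2n\Delta}{3(k+1)}$ from the final rounds in which all centers share region $B$. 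Using the separation inequalities $k\delta < 2\Delta/3$ and $\Delta/3 > \delta$ to simplify, I would show this total exceeds $2(n\Delta + n\delta) = 2\,\mathcal{OPT}_{assign}(\mathcal{T})$, establishing tightness.

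The hard part will be the bookkeeping rather than any deep inequality. One must correctly count how many points survive in each set at each round, verify that the greedy nearest-point rule genuinely produces the claimed cascade (i.e.\ that $c_{k+1}$ always prefers $A$ to any inner point and that each freed inner center always prefers poaching the next populated set to reaching back to $A$), and then control the resulting convergent series so that the additive $\Delta$ and $\delta$ contributions provably combine to the factor $2$ rather than merely some unspecified constant. Getting the distance parameters $\Delta,\delta$ and the set sizes to interlock so that all three of these conditions hold simultaneously is the delicate step; once the round-by-round accounting is pinned down, the final comparison with $\mathcal{OPT}_{assign}(\mathcal{T})$ is a direct calculation.
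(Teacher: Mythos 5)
Your overall strategy---an adversarial cascade instance in which the round-robin rule forces a chain of poached points---is the right kind of construction, but as written the proposal has a concrete gap at the very step that anchors the whole comparison: the value of $\mathcal{OPT}_{assign}(\mathcal{T})$. You claim the separation conditions guarantee that ``in the optimal assignment every center retains its own set,'' and use this to pin $\mathcal{OPT}_{assign}(\mathcal{T}) = n\Delta + n\delta$. But that assignment is \emph{infeasible} under the perfect-balance constraint. With $k+1$ centers and $\tau_\ell = 1/(k+1)$, every cluster must receive exactly $n$ points per protected group, while in your geometry $X_k$ carries only $n/2$ points per group and $X_{k+1} = A \cup B$ carries $3n/2$. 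So the optimal fair assignment must itself move $n/2$ points per group from $X_{k+1}$ into $c_k$'s cluster, at a cost governed by the distance between $c_{k+1}$ and the inner centers---a distance your construction never specifies (``far away'' is not enough, since it enters both $\mathcal{OPT}_{assign}$ and the condition that $c_{k+1}$ keeps preferring $A$). Without a correct, derived value of $\mathcal{OPT}_{assign}$, the final inequality ``total exceeds $2(n\Delta + n\delta)$'' proves nothing. A secondary problem is that the phase accounting is asserted rather than derived: the series $1 + \tfrac12 + \tfrac16 + \tfrac1{12} + \cdots$ and the phase lengths do not match the declared set sizes (e.g.\ $X_k$ has $n/2$ points per group consumed by a single center, so it empties after $n/2$ rounds, while $X_{k-1}$ has already been partially consumed by its own center before being shared), and the claimed strict excess over $2\,\mathcal{OPT}_{assign}$ must in any case stay within the additive $\beta$ of Theorem \ref{thm:main}, which your bookkeeping does not track.

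The paper's proof avoids all of this by simplifying the instance until feasibility and the optimum are trivial: $k$ centers at consecutive distance $\delta$, each of $c_1,\dots,c_{k-1}$ sitting on top of its own set of $n$ points at distance zero, and the last set $X_k$ placed at distance $\Delta = (k-1)\delta$ from $c_k$. Because all $k$ sets have equal size $n$, the assignment ``each center keeps its own set'' is feasible, so $\mathcal{OPT}_{assign} = n\Delta$ with no argument needed. The cascade then closes in closed form: each of the $k-1$ poaching phases costs $\tfrac{n\delta}{2}$, and the final phase distributing $X_k$ equally among all centers costs $n\Delta + \tfrac{n(k-1)\delta}{2}$, for a total of $n(k-1)\delta + n\Delta = 2n\Delta$ exactly, by the choice $\Delta = (k-1)\delta$. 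If you want to salvage your version, the fix is the same move: equalize the set cardinalities so that the identity assignment is feasible, eliminate the two-region split of $X_{k+1}$ (which is what produces the unresolvable infinite series), and choose the single remaining distance parameter so the cascade cost and the optimum relate by exactly the factor $2$.
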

\begin{figure}[hpt!]
\centering
\includegraphics[width= 0.8 \textwidth]{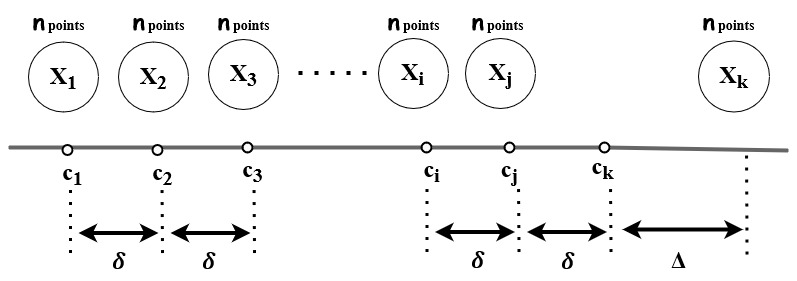}
\caption{The worst case example for fair clustering instance.}
\label{fig:worst_case_eg}
\end{figure}

\begin{proof}
The worst case for any fair clustering instance can be the situation wherein rather than choosing the points from the center's own set of optimal points, it prefers points from other centers. One such example is depicted in Fig. \ref{fig:worst_case_eg}. In this example we consider $k$ centers, and for each of these centers we have set of $n$ optimal points that are at a negligible distance (say zero) and these set are denoted by $X_i$ for center $c_i$ except the last center $c_k$. The set of optimal points for center $c_k$ is located at a distance $\Delta$ such that $\Delta$= $(k-1)\delta$ where $\delta$ is the distance between all the centers. Now we will try to approximate the tightest bound on cost that one can achieve. 
In optimal assignment each cluster center will take points from its optimal set of points. Thus optimal cost can be summed up as 
\begin{align*}
\mathcal{OPT}_{assign}  &= \sum_{x_i \in X_1} d(x_i,c_1) + \sum_{x_i \in X_2} d(x_i,c_2) + \ldots + \sum_{x_i \in X_k} d(x_i,c_k)\\
&= 0 + 0 + 0 + n\Delta
\end{align*}
If one uses round-robin based \RROE\ to solve assignment problem then at the start of $t=0^{th}$ round, each of the set $X_i$ has $n$ points. Now since $\Delta$ is quite large as compared to $\delta$ so $c_k$ will prefer to chose points from the set of previous center $c_{k-1}$. Rest all centers will take points from their respective set of optimal points as those points will be at the least cost of zero. This type of  assignment will continue until all the points in set $X_{k-1}$ gets exhausted. Thus the cost after $n/2$ rounds will be
\begin{align*}
Cost_1 &= \sum_{x_i \in X_1} d(x_i,c_1) + \ldots + \sum_{x_i \in X_{k-1}} d(x_i,c_{k-1}) + \sum_{x_i \in X_{k-1}} d(x_i,c_{k-1})\\
& = 0 + 0 + 0 + \frac{n\delta}{2}
\end{align*}
Now since all the points in set $X_{k-1}$ are exhausted, both $c_{k-1}$ and $c_k$ will prefer to choose the  points from set $X_{k-2}$. Other centers will still continue to choose the points from their respective optimal sets. It should be noted that now $\frac{n}{2}$ points are left with the center $X_{k-2}$ that are being distributed amongst $3$ clusters. Such assignments will be take place for next $\frac{n}{6}$ rounds and after that the set $X_{k-2}$ will get exhausted. The cost incurred to different centers in such assignment will be
\begin{align*}
Cost_2 &= \sum_{x_i \in X_1} d(x_i,c_1) + \ldots + \sum_{x_i \in X_{k-2}} d(x_i,c_{k-2}) + \sum_{x_i \in X_{k-2}} d(x_i,c_{k-1})\\
&+ \sum_{x_i \in X_{k-2}}d(x_i, c_{k})\\
& = \frac{n\delta}{6} + \frac{2n\delta}{6}\\
& = \frac{3n\delta}{6} = \frac{n\delta}{2}
\end{align*}

It is easy to see that the additional cost that is incurred at each phase will be $\frac{n\delta}{2}$ until the only left out points are from $X_k$. The total number of such phases will be $k-1$. Thus, exhibiting a cost of $\frac{n(k-1)\delta}{2}$. Further, at the last round all the points from $X_k$ need to be equally distributed amongst $X_1, X_2, \ldots, X_k$, thus incurring the total cost of $((k-1)\delta + \Delta + (k-2)\delta + \Delta + \ldots + \delta + \Delta + \Delta)\frac{n}{k}$. Thus, the total cost by \RROE\ is given as:

\begin{align*}
    Cost_{\text{\RROE}} &=  \frac{n(k-1)\delta}{2} + ((k-1)\delta + \Delta + (k-2)\delta + \Delta + \ldots + \delta + \Delta + \Delta)\frac{n}{k}\\
    &=\frac{n(k-1)\delta}{2} + \frac{nk(k-1)\delta}{2k} + \frac{nk\Delta}{k}\\
    &= n(k-1)\delta + n\Delta\\
    &= 2n\Delta
\end{align*}
\end{proof}
\textbf{Research gap:} Theorem \ref{thm:generalk} suggests that the approximation ratio with respect to the number of clusters $k$ can be  exponentially bad. However, our experiments show---agreeing with our finding on small values of $k (\leq 3)$---that   the performance of  \RROE\ does not degrade  with $k$.  To assert a $2$-approximation bound for general $k$ a  novel proof technique is needed and we leave this analysis as an interesting future work. Here, we  provide the following conjecture. 
\begin{conjecture}
\RROE\ is $2$-approximate with respect to optimal \tfair\ fair assignment problem for any value of $k$.
\label{conjectureLabel}
\end{conjecture}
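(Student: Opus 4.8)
The plan is to replace the pessimistic per-cycle bound of Theorem~\ref{thm:generalk} with the telescoping charging scheme that already yields a $2$-approximation in the proof of Theorem~\ref{thm:k=3}, and to argue that such a scheme survives for cycles of every length $q\le k$. As in the proof of Theorem~\ref{thm:generalk}, I would first fix a protected attribute value $\ell$ and decompose the symmetric difference between the allocation produced by \RROE\ and a fixed optimal \tfair\ fair assignment into disjoint directed cycles $c_{1}\to c_{2}\to\cdots\to c_{q}\to c_{1}$, where the sets $X_{i}^{i-1}$ of points \RROE\ sends to $c_{i}$ but the optimal assignment sends to $c_{i-1}$ all have equal cardinality. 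The cost of \RROE\ is then the sum of the per-cycle costs $\sum_{i}\sum_{x\in X_{i}^{i-1}}d(x,c_{i})$, and it suffices to bound this sum by $2\,\mathcal{OPT}_{assign}(\mathcal{T})+\beta$.

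The crucial structural input is the greedy rule of Algorithm~\ref{algo:fair-assignment}: whenever a center $c_{j}$ picks a point $x$ in a round, every point $x'$ of the same attribute value still unassigned at that moment satisfies $d(x,c_{j})\le d(x',c_{j})$. First I would use these inequalities, together with the triangle inequality along the cycle, to establish the analogue of the key estimate in the proof of Theorem~\ref{thm:k=3}: that the \RROE-cost attributable to one cycle is at most its own optimal cost plus the optimal cost of a single neighbouring cycle, i.e. $\text{\RROE}(\text{cycle}_{i})\le \mathcal{OPT}_{assign}(\text{cycle}_{i})+\mathcal{OPT}_{assign}(\text{cycle}_{i+1})$, where ``neighbouring'' is fixed by a canonical ordering of the cycles (for instance, by the round in which each cycle last receives a \bad\ point). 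Summing this telescoping inequality over all cycles would give $\text{\RROE}(\mathcal{T})\le 2\,\mathcal{OPT}_{assign}(\mathcal{T})+\beta$, the last cycle contributing only the additive diameter term $\beta$ exactly as in the $k=2$ and $k=3$ arguments (Theorems~\ref{thm:main} and~\ref{thm:k=3}).

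The main obstacle is establishing this single key estimate uniformly in $q$. For $q=3$ the proof of Theorem~\ref{thm:k=3} splits into three hand-checked configurations, and the telescoping chain $d(x_{k},c_{k})\le d(x_{k},c_{i})+d(c_{i},c_{j})+d(c_{j},c_{k})$ closes up after exactly three hops; for a cycle of length $q$ the corresponding chain has $q$ hops and must be paired against $q$ greedy inequalities that compare points available in possibly different rounds and even different cycles. The difficulty is twofold: first, the greedy inequalities only relate points that are \emph{simultaneously} available, so one must track the interleaving of rounds across all cycles, and the number of distinct interleavings grows rapidly with $k$; second, naively chaining the triangle inequality along a long cycle reintroduces the doubling that produced the $2^{q-1}$ factor, so the charging must be arranged so that each inter-center distance $d(c_{i},c_{i+1})$ is absorbed into exactly one optimal term rather than re-expanded at every hop. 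Overcoming this almost certainly requires a global amortized or potential-function argument—assigning to each \bad\ point a charge released against the optimal cost of a neighbouring cycle—rather than the explicit case analysis feasible only for $k\le 3$; designing such a potential that is provably nonnegative and sums correctly is, in my view, the heart of the conjecture and precisely why a genuinely new technique is needed.
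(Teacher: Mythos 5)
First, a point of calibration: the statement you are asked to prove is precisely what the paper leaves open. The paper proves a $2$-approximation only for $k\le 3$ (Theorems~\ref{thm:main} and~\ref{thm:k=3}), proves the exponential $2^{k-1}$ bound for general $k$ (Theorem~\ref{thm:generalk}), and explicitly states that asserting the $2$-approximation for general $k$ requires ``a novel proof technique,'' offering only experimental evidence. So there is no paper proof to match, and your proposal---which is a plan, not a proof---cannot be judged against one. Judged on its own terms, it correctly reconstructs the paper's roadmap: decompose the disagreement between \RROE\ and a fixed optimal assignment into directed cycles (as in Theorem~\ref{thm:generalk}, where the sets $X_i^{i-1}$ have equal cardinality along each cycle), then try to replace the hop-by-hop doubling with the charging used in Case~1 of Theorem~\ref{thm:k=3}, where each cycle's \RROE-cost is bounded by its own optimal cost plus that of one neighbouring cycle, so that summing telescopes to $2\,\mathcal{OPT}_{assign}(\mathcal{T})+\beta$.

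The genuine gap is the one you name yourself: the pivotal per-cycle estimate $\text{\RROE}(\text{cycle}_i)\le \mathcal{OPT}_{assign}(\text{cycle}_i)+\mathcal{OPT}_{assign}(\text{cycle}_{i+1})$ is stated as a target, never established for general cycle length $q$, and the obstructions are real rather than technical bookkeeping. The greedy inequality $d(x,c_j)\le d(x',c_j)$ is valid only if $x'$ is still unassigned at the moment $c_j$ picks $x$; with many interleaved cycles there is no guarantee that a ``neighbouring'' cycle with simultaneously available points of the right types even exists, and your proposed canonical ordering of cycles (by last \bad\ round) is not shown to make each cycle the charging target of at most one other cycle---without that, the sum does not telescope to a factor $2$. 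Moreover, chaining the triangle inequality over $q$ hops, $d(x_k,c_k)\le d(x_k,c_i)+d(c_i,c_j)+\cdots$, requires converting each inter-center distance $d(c_i,c_{i+1})$ into point--center terms, and doing this na\"ively re-expands terms already used---exactly the mechanism that produces the $2^{q-1}$ factor in Theorem~\ref{thm:generalk}. In the paper, even for $q=3$ this is avoided only by three separate hand-checked configurations (Fig.~\ref{fig:k=3_diagrams}), and the number of interleaving configurations grows rapidly with $k$. Your closing suggestion of an amortized or potential-function argument is plausible, but since no such potential is constructed or shown nonnegative, the proposal reduces the conjecture to an unproven key lemma and leaves it exactly as open as the paper does.
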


We note that \RROE\ uses vanilla $k$-means/$k$-median algorithm followed by one round of fair assignment procedure. It remains to be shown that  given a convergence guarantee of a clustering algorithm, the output of the  returned by the \RROE\ algorithm indeed converges to approximately optimal \tfair\ allocation in finite time.   Convergence guarantees of vanilla clustering algorithms are well known in the literature (\cite{convergenceKmeans1,convergenceKmeans,convergenceKmedian}).  Since, fair assignment procedure performs corrections for all available data points only once, \RROE\ is bound to converge. This gives us the following lemma. 

\begin{restatable}{lemma}{ConvergenceFracOE}
\RROE\ algorithm converges.
\label{lemmaConverge}
\end{restatable}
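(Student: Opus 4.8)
The plan is to decompose \RROE\ (Algorithm \ref{algo:FRAC-OE}) into its two constituent phases and argue that each halts in finitely many steps, so that their composition converges. Convergence here means termination in finite time with a well-defined output $(\hat C,\hat\phi)$; the lemma makes no claim about optimality, so finiteness is all that is required.

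First I would dispatch the clustering phase. \RROE\ invokes a vanilla $(k,p)$-clustering routine exactly once to obtain $(C,\phi)$, and it treats this routine as a black box. The cited convergence guarantees for $k$-means and $k$-median (\cite{convergenceKmeans1,convergenceKmeans,convergenceKmedian}) ensure this subroutine terminates and returns $(C,\phi)$ in finite time, so I would simply invoke these as an external fact. Next I would bound the work done by \textsc{FairAssignment} (Algorithm \ref{algo:fair-assignment}). Its outer loop over protected attribute values runs $m$ times; for each value $\ell$ the round loop executes $\tau_\ell n_\ell$ iterations and the inner loop over centers executes $k$ times, so the nested loops perform at most $\sum_{\ell=1}^m k\,\tau_\ell n_\ell \le \sum_{\ell=1}^m n_\ell = n$ assignments, using $\tau_\ell \le 1/k$. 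Each such assignment fixes $\hat\phi$ on one point drawn from the strictly shrinking set $\{x \in X_\ell : \hat\phi(x)=0\}$, so no point is touched twice; the remaining points of each $X_\ell$ receive their original value $\phi(x)$ in a single sweep, and the centers $\hat C$ are recomputed exactly once. Thus \textsc{FairAssignment} halts after $O(n)$ assignment steps, on top of the $O(kn\log n)$ sorting cost already noted in the description of the algorithm.

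The hard part — and really the only place where convergence could plausibly fail — is ruling out a hidden Lloyd-style alternation between reassignment and center recomputation. I would therefore emphasize two structural facts about the pseudocode: \textsc{FairAssignment} is called at most once, inside the else branch of Algorithm \ref{algo:FRAC-OE}, so the fairness correction is a single shot rather than an iterated refinement; and within \textsc{FairAssignment} the centers $\hat C$ are recomputed only after every assignment has been finalized, with no loop that feeds the updated $\hat C$ back into a reassignment step. Consequently the corrections are performed for each data point exactly once, and there is no infinite alternation to worry about. Combining the finite termination of the black-box clustering phase with this single finite pass of \textsc{FairAssignment} shows that \RROE\ converges, which completes the proof.
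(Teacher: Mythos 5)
Your proposal is correct and follows essentially the same route as the paper: the paper likewise argues that the vanilla clustering phase converges by appealing to the cited guarantees (\cite{convergenceKmeans1,convergenceKmeans,convergenceKmedian}) and that \textsc{FairAssignment} performs its corrections over the data points exactly once, so \RROE\ must terminate. Your explicit bound of $\sum_{\ell=1}^m k\,\tau_\ell n_\ell \le n$ assignments and your observation that the centers are recomputed only once (ruling out a Lloyd-style alternation) merely spell out details the paper leaves implicit.
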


\subsection{Guarantees for \RROE\ for general $\tau$ }
\label{sec:generalTauFrac}

We first begin with a simple observation that the problem of solving \tfair\ fair assignment problem on instance $\mathcal{T}$ for given centers $C$ and set of points $X$. The problem can be divided into two subproblems:
\begin{enumerate}
    \item Solving $1/k$-ratio fair assignment problem on subset of points $X_1 \in X$ such that $\lvert X_1\rvert = \sum_{\ell \in [m]} k.\tau_\ell.n_\ell$. 
    \item Solving optimal fair assignment problem on $X_2 \in X\setminus X_1$ without any fairness constraint. 
\end{enumerate}
Let us denote the first instance by $\mathcal{T}^{1/k}$ and second instance with $\mathcal{T}^0$.


\begin{figure}[hpt!]
\centering
\includegraphics[width= 0.5 \textwidth]{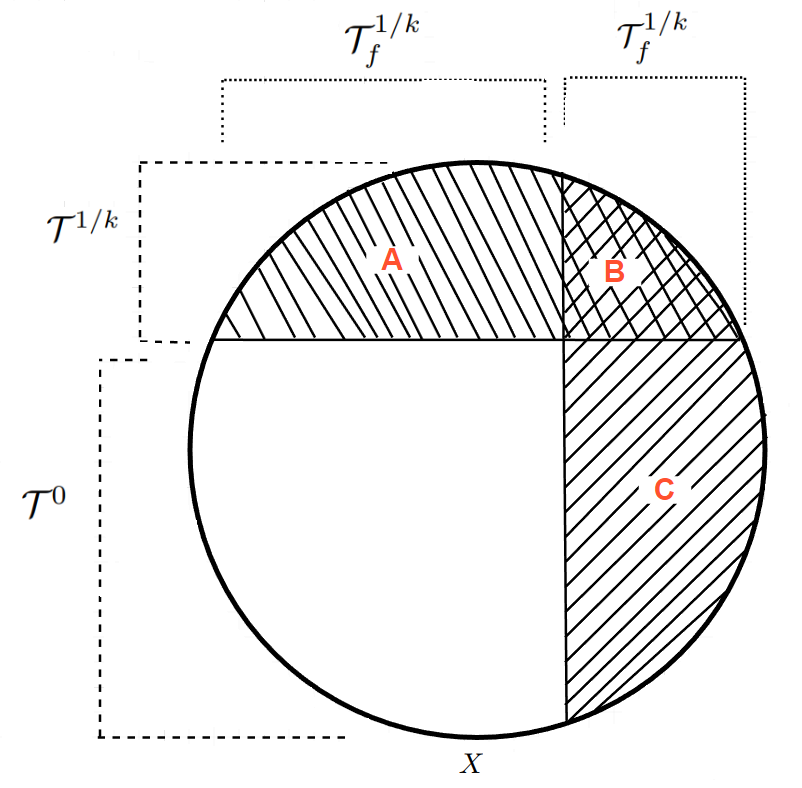}
\caption{Set of points $X$ divided into instance $\mathcal{T}^{1/k}$ and $\mathcal{T}^0$. Further the instances $\mathcal{T}_f^{1/k}$ and $\mathcal{T}_f^{0}$ are depicted in the same set of points $X$ leading to formation of regions $A,B,C$. }
\label{fig:setXgeneralTau}
\end{figure}

\begin{restatable}{lemma}{lemma2General}
\label{lemma2General}
There exists two separate instances $\mathcal{T}^{1/k}$ with $\tau$=$\{1/k\}_{\ell=1}^m$ and $\mathcal{T}^0$ with $\tau$=$\{0\}_{\ell=1}^m$ such that fair assignment problem on instance $\mathcal{T}$ can be divided into solving two problem on these two instances, i.e., $\mathcal{OPT}_{assign}(\mathcal{T}) = \mathcal{OPT}_{assign}(\mathcal{T}^{1/k}) + \mathcal{OPT}_{assign}(\mathcal{T}^0)$.
\end{restatable}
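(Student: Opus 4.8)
The plan is to establish the equality by proving the two inequalities $\mathcal{OPT}_{assign}(\mathcal{T}) \ge \mathcal{OPT}_{assign}(\mathcal{T}^{1/k}) + \mathcal{OPT}_{assign}(\mathcal{T}^0)$ and $\mathcal{OPT}_{assign}(\mathcal{T}) \le \mathcal{OPT}_{assign}(\mathcal{T}^{1/k}) + \mathcal{OPT}_{assign}(\mathcal{T}^0)$ separately, where the two sub-instances (equivalently, the partition $X = X_1 \sqcup X_2$ described just before the lemma) are read off from a fixed optimal solution of $\mathcal{T}$. Throughout I would treat the assignment cost additively, i.e. as $\sum_x d(x,\phi(x))^p$, the $p$-th power of $L_p$; this is the reading under which the claimed ``$+$'' is literally correct, and since minimizing $L_p$ and $L_p^p$ are equivalent nothing is lost. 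First I would fix an optimal \tfair-fair assignment $\phi^*$ for $\mathcal{T}$. Because $\phi^*$ is feasible, every center $c_j$ receives at least $\tau_\ell n_\ell$ points of each value $\ell$; I select exactly $\tau_\ell n_\ell$ of them per center, collect all selected points into $X_1$, and set $X_2 = X \setminus X_1$. By construction $X_1$ holds exactly $|X_{1,\ell}| = k\tau_\ell n_\ell$ points of value $\ell$ (well defined since $\tau_\ell n_\ell \in \mathbb{N}$ and $\tau_\ell \le 1/k$), and I define $\mathcal{T}^{1/k}$ as the instance on $X_1$ with $\tau = \{1/k\}$ and $\mathcal{T}^0$ on $X_2$ with $\tau = \{0\}$; this is the split illustrated in Figure \ref{fig:setXgeneralTau}.

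For the $\ge$ direction I restrict $\phi^*$ to the two pieces. The restriction $\phi^*|_{X_1}$ places exactly $\tau_\ell n_\ell = \tfrac1k |X_{1,\ell}|$ points of each value $\ell$ at every center, so it meets the $1/k$-ratio constraint and is feasible for $\mathcal{T}^{1/k}$, while $\phi^*|_{X_2}$ is vacuously feasible for the unconstrained $\mathcal{T}^0$. Since the cost is additive over the partition, $\mathcal{OPT}_{assign}(\mathcal{T}) = \mathrm{cost}(\phi^*) = \mathrm{cost}(\phi^*|_{X_1}) + \mathrm{cost}(\phi^*|_{X_2}) \ge \mathcal{OPT}_{assign}(\mathcal{T}^{1/k}) + \mathcal{OPT}_{assign}(\mathcal{T}^0)$, the final step using that each sub-optimum is the minimum over feasible assignments and the two restrictions are feasible.

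For the $\le$ direction I would take optimal assignments $\psi_1$ and $\psi_2$ for $\mathcal{T}^{1/k}$ and $\mathcal{T}^0$ and glue them into one assignment $\psi$ of $X$, which is well defined because $X_1, X_2$ partition $X$. The crucial point is that $\psi$ is feasible for the original \tfair\ instance: from $X_1$ alone the $1/k$-ratio guarantee hands each center at least $\tfrac1k|X_{1,\ell}| = \tau_\ell n_\ell$ points of value $\ell$, and the points contributed by $\psi_2$ only raise these counts, so the \tfair\ constraint holds. Hence $\mathcal{OPT}_{assign}(\mathcal{T}) \le \mathrm{cost}(\psi) = \mathcal{OPT}_{assign}(\mathcal{T}^{1/k}) + \mathcal{OPT}_{assign}(\mathcal{T}^0)$, and combining with the previous inequality gives the claimed equality.

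The main obstacle I anticipate is not the inequality chain but pinning the statement down precisely: one must commit to the additive $L_p^p$ form so that ``$+$'' is exactly right rather than a triangle-inequality artifact, and must verify that the integrality $\tau_\ell n_\ell \in \mathbb{N}$ together with $\tau_\ell \le 1/k$ makes a core set $X_1$ with $|X_{1,\ell}| = k\tau_\ell n_\ell$ genuinely extractable from \emph{any} feasible $\phi^*$. Everything else is bookkeeping: the choice of which feasible points enter the core is arbitrary, and because the lemma only asserts existence of the split instances, deriving them from $\phi^*$ suffices, so no reconciliation is needed between the partition used for the lower bound and the one implicitly used when gluing $\psi_1$ and $\psi_2$ for the upper bound.
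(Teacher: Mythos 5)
Your proof is correct and takes essentially the same route as the paper: the paper's (much terser) proof likewise extracts from an optimal \tfair\ assignment a core set $X_1^{OPT}$ with $\vert X_1^{OPT}\vert = \sum_{\ell=1}^m k\,\tau_\ell n_\ell$ that satisfies the $1/k$-ratio constraint, and lets the remaining points form the unconstrained instance. Your version simply makes explicit what the paper leaves implicit --- the two inequality directions, the gluing of $\psi_1,\psi_2$ for the upper bound, and the observation that the claimed additivity requires reading the cost in its additive $L_p^p$ form --- all of which are sound and consistent with the paper's assumptions ($\tau_\ell n_\ell \in \mathbb{N}$, $\tau_\ell \le 1/k$, fixed centers $C$ shared by both sub-instances).
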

\begin{proof}
The $\mathcal{T}$ basically ensures that each cluster should have atleast $\tau_\ell.n_\ell$ number of points. Rest all the points can be allocated in the optimal manner without any fairness constraint. Therefore in optimal assignment, there exists a set $X_1^{OPT}$ such that $\vert X_1^{OPT}\vert = \sum_{\ell=1}^m\tau_\ell.n_\ell.k$ that satisfy the $\tau-$ratio fairness  with $\tau_\ell = 1/k\ \forall \ell \in [m]$.
\end{proof}

Let $X_1^f$ be the set of points that are allocated in line number 4 by Algorithm \ref{algo:fair-assignment}. Further, let $\mathcal{T}_f^{1/k}$  be an instance to $\tau$-ratio fair assignment problem with $\tau=\{1/k\}_{\ell=1}^m$  and $\mathcal{T}_f^0$ be instance when $\tau$=$\{0\}_{\ell=1}^m$ by \RROE\ (depicted in Fig. \ref{fig:setXgeneralTau}). Then, our next lemma shows that the partition returned by \RROE\ is the optimal one.

\begin{restatable}{lemma}{lemma1General}
\label{lemma1General}
$\mathcal{OPT}_{assign}( \mathcal{T}_f^{1/k}) + \mathcal{OPT}_{assign}(\mathcal{T}_f^0) \le \mathcal{OPT}_{assign}( \mathcal{T}^{1/k}) + \mathcal{OPT}_{assign}(\mathcal{T}^0)$ for any partition $\mathcal{T}^{1/k}$ and $\mathcal{T}^0$. Thus, $\mathcal{OPT}_{assign}(\mathcal{T}) = \mathcal{OPT}_{assign}(\mathcal{T}_f^{1/k}) + \mathcal{OPT}_{assign}(\mathcal{T}_f^0)$.
\end{restatable}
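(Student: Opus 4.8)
The plan is to establish the single inequality $\mathcal{OPT}_{assign}(\mathcal{T}_f^{1/k}) + \mathcal{OPT}_{assign}(\mathcal{T}_f^0) \le \mathcal{OPT}_{assign}(\mathcal{T}^{1/k}) + \mathcal{OPT}_{assign}(\mathcal{T}^0)$ and to read off the claimed equality from it together with Lemma \ref{lemma2General}. First I would record that for \emph{every} admissible partition the right-hand side is at least $\mathcal{OPT}_{assign}(\mathcal{T})$: since $X_1$ contains exactly $k\tau_\ell n_\ell$ points of each value $\ell$, any $1/k$-ratio fair assignment of $X_1$ gives every cluster exactly $\tau_\ell n_\ell$ of them, so juxtaposing it with the unconstrained optimal assignment of $X\setminus X_1$ produces a single assignment of all of $X$ that already satisfies \tfair\ fairness; its cost, namely $\mathcal{OPT}_{assign}(\mathcal{T}^{1/k}) + \mathcal{OPT}_{assign}(\mathcal{T}^0)$, is therefore no smaller than $\mathcal{OPT}_{assign}(\mathcal{T})$. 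Since Lemma \ref{lemma2General} exhibits a partition attaining $\mathcal{OPT}_{assign}(\mathcal{T})$, the inequality to be proved forces $\mathcal{OPT}_{assign}(\mathcal{T}_f^{1/k})+\mathcal{OPT}_{assign}(\mathcal{T}_f^0)=\mathcal{OPT}_{assign}(\mathcal{T})$, which is exactly the ``Thus'' clause.

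To prove the inequality I would argue one protected value $\ell$ at a time, which is legitimate because the fairness requirement, the counts $\tau_\ell n_\ell$, and the additive objective all split across values. Writing $d^{\star}(x):=\min_{j\in[k]}d(x,c_j)$ for the free (nearest-center) cost of a point, the optimal cost of a partition $(X_1,X\setminus X_1)$ equals $\sum_{x}d^{\star}(x)$ plus the ``overhead'' $\mathcal{OPT}_{assign}(\mathcal{T}^{1/k})-\sum_{x\in X_1}d^{\star}(x)$ of forcing the $1/k$-ratio constraint on $X_1$; hence minimizing the total cost is equivalent to choosing $X_1$ to minimize this overhead. Following Fig. \ref{fig:setXgeneralTau}, I would compare \RROE's fair set $X_1^f$ with an arbitrary fair set $X_1$ through the regions $A=X_1\cap X_1^f$, $B=X_1\setminus X_1^f$ and $C=X_1^f\setminus X_1$, noting that $|B|=|C|$ for each value. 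Starting from an optimal fair assignment $\psi$ of $X_1=A\cup B$, I would build a fair assignment of $X_1^f=A\cup C$ that retains $\psi$ on $A$ and re-routes the points of $C$ into the cluster capacities vacated by $B$ (these residual capacities coincide, because keeping $\psi$ on $A$ leaves the same per-center slack in both instances), while each $b\in B$ is moved to the free side at its cheaper cost $d^{\star}(b)$.

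The crux, and the step I expect to be the main obstacle, is bounding the cost of inserting $C$ into the vacated capacities against the cost that $B$ was incurring in $\psi$. Here I would invoke the defining property of \RROE's round-robin: every point of $C$ was chosen by some center as its nearest still-available point of value $\ell$ in a round in which a competing point of $B$ was passed over, so a selected point of $C$ should be at least as close to the center it is routed to as the $B$-point whose slot it takes. Combined with triangle-inequality bookkeeping of the kind used in the proof of Theorem \ref{thm:main}, this should show that the overhead of $X_1^f$ is no larger than that of $X_1$, i.e. the desired inequality. The delicate point is that round-robin assigns $A\cup C$ jointly, so the capacity profile of $C$ produced by the round-robin run need not match the profile vacated by $B$ under $\psi$; reconciling the two will require a matching/exchange argument that pairs each vacated slot with a selected $C$-point of no greater distance, rather than a naive point-by-point substitution.
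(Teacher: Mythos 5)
Your skeleton is essentially the paper's: the same region decomposition of Fig.~\ref{fig:setXgeneralTau} (your $A$ is the paper's overlap region), the same appeal to the round-robin's nearest-available-point property, and the same use of Lemma~\ref{lemma2General} to convert the inequality into the final equality. In two places you are in fact more careful than the paper. First, you spell out the feasibility direction (juxtaposing a $1/k$-ratio fair assignment of $X_1$ with the unconstrained optimum on $X\setminus X_1$ yields a \tfair\ assignment of $X$, so the sum for \emph{every} admissible partition is at least $\mathcal{OPT}_{assign}(\mathcal{T})$), which the paper leaves implicit. Second, your ``overhead'' reformulation couples the two summands, and this matters: the paper instead bounds the terms separately, and its one-sentence claim $\mathcal{OPT}_{assign}(\mathcal{T}_f^0) \le \mathcal{OPT}_{assign}(\mathcal{T}^0)$ is, in isolation, the delicate direction, since the greedy phase pulls the \emph{near} points of each value into $X_1^f$, so the residual $X\setminus X_1^f$ tends to be the expensive side; only a coupled accounting like yours can absorb that.

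The genuine gap is the one you name yourself and then leave unresolved: you never derive the crux inequality $\sum_{c\in C}\bigl(d(c,\sigma(c)) - d^{\star}(c)\bigr) \le \sum_{b\in B}\bigl(d(b,\psi(b)) - d^{\star}(b)\bigr)$, where $\sigma$ is your rerouting of $C$ into the slots vacated by $B$. The round-robin property gives raw-distance domination $d(x,c_{j'}) \le d(b,c_{j'})$ only at the center $c_{j'}$ that \emph{selected} $x$, and only against points $b\in B$ (which, never being selected in the fair phase, remain available in every round); it says nothing at the slot's center $\psi(b)$, and nothing about the excess costs $d(\cdot,c_j)-d^{\star}(\cdot)$ that your overhead comparison requires. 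Triangle-inequality patching of the kind used in Theorem~\ref{thm:main} introduces multiplicative slack and so cannot yield the exact inequality this lemma asserts. For comparison, the paper closes this step bluntly: it fixes a per-center bijection $g$ between \RROE's selections and those of an arbitrary fair assignment of $X_1$, argues $d(x_i,c_j)\le d(y_i,c_j)$ pointwise ``because $y_i$ was available when $c_j$ chose $x_i$'' (valid precisely when $y_i\in B$), and disposes of the overlap points by asserting they are assigned to the same centers in both optima, without justification. So the obstacle you flag is real --- it is exactly where the paper's own argument is thinnest --- but as submitted your proposal announces the decisive matching/exchange step rather than proving it, and therefore does not yet establish the lemma.
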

\begin{proof}
We divide the complete set of points $X$ into three regions $A$, $B$, and $C$ as shown in Fig. \ref{fig:setXgeneralTau}. The region $B$ contains the points in the overlap of $\mathcal{T}^{1/k}$ and $\mathcal{T}_f^{1/k}$. Since, we are talking about the optimal assignment problem, these points will be assigned to same centers and hence we can ignore these points. Let the points allocated to any center $c_j$ in $\mathcal{T}_f^{1/k}$ by \RROE\ be $P=\{ x_1, x_2, x_3,\ldots, x_{m_j}\}$ and points allocated to $c_j$ in partition $\mathcal{T}^{1/k}$ be  $Q=\{ y_1, y_2, y_3,\ldots, y_{m_j}\}$. Let $g$ be a mapping function from $P \to Q$. It maps any point $x_j$ assigned to center $i$ to some point $y_j$ assigned to same center when partition under consideration is $\mathcal{T}^{1/k}$. Then, we have $\mathcal{OPT}_{assign}(\mathcal{T}_f^{1/k}) \le \text{\RROE}(\mathcal{T}_f^{1/k}) = \sum_{j=1}^k\sum_{i=1}^{m_j}d(x_i, c_j) \le \sum_{j=1}^k\sum_{i=1}^{m_j}d(y_i, c_j) = \mathcal{OPT}_{assign}(\mathcal{T}^{1/k})$. This is because despite point $y_i$ being available to center $c_j$, it chose the point $x_i$. 
 Since other points have no such constraint, we have,  $\mathcal{OPT}_{assign}(\mathcal{T}_f^0) \le \mathcal{OPT}_{assign}(\mathcal{T}^0)$. 

\end{proof}

\begin{theorem}
For $k$=$2,3$ and any general $\tau$ vector, an allocation returned by \RROE\  guarantees \tfair\ fairness and satisfies  $(2(\alpha + 2)\mathcal{OPT}_{clust} )$-approximate guarantee with respect to an fair clustering problem  where $\alpha$ is approximation factor for vanilla clustering problem.  
\end{theorem}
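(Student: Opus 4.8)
The plan is to reduce the general-$\tau$ case to the already-settled $\tau = 1/k$ case by exploiting the two decomposition lemmas. The starting point is Lemma \ref{lemma2General}, which splits the fair assignment instance $\mathcal{T}$ into a $1/k$-ratio subinstance $\mathcal{T}^{1/k}$ carrying the entire fairness burden and an unconstrained subinstance $\mathcal{T}^0$, with the optimal costs adding up. I would then invoke Lemma \ref{lemma1General}, which shows that the specific split produced by the algorithm — namely $\mathcal{T}_f^{1/k}$, the points placed during the round-robin phase, and $\mathcal{T}_f^0$, the points assigned to their vanilla centers in the final step — is itself cost-optimal, so that $\mathcal{OPT}_{assign}(\mathcal{T}) = \mathcal{OPT}_{assign}(\mathcal{T}_f^{1/k}) + \mathcal{OPT}_{assign}(\mathcal{T}_f^0)$.

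With the decomposition fixed, I would bound the cost \RROE\ pays on each piece separately. On $\mathcal{T}_f^{1/k}$ the round-robin phase is exactly the $\tau = 1/k$ procedure already analyzed, so for $k=2$ Theorem \ref{thm:main} and for $k=3$ Theorem \ref{thm:k=3} apply directly to the subset of points $X_1^f$, giving $\text{\RROE}(\mathcal{T}_f^{1/k}) \le 2\,\mathcal{OPT}_{assign}(\mathcal{T}_f^{1/k})$ up to the instance-dependent additive constant $\beta$. On $\mathcal{T}_f^0$ there is no fairness constraint, and the algorithm assigns every remaining point to its vanilla center $\phi(x)$, which is its nearest center; hence \RROE\ is exactly optimal there, i.e. $\text{\RROE}(\mathcal{T}_f^0) = \mathcal{OPT}_{assign}(\mathcal{T}_f^0)$.

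Combining these, using the additive structure $\text{\RROE}(\mathcal{T}) = \text{\RROE}(\mathcal{T}_f^{1/k}) + \text{\RROE}(\mathcal{T}_f^0)$ together with Lemma \ref{lemma1General}, yields $\text{\RROE}(\mathcal{T}) \le 2\,\mathcal{OPT}_{assign}(\mathcal{T}) + \beta$. The final step passes from the assignment problem back to the clustering problem: Lemma \ref{lemmabera} gives $\mathcal{OPT}_{assign}(\mathcal{T}) \le (\alpha+2)\,\mathcal{OPT}_{clust}(\mathcal{I})$, and substituting produces $\text{\RROE}(\mathcal{T}) \le 2(\alpha+2)\,\mathcal{OPT}_{clust}(\mathcal{I}) + \beta$, the claimed guarantee. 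Fairness is immediate from the construction of Algorithm \ref{algo:fair-assignment}, since each cluster receives at least $\tau_\ell n_\ell$ points of every protected value by design.

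The main obstacle is not any single hard estimate but making the decomposition fully rigorous: one must check that the cost actually incurred by \RROE\ decomposes additively across $\mathcal{T}_f^{1/k}$ and $\mathcal{T}_f^0$ in a way compatible with both optimality notions, and that restricting Theorems \ref{thm:main} and \ref{thm:k=3} to the subset $X_1^f$ is legitimate — the round-robin phase sees exactly this subset and behaves as in the pure $1/k$ analysis, so the earlier arguments transfer without change. A secondary point to treat carefully is the additive constant: as in the $\tau = 1/k$ corollary, the $\beta$ term persists and should be stated explicitly rather than silently dropped from the final bound.
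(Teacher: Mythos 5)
Your proposal follows essentially the same route as the paper's own proof: decompose the instance via Lemmas \ref{lemma2General} and \ref{lemma1General}, bound the round-robin piece $\mathcal{T}_f^{1/k}$ by the existing $\tau=1/k$ guarantees (Theorems \ref{thm:main} and \ref{thm:k=3}), observe that \RROE\ is exactly optimal on the unconstrained piece $\mathcal{T}_f^0$, and finish with Lemma \ref{lemmabera}. If anything, you are slightly more careful than the paper, which silently drops the instance-dependent additive constant $\beta$ that, as you correctly note, persists in the final bound.
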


\begin{proof}
With the help of Lemma \ref{lemma2General} the cost of \RROE\ on instance $\mathcal{T}_f$ can be computed as,
\begin{equation}
\label{eqnCosty}
\text{\RROE}(\mathcal{T})=\text{\RROE}(\mathcal{T}_f^{1/k})+\text{\RROE}(\mathcal{T}_f^0)
\end{equation}
Now, from Section \ref{SectionFRACoeGuran},  $\text{\RROE}(\mathcal{T}_f^{1/k}) \le 2.\mathcal{OPT}_{assign}(\mathcal{T}_f^{1/k})$. 

Also, since $\mathcal{T}_f^0$ is solved for $\tau$=$\{0\}_{\ell=1}^m$ i.e. assignment is carried solely on the basis of $k-$means clustering, so we have $\text{\RROE}(\mathcal{T}_f^0) = \mathcal{OPT}_{assign}(\mathcal{T}_f^0) \le 2.\mathcal{OPT}_{assign}(\mathcal{T}_f^0)$.

So Equation \ref{eqnCosty} becomes, 
\begin{align*}
    \text{\RROE}(\mathcal{T})&= 2.\mathcal{OPT}_{assign}(\mathcal{T}_f^{1/k}) + 2.\mathcal{OPT}_{assign}(\mathcal{T}_f^0)\\
    &= 2. \mathcal{OPT}_{assign}(\mathcal{T})   \tag{using Lemma \ref{lemma2General}} \\
    &= 2. (\alpha + 2)\mathcal{OPT}_{clust}(\mathcal{I})   \tag{Using Lemma \ref{lemmabera}}
\end{align*} 
\end{proof}

\section{Fair Round Robin Algorithm for Clustering (\RR) --A Heuristic Approach}
\label{sec:heur}
We now propose another algorithm, a general version of \RROE\ where the fairness constraints are satisfied at each allocation round: Fair Round-Robin Algorithm for Clustering \RR\ (described in  Algorithm \ref{algo:FRAC}).  \RR\ runs a fair assignment problem at each iteration of a vanilla clustering algorithm. 

\begin{algorithm}[h]
\SetAlgoLined
\KwInput{Set of datapoints $X$, Number of clusters $k$, Fairness requirement vector $\tau$, Range of protected attribute $m$, clustering objective norm $p$}
 \KwOutput{Cluster centers $C$ and assignment function $\phi$}
 Choose the random centers as $C$\\ 
  \While{$Until Convergence$ }{
  \For{each $x_i \in X$}{
  $\phi(x_i) = \argmin_m d(x_i, c_m)$
  }
  $(C, \phi)$ = \textsc{FairAssignment}($C, X, k,  \tau, m, p, \phi$)
  }
 \caption{$\tau$-\RR\ }
 \label{algo:FRAC}
\end{algorithm}

 It is theoretically hard to analyze \RR\  as it is an in-processing algorithm and  each round's allocation depends upon previous rounds, i.e., the rounds are not independent of each other. 
Thus, we experimentally show the convergence of both \RR\ and \RROE\ on real-world datasets. We also show that \RR\ achieves the best objective cost amongst all the available algorithms in the literature.
Since both \RROE\  and \RR\ solve the fair assignment problem on the top of the vanilla clustering problem. Thus, one can use them to find fair clustering for center-based approaches, i.e., $k$-means and $k$-median.

\section{Experimental Result and Discussion}
\label{sec:Experimental}
We validate the performance of proposed algorithms across many benchmark  datasets and compare it against the SOTA approaches. We observe in  Section \ref{FRACvsFRACOE_ablation} that the performance of \RR\ is better than \RROE\ in terms of objective cost. It is also evident that \RR\ applies the fairness constraints after each round. 

The bench marking datasets used in the study are
\begin{itemize}
  \item \textbf{Adult\footnote{https://archive.ics.uci.edu/ml/datasets/Adult} (Census)}- The data set contains information of 32562 individuals from the 1994 census, of which 21790 are males and 10771 are females. We choose five attributes as feature set: age, fnlwgt, education\_num, capital\_gain, hours\_per\_week; the binary-valued protected attribute is sex, which is consistent with prior literature. The \balance\ in the dataset is 0.49.
  \item \textbf{Bank\footnote{https://archive.ics.uci.edu/ml/datasets/Bank+Marketing}}- The dataset consists of marketing campaign data of portuguese bank. It has data of 41108 individuals, of which 24928 are married, 11568 are single, and 4612 are divorced. We choose six attributes as the feature set: age, duration, campaign, cons.price.idx, euribor3m, nr.employed;  the ternary-valued feature martial status is chosen as the protected attribute to be consistent with prior literature, resulting in a \balance\ of 0.18.
  \item \textbf{Diabetes\footnote{https://archive.ics.uci.edu/ml/datasets/Diabetes+130-US+hospitals+for+years+1999-2008}}- The dataset contains clinical records of 130 US hospitals over ten years. There are 54708 and 47055 hospital records of males and females, respectively. Consistent with the prior literature, only two features: age, time\_in\_hospital are used for the study. Gender is treated as the binary-valued protected attribute yielding a \balance\ of 0.86.
  \item \textbf{Census II\footnote{https://archive.ics.uci.edu/ml/datasets/US+Census+Data+\%281990\%29}}- It is the largest dataset used in this study containing 2458285 records from of US 1990 census, out of which 1191601 are males, and 1266684 are females. We choose 24 attributes commonly used in prior literature for this study. Sex is the binary-valued protected attribute. The \balance\ in the dataset is 0.94.
\end{itemize}

\begin{table}[h]
\scalebox{0.72}{
\begin{tabular}{@{}llcllllll@{}}
\toprule
\multicolumn{1}{l}{\textbf{\begin{tabular}[l]{@{}l@{}}Dataset \\ Name\end{tabular}}} & \textbf{\#Cardinality} & \textbf{\begin{tabular}[l]{@{}l@{}}\#Feature \\ Attributes\end{tabular}} & \textbf{\begin{tabular}[c]{@{}l@{}}Protected\\ Attribute\end{tabular}} & \textbf{\begin{tabular}[c]{@{}l@{}}Protected\\ Attribute\\ Cardinality\end{tabular}} & \multicolumn{3}{c}{\textbf{\begin{tabular}[c]{@{}c@{}}Protected Attribute\\ Composition\end{tabular}}} & \multicolumn{1}{c}{\textbf{\begin{tabular}[c]{@{}c@{}}Dataset\\ Balance\end{tabular}}} \\ \midrule
\begin{tabular}[c]{@{}l@{}}Adult \\ (Census)\end{tabular} & $32562$ & $5$ & gender & binary & \begin{tabular}[c]{@{}l@{}}$21790$ \\ males\end{tabular} & \begin{tabular}[c]{@{}l@{}}$10771$ \\ females\end{tabular} & -- & $0.49$ \\ \midrule
Bank & $41108$ & $6$ & \begin{tabular}[c]{@{}l@{}}marital \\ status\end{tabular} & ternary & \begin{tabular}[c]{@{}l@{}}$24928$ \\ married\end{tabular} & \begin{tabular}[c]{@{}l@{}}$11568$ \\ unmarried\end{tabular} & \begin{tabular}[c]{@{}l@{}}$4612$ \\ divorced\end{tabular} & $0.18$ \\ \midrule
Diabetes & $101763$ & $2$ & gender & binary & \begin{tabular}[c]{@{}l@{}}$54708$ \\ males\end{tabular} & \begin{tabular}[c]{@{}l@{}}$47055$ \\ females\end{tabular} & -- & $0.86$ \\ \midrule
Census II & $2458285$ & $24$ & gender & binary & \begin{tabular}[c]{@{}l@{}}$1191601$ \\ males\end{tabular} & \begin{tabular}[c]{@{}l@{}}$1266684$ \\ females\end{tabular} & -- & $0.94$ \\ \bottomrule
\end{tabular}
}
\caption{Characteristics for real-world datasets commonly used in evaluation of fair clustering algorithms. Number of feature attributes exclude protected attribute and for complete list of feature attributes see Section \ref{sec:Experimental}. }
\label{tab:my-table-datasets}
\end{table}
The dataset characteristics are summarized in Table \ref{tab:my-table-datasets}.
We compare the application of FRAC to $k$-means and $k$-median against the following baseline and SOTA approaches
\begin{itemize}
\item \textbf{Vanilla $k$-means}: A Euclidean distance-based $k$-means algorithm that does not incorporate fairness constraints
\item \textbf{Vanilla $k$-median}: A Euclidean distance-based $k$-median algorithm that does not incorporate fairness constraints.
\item \textbf{\cite{bera2019fair}}: 
The approach solves the fair clustering problem through an LP formulation. The fairness is added as an additional constraint in LP by bounding the minimum (minority protection see Definition \ref{tauMP} ) and maximum (restricted dominance see Definition \ref{tauRD}) fraction of points belonging to the particular protected group in each cluster. 
Due to the high computational complexity of the $k$-median version of the approach, we restrict the comparison to the $k$-means version. Furthermore, the algorithm fails to converge in a reasonable time when the number of clusters is greater than 10 for larger datasets.
\item \textbf{\cite{ziko2019variational}}: This approach formulates a regularized optimization function incorporating clustering objective and fairness error. It does not allow the user to give an arbitrary fairness guarantee but computes the optimal trade-off by tuning a hyper-parameter $\lambda$. We compare against both the $k$-means and $k$-median version of the algorithm. We observed that the hyper-parameter $\lambda$ is extremely sensitive to the datasets and the number of clusters. Tuning this hyper-parameter is computationally expensive. We were able to tune value of $\lambda$ in a reasonable amount of time only for adult and bank datasets for $k$-means clustering for varying number of clusters. Due to the added complexity of $k$-medians, we were able to fine tune $\lambda$ only for the adult dataset. For the other cases, we have used the hyper-parameter value reported by \citeauthor{ziko2019variational} We have used the same value across varying number of cluster centers. The paper does not report any results for diabetes dataset; we have chosen the best $\lambda$ value over a single run of fine-tuning. This value is used across all experiments related to diabetes dataset.
\item \textbf{\cite{backurs2019scalable}}: This approach computes the fair clusters using fairlets in an efficient manner and is the extension to that of \cite{chierichetti2018fair}. This approach could only be integrated with $k$-median clustering. Further, we could not compare against $k$-median version of \cite{chierichetti2018fair} due to high computational  ($O(n^2)$) and space complexities. We offset this comparison using \cite{backurs2019scalable} that has shown to result in better performance than \cite{chierichetti2018fair}. 
\end{itemize}

\noindent We use the following popular metrics in the literature for measuring the performance of the different approaches.
\begin{itemize}
  \item \textbf{Objective Cost}: We use the squared euclidean distance ($p=2$) as the objective cost to estimate the cluster's compactness (see Definition \ref{eqn:objCostPrelim}).

  \item \textbf{Balance}: The \balance\ is calculated using Definition \ref{eqnBalanceRelated}  

\item \textbf{Fairness Error} This notion of fairness constraints is introduced by \cite{ziko2019variational}. It is 
 the Kullback-Leibler (KL) divergence between the required protected group proportion $\tau$ and achieved proportion within the clusters:

\begin{equation}
\begin{split}
FE(\mathcal{C}) = \sum_{C \in \mathcal{C}} \sum_{\ell \in [m]} \left( - \tau_{\ell} \log \left( \frac{q_\ell}{\tau_{\ell}} \right) \right) where\ q_\ell = \left( \frac{\sum_{x_i \in C}\mathbb{I}{(\rho_i=\ell)}}{\sum_{x_i \in X}\mathbb{I}{(\rho_i=\ell)}} \right) 
\end{split}
\label{eqnNewFairnessEror}
\end{equation} 

\end{itemize}

The $\tau$ vector in fairness error captures the target proportion in each cluster for different protected groups $\ell \in [m]$. It can be any arbitrary $\ell$ dimensional vector. In the experimental setting with $\tau=1/k$, target reduces to dataset proportion for different groups to evaluate all baselines. In a generalized setting, when $\tau < 1/k$, it is the same as the input vector $\tau$ for \RR\ and \RROE\ algorithms that achieve $\tau$-ratio fairness constraints. Similarly, in \cite{bera2019fair}, the target vector is $\delta$ (refer Section \ref{sec-tauRatio} for details on the parameter $\delta$). We report the average and standard deviation of the performance measures across 10 independent trials for every approach. The code for all the experiments is publicly available\footnote{https://github.com/shivi98g/Fair-k-means-Clustering-via-Algorithmic-Fairness}.
We begin the empirical analysis of various approaches under both $k$-means and $k$-median settings for a fixed value of $k$ (=10) in line with the previous literature. The top and bottom row in Fig. \ref{fig:FixedK} summarize the results obtained for the $k$-means and $k$-median settings respectively. The plots for $k$-means clustering clearly reveal the ability of \RR\ and \RROE\ to maintain the perfect \balance\ and zero fairness error. While  \cite{bera2019fair} is also able to achieve similar fairness performance, \RR, \RROE\ has significantly lower objective cost. Though \cite{ziko2019variational} returns tighter clusters ie., the objective cost is lower than \RR, \RROE\ and \cite{bera2019fair}, the lower objective comes at the cost of poor performance on both the fairness measures. It is also observed that the cost of fairness is relatively high in the Census-II dataset, which has the largest number of points and features among all datasets. It may be due to the shifting of an increased number of points compared to vanilla clustering for satisfying the hard constraint.

\begin{figure}[ht!]
\centering
\begin{tabular}{@{}c@{}c@{}c@{}}
\includegraphics[width=0.35\textwidth]{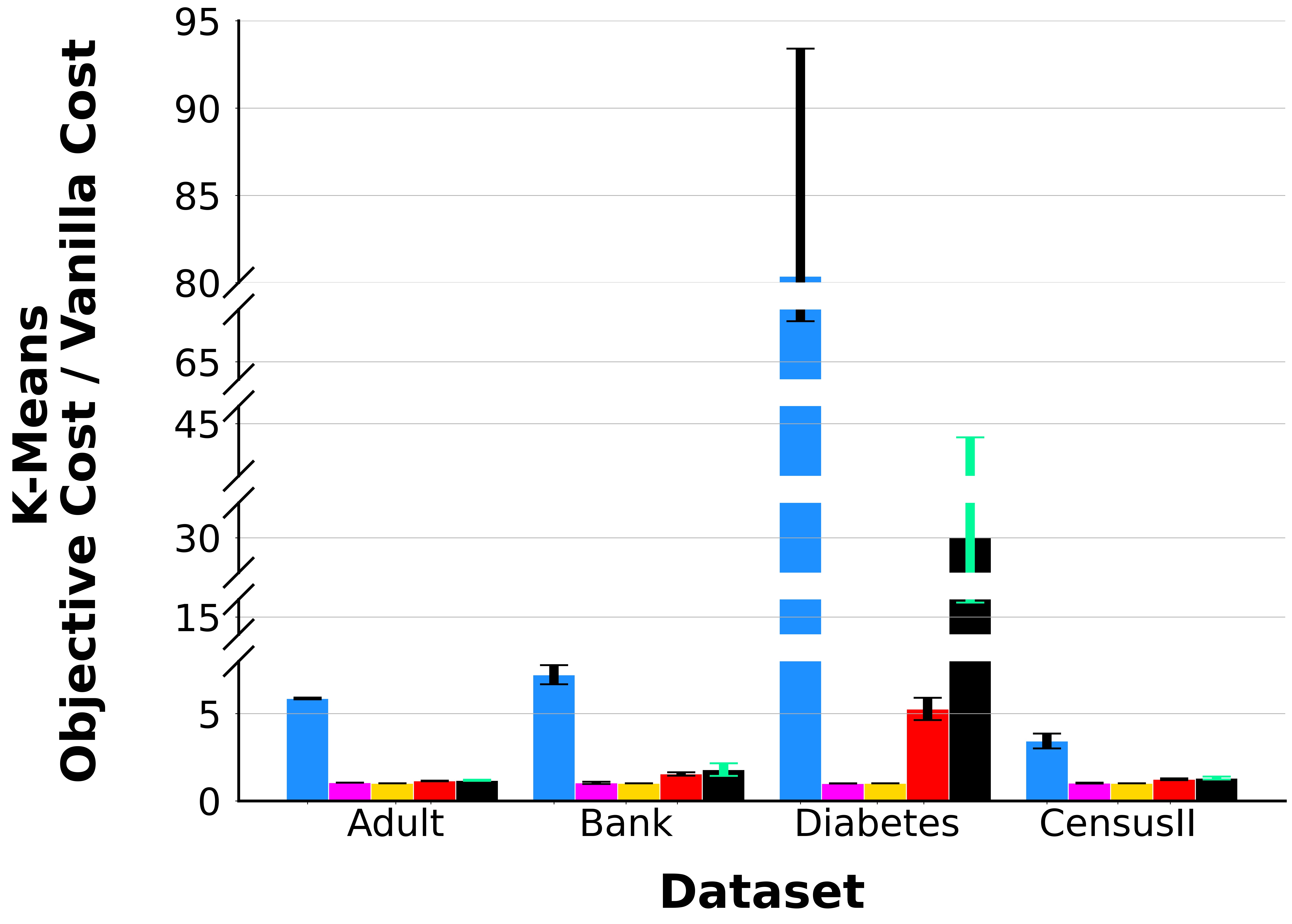}&\includegraphics[width=0.330\textwidth]{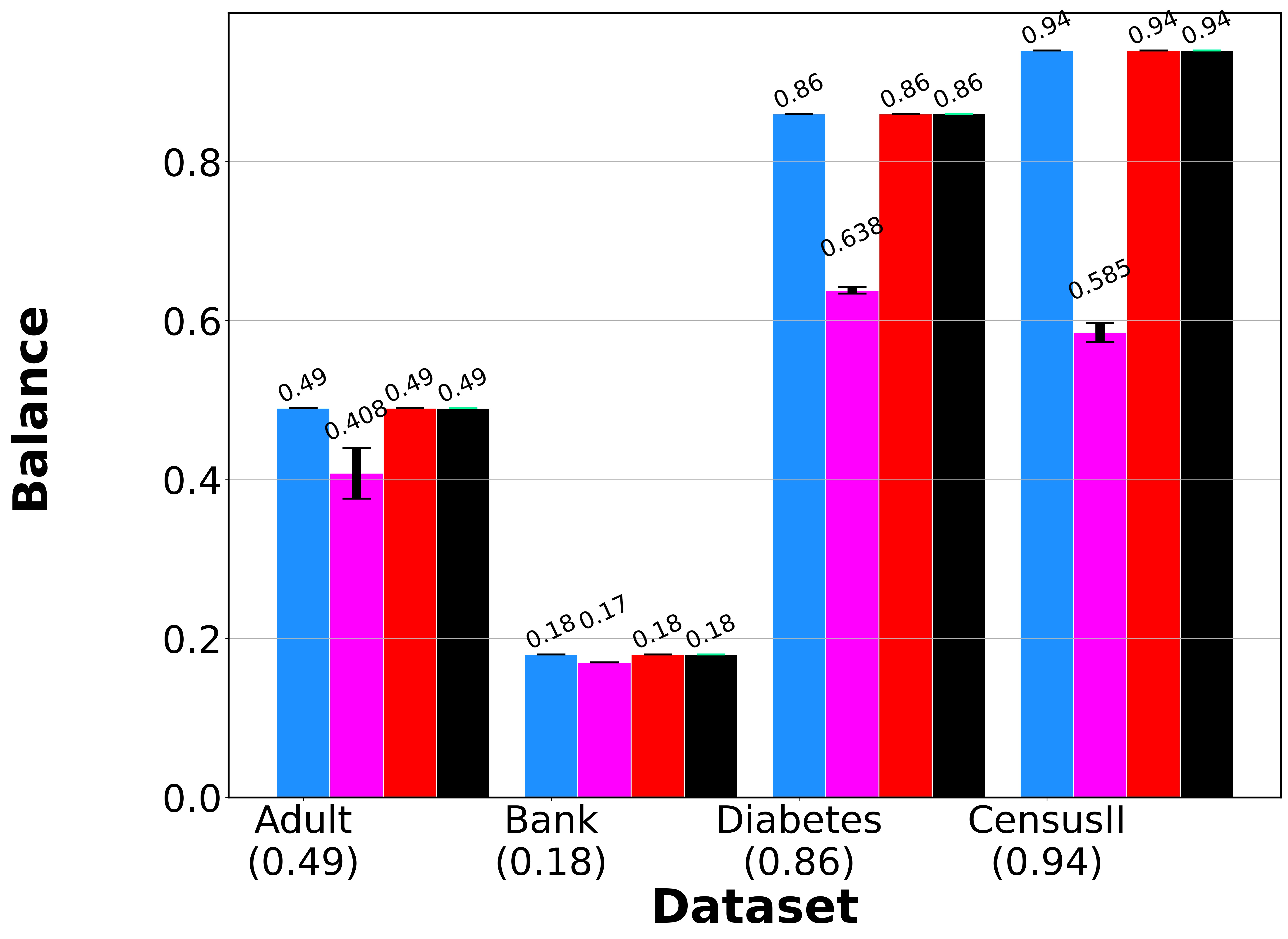}&\includegraphics[width=0.330\textwidth]{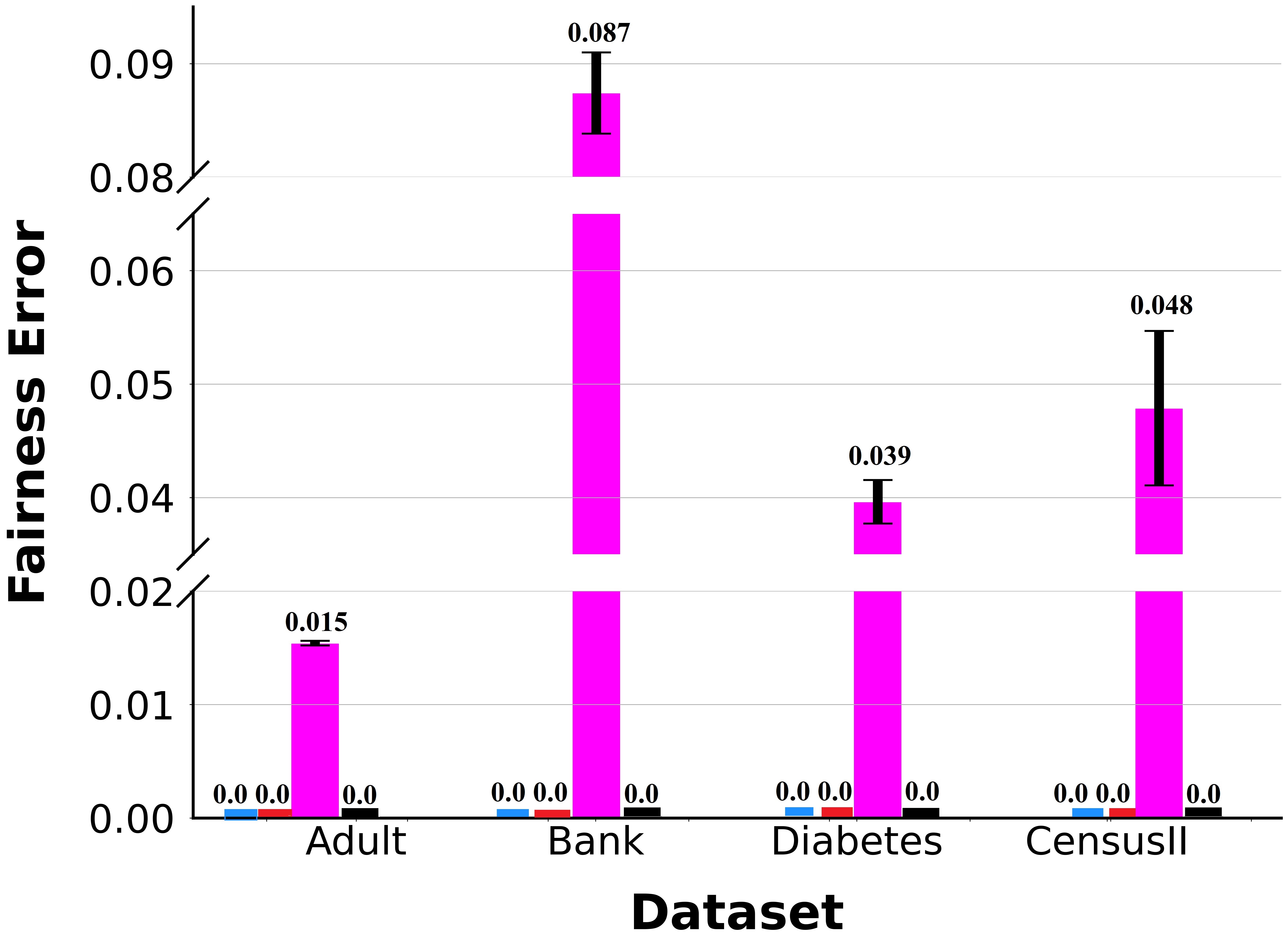}\\ 
\includegraphics[width=0.35\textwidth]{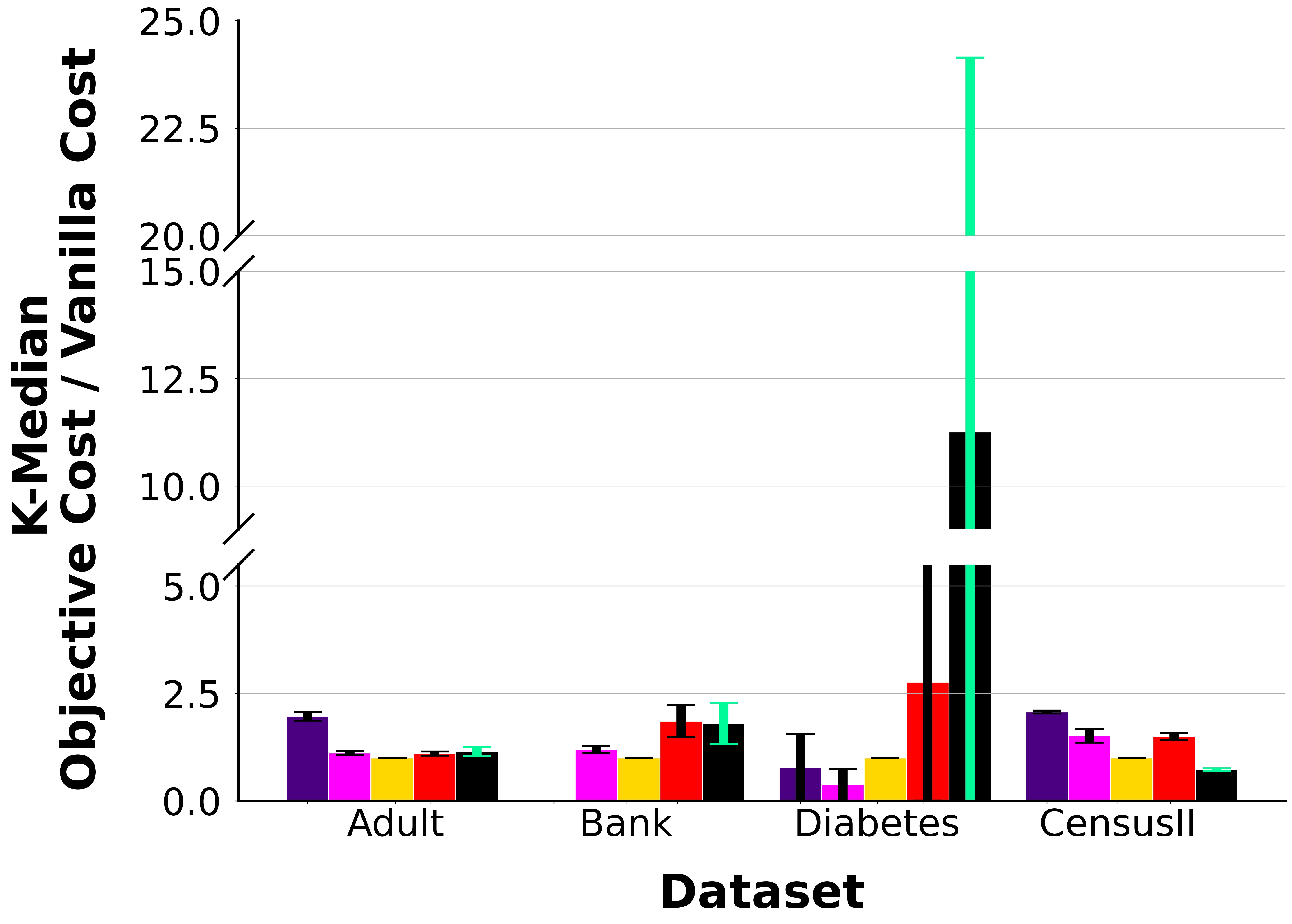}&\includegraphics[width=0.33\textwidth]{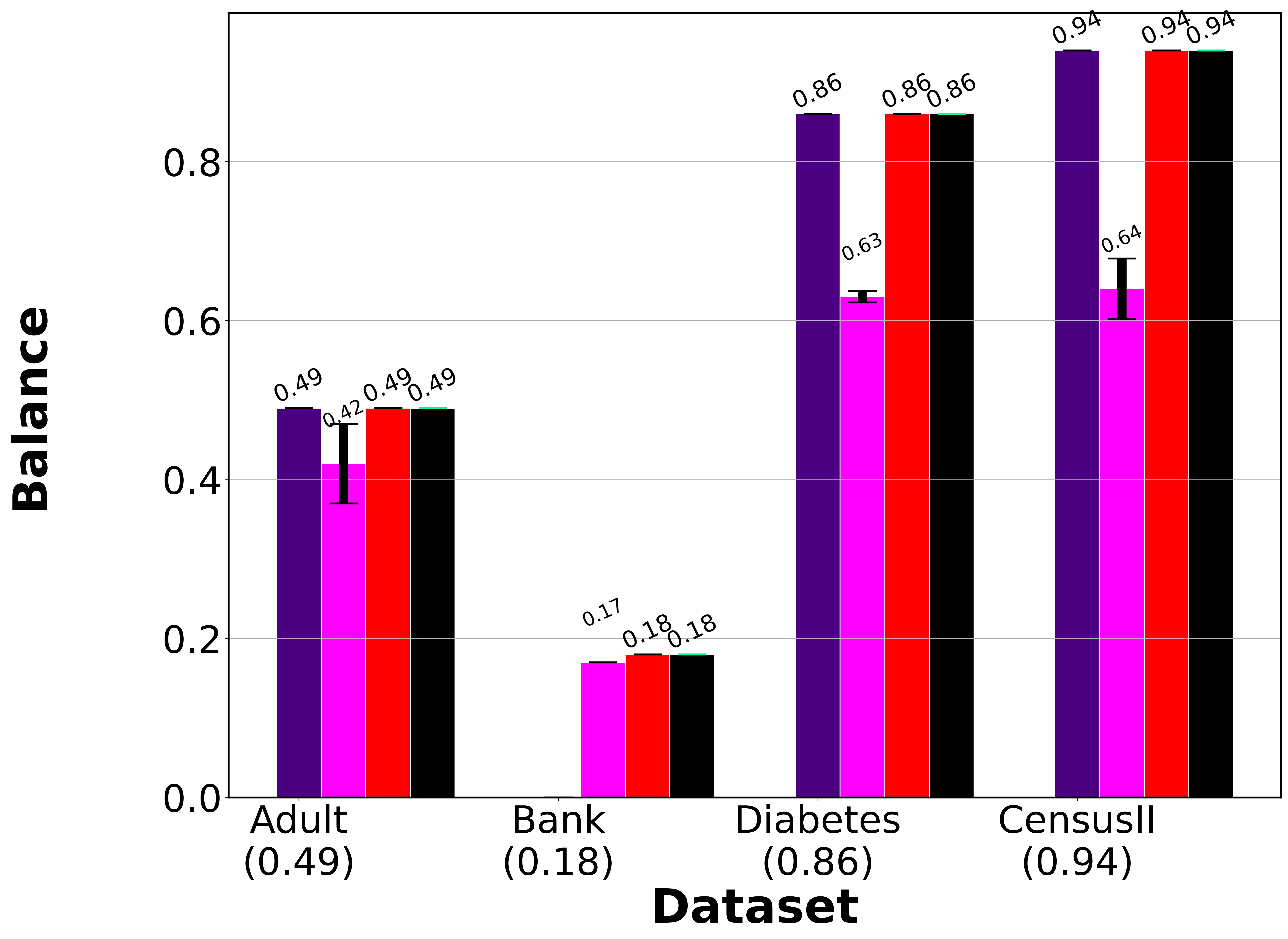}&\includegraphics[width=0.33\textwidth]{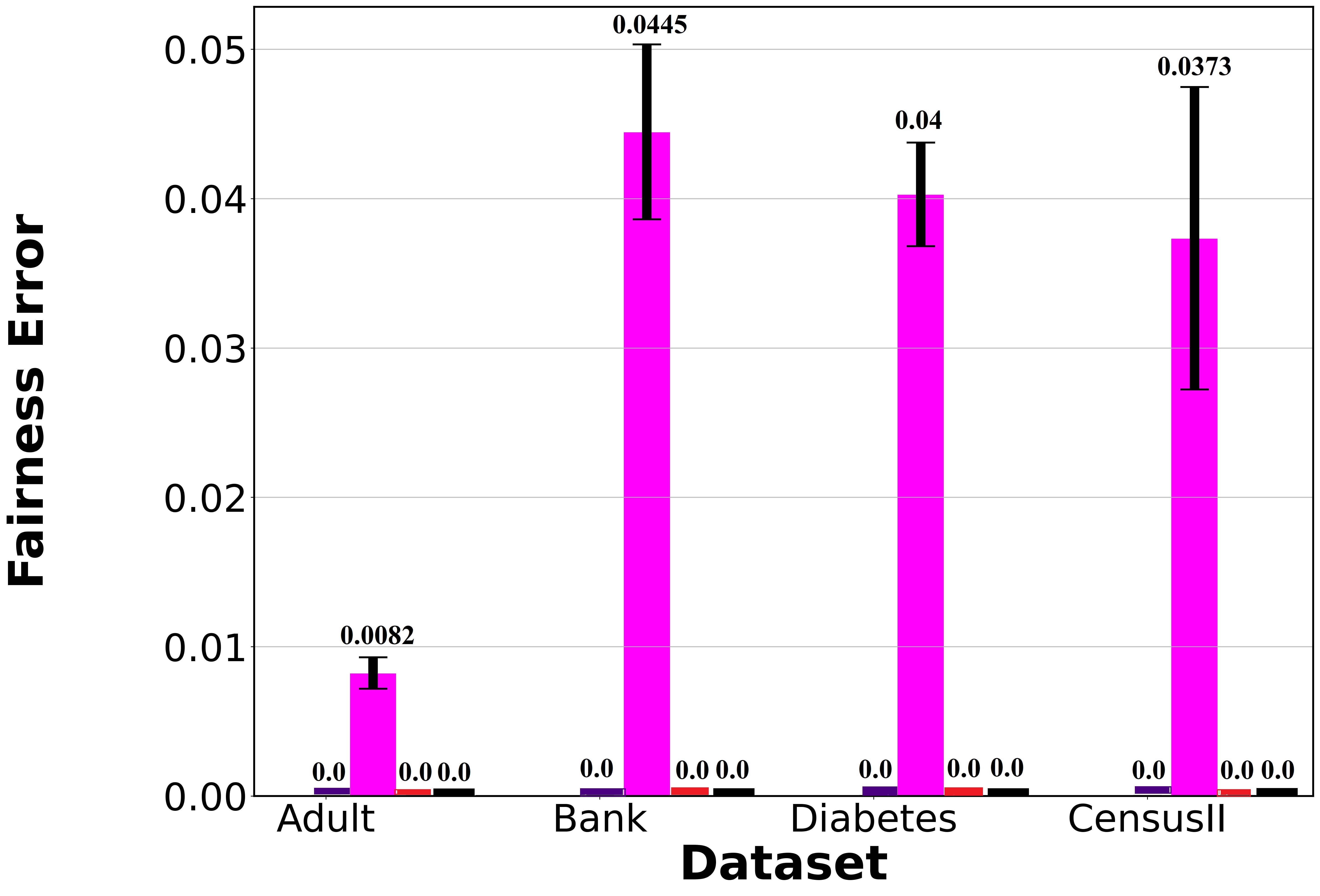}\\
\multicolumn{3}{l}{\includegraphics[width=\textwidth]{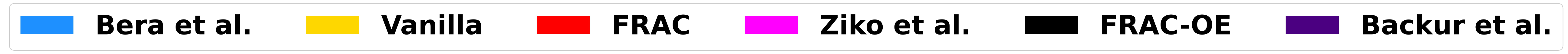}}
\end{tabular}
\caption{The plot in the first row shows the variation in evaluation metrics for $k$=10 clusters.The objective cost is scaled against vanilla objective cost. For Ziko et al. the $\lambda$ values for $k$-means and $k$-median  are taken to be  same as in their paper. The second row comprises of plots for $k$-median setting on same $k$ value. It should be noted that Backur et al. does not work for bank dataset which has ternary valued protected group. The target \balance\  of each dataset is evident from the axes of the plot. (Best viewed in color)}
\label{fig:FixedK}
\end{figure}


 \begin{figure}[th!]
\centering
\begin{tabular}{@{}c@{}c@{}c@{}}
\includegraphics[width=0.340\textwidth]{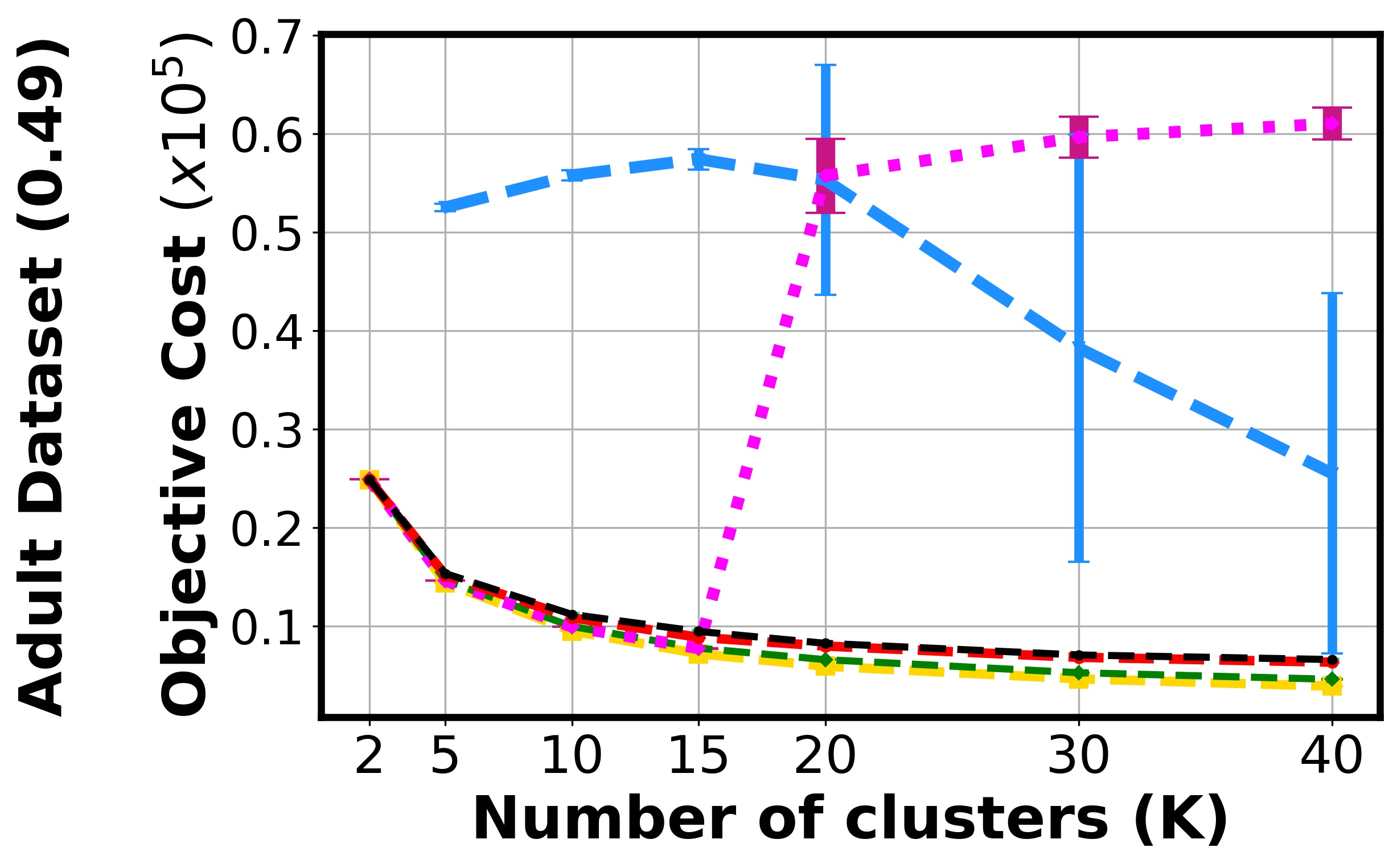}&\includegraphics[width=0.330\textwidth]{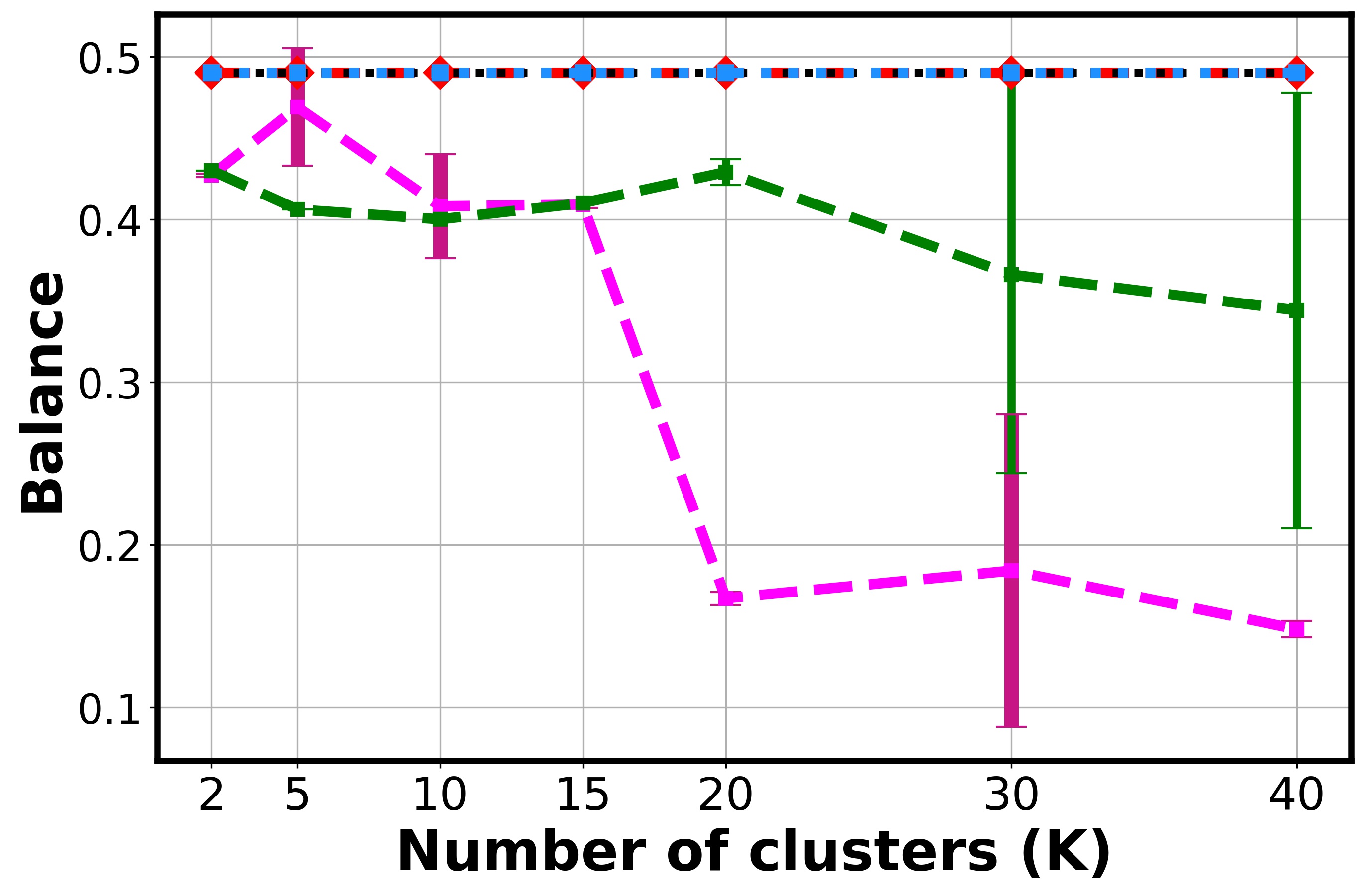}&\includegraphics[width=0.330\textwidth]{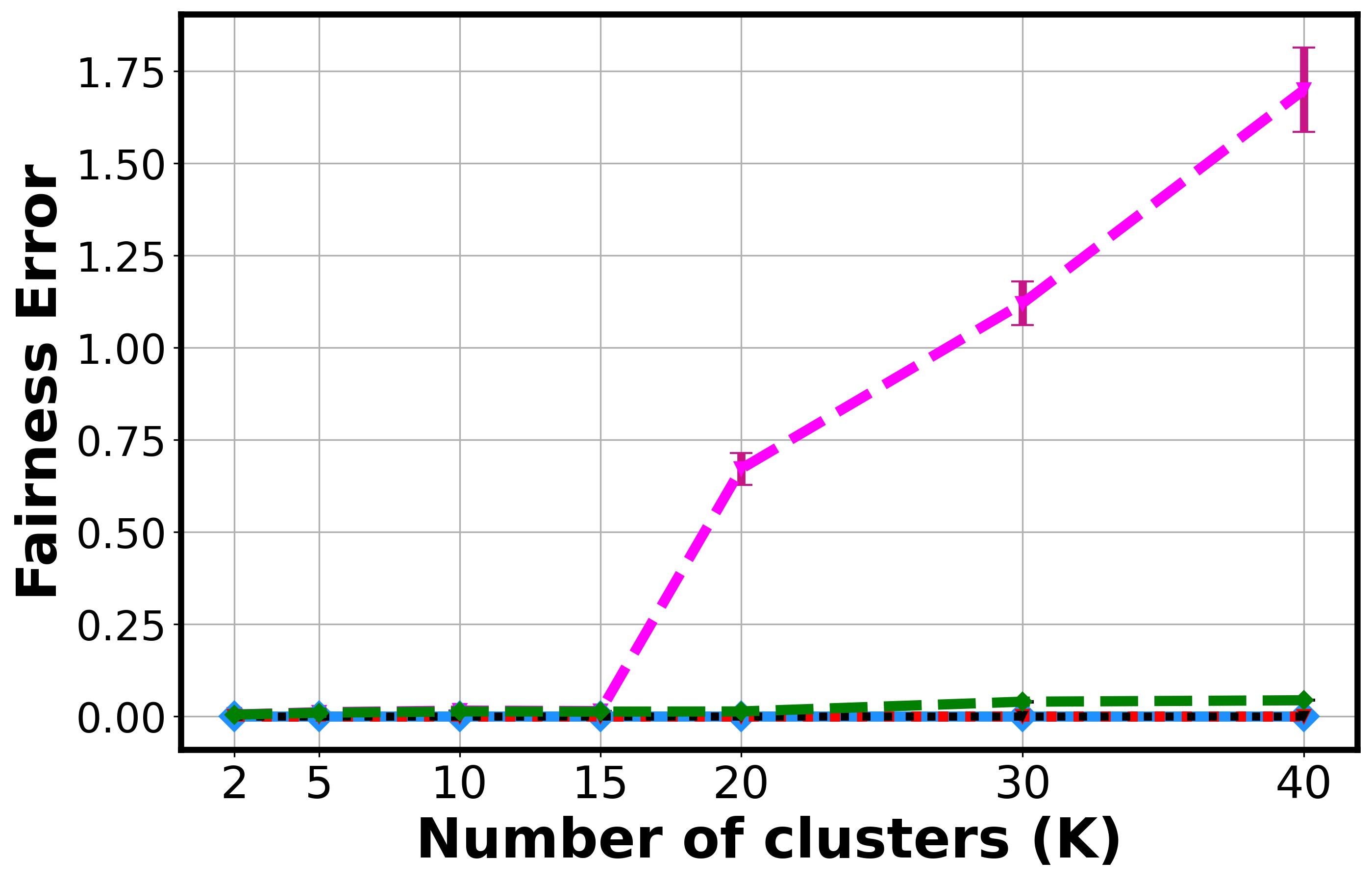}\\ 
\includegraphics[width=0.340\textwidth]{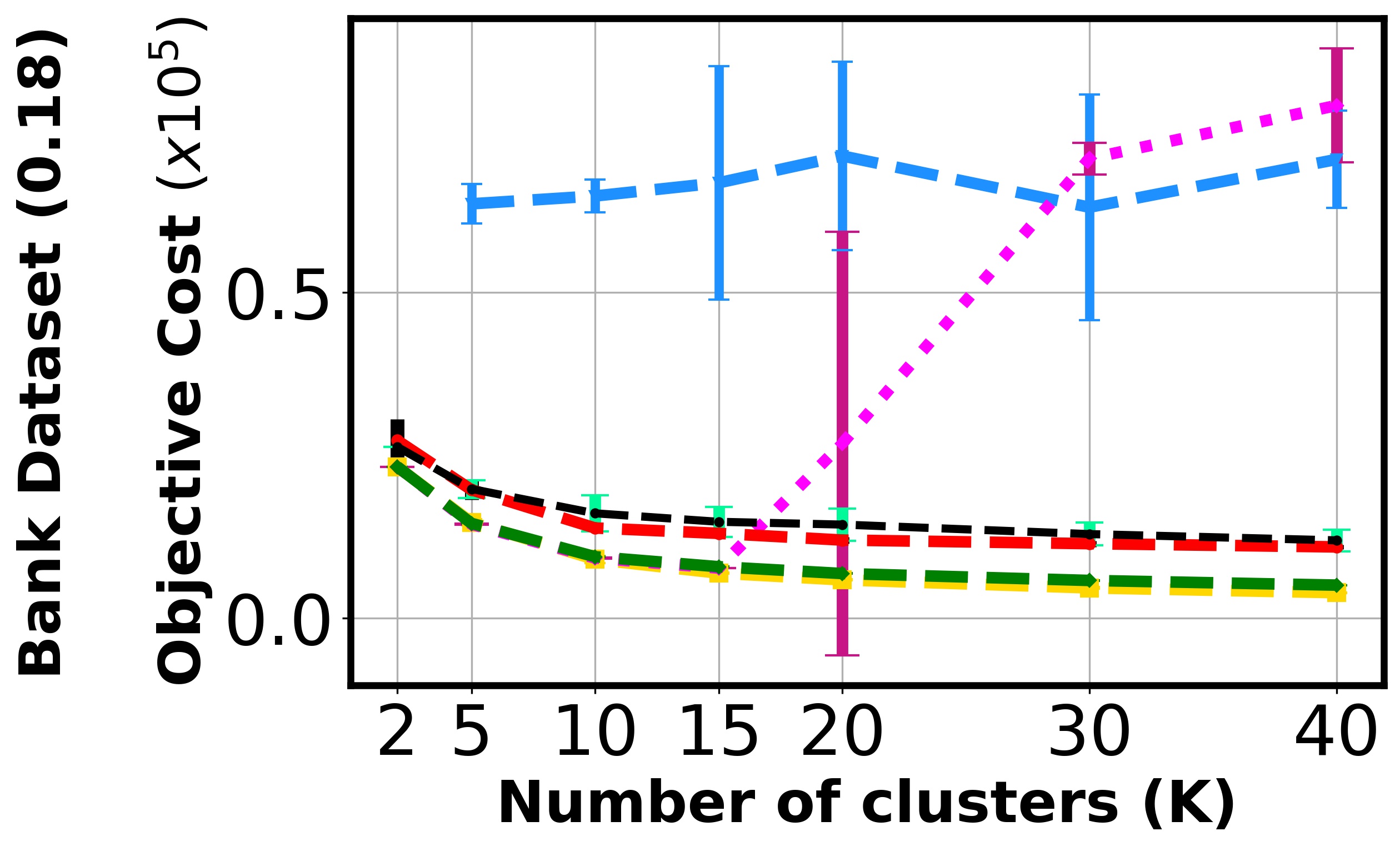}&\includegraphics[width=0.33\textwidth]{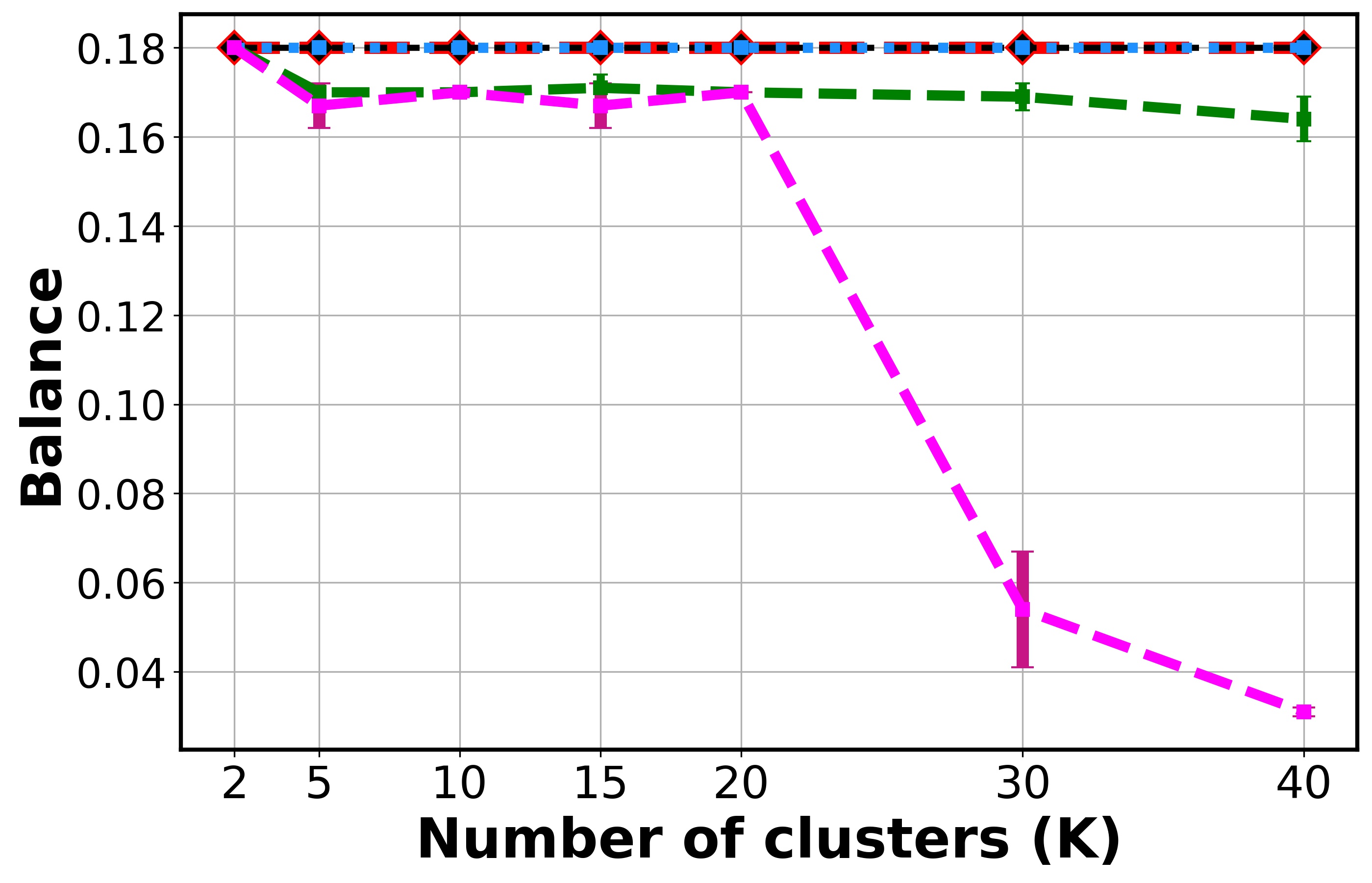}&\includegraphics[width=0.33\textwidth]{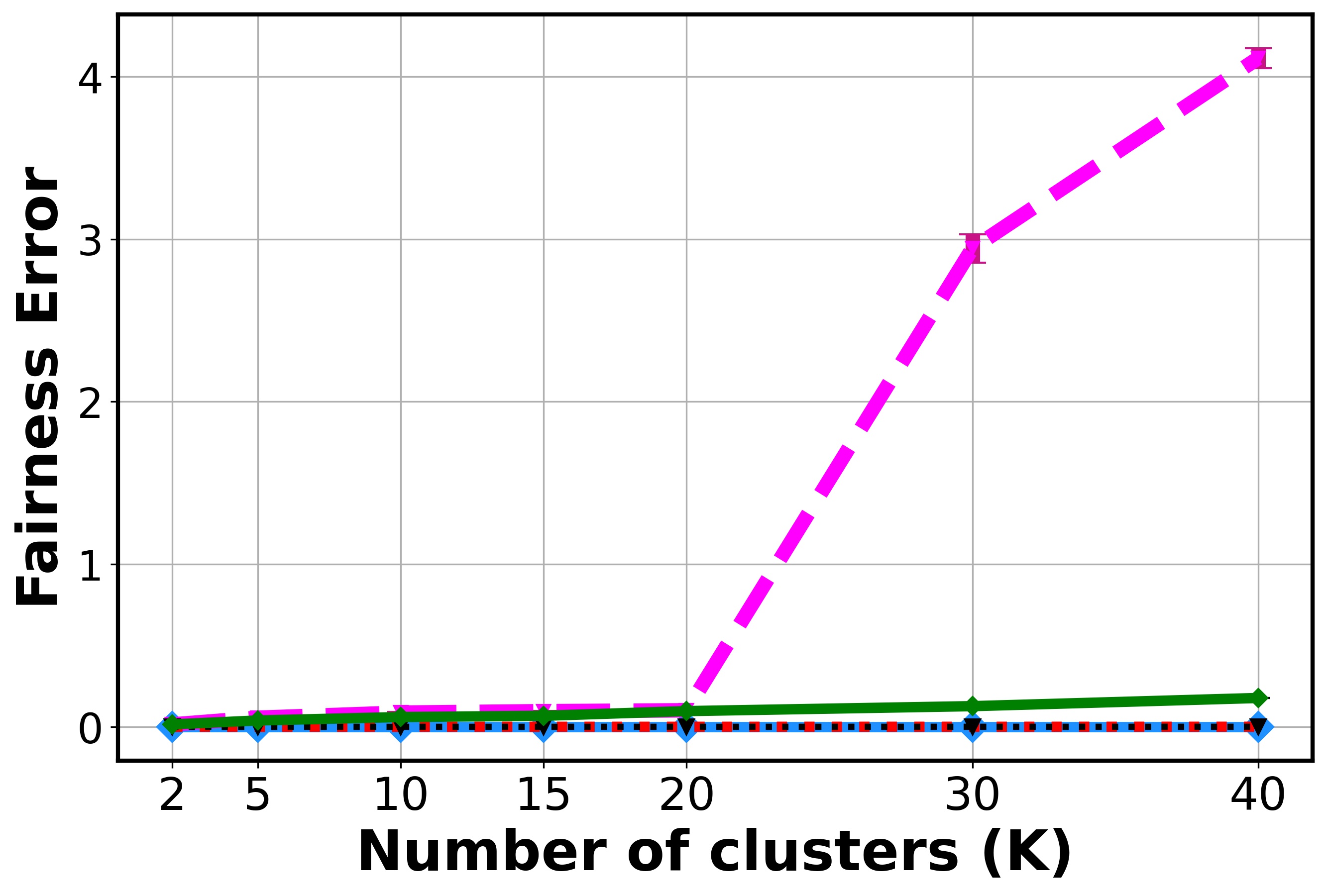}\\ 
\includegraphics[width=0.340\textwidth]{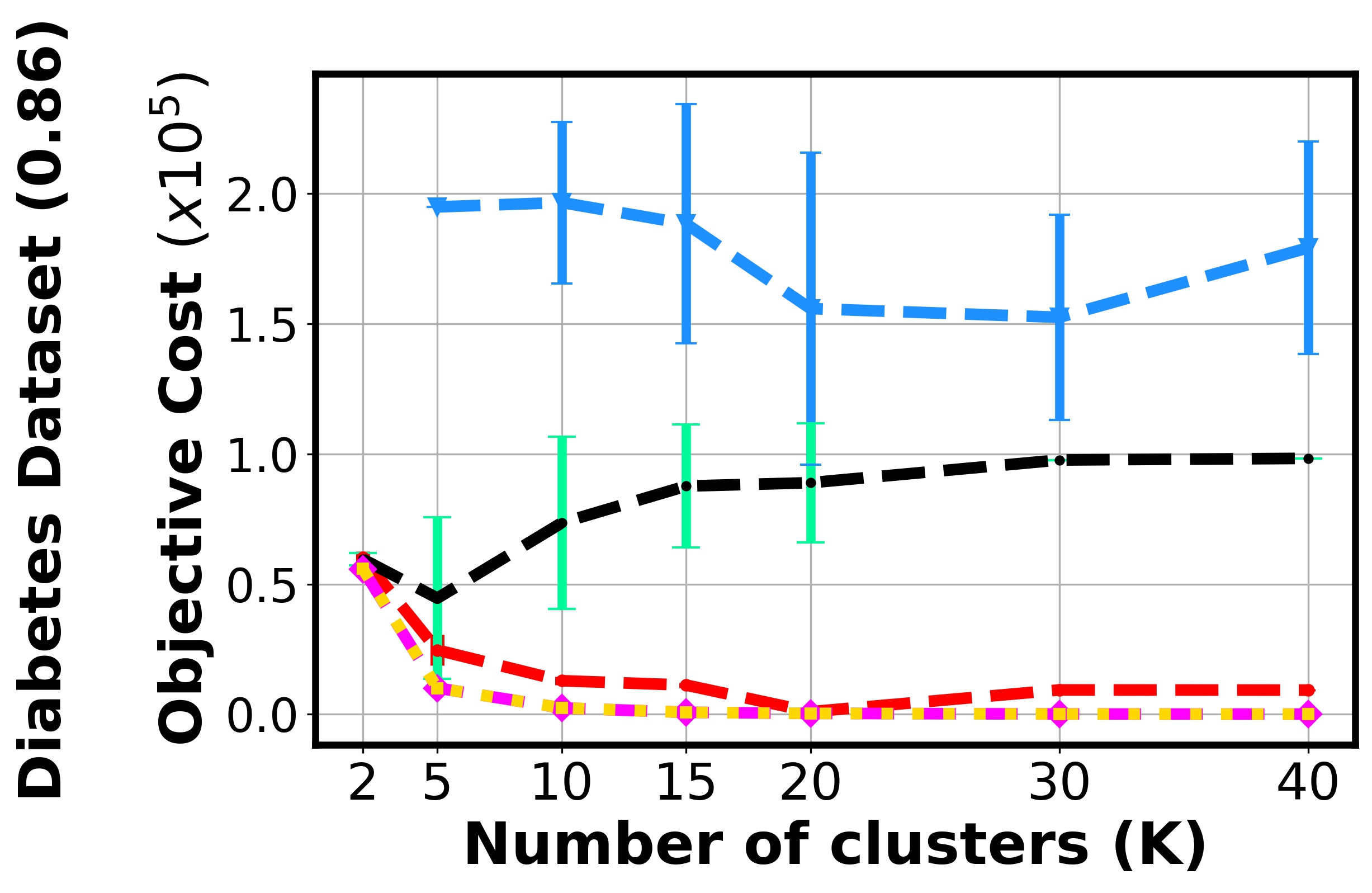}&\includegraphics[width=0.33\textwidth]{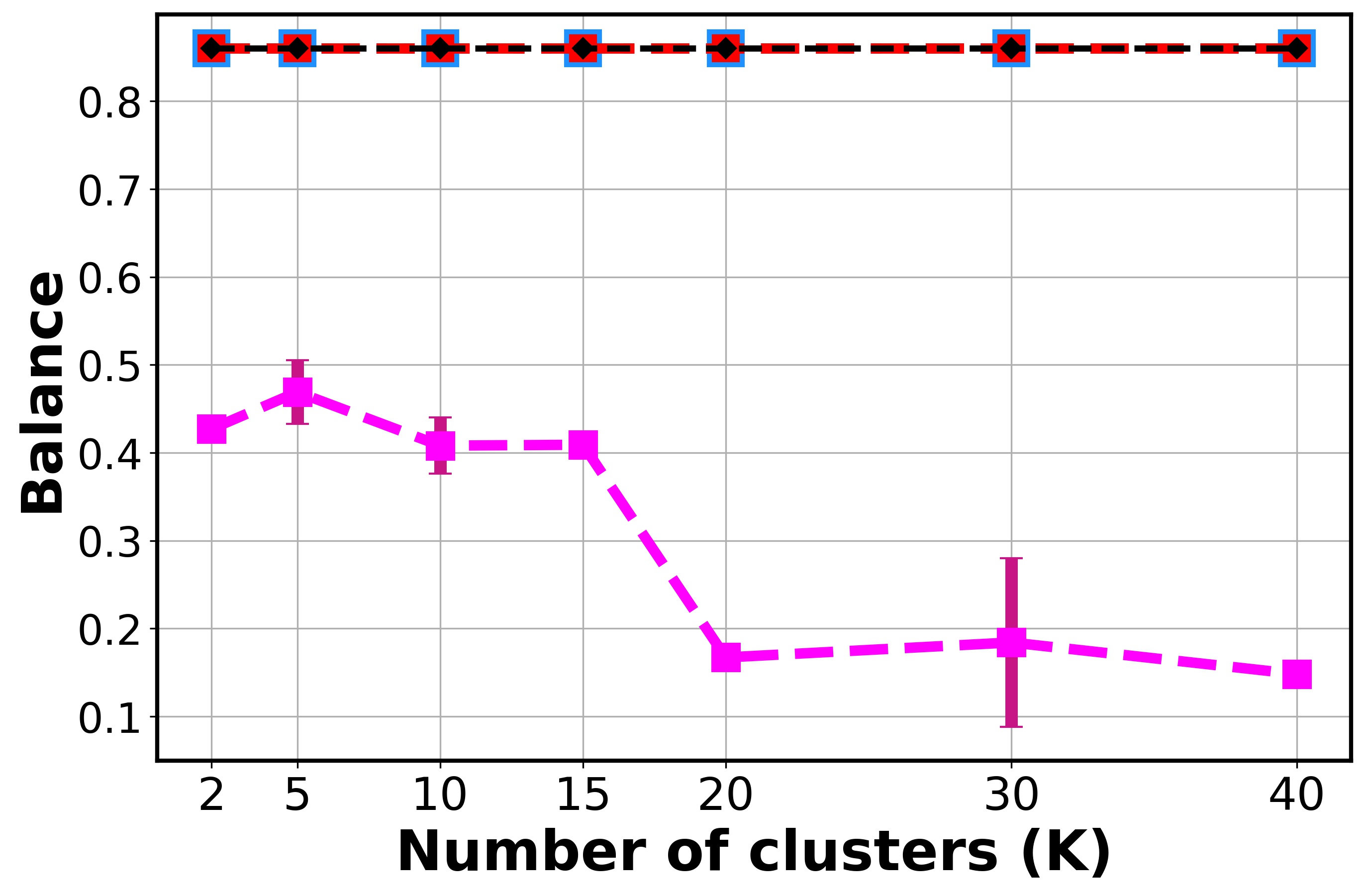}&\includegraphics[width=0.33\textwidth]{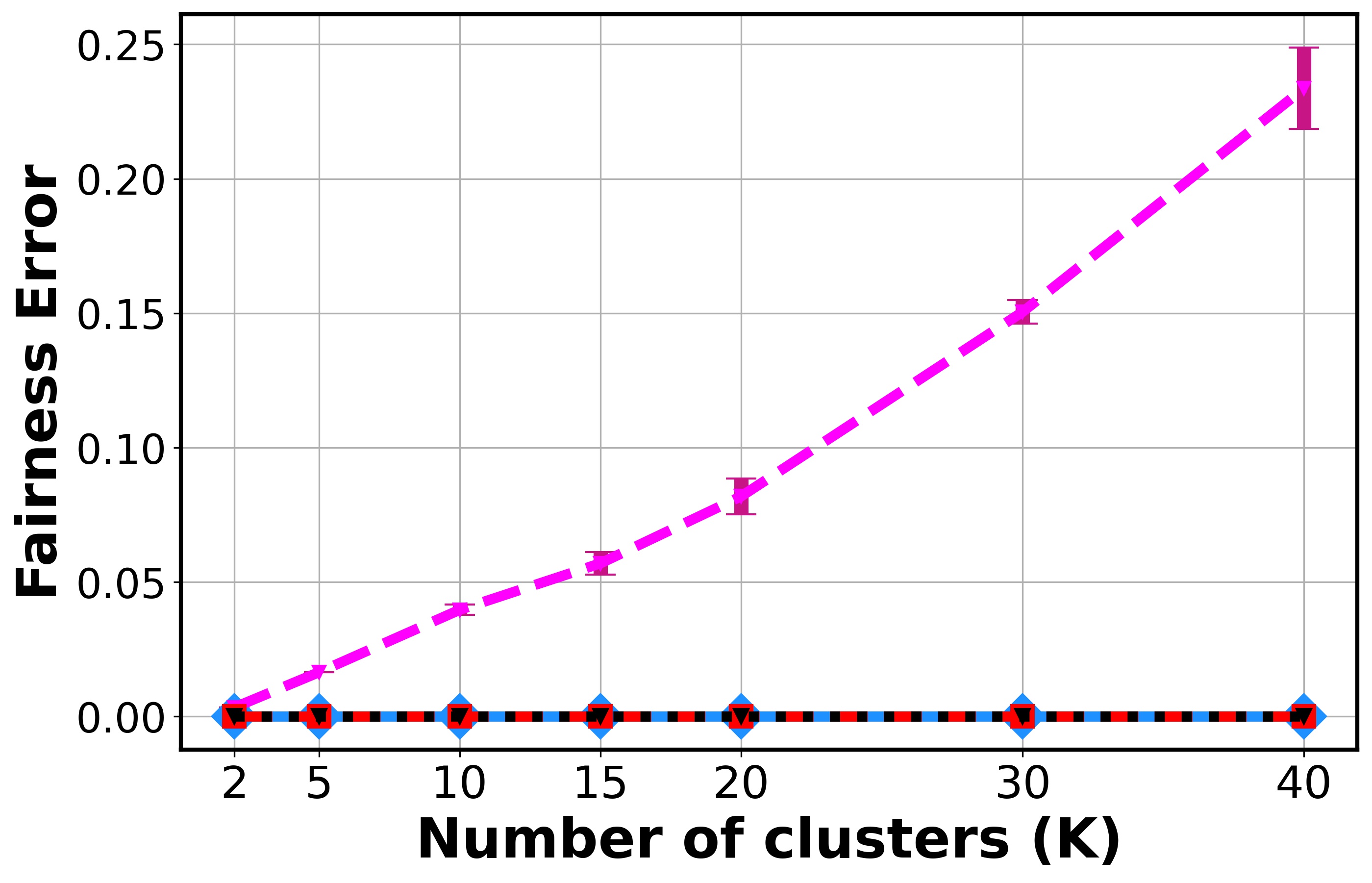}\\
\includegraphics[width=0.340\textwidth]{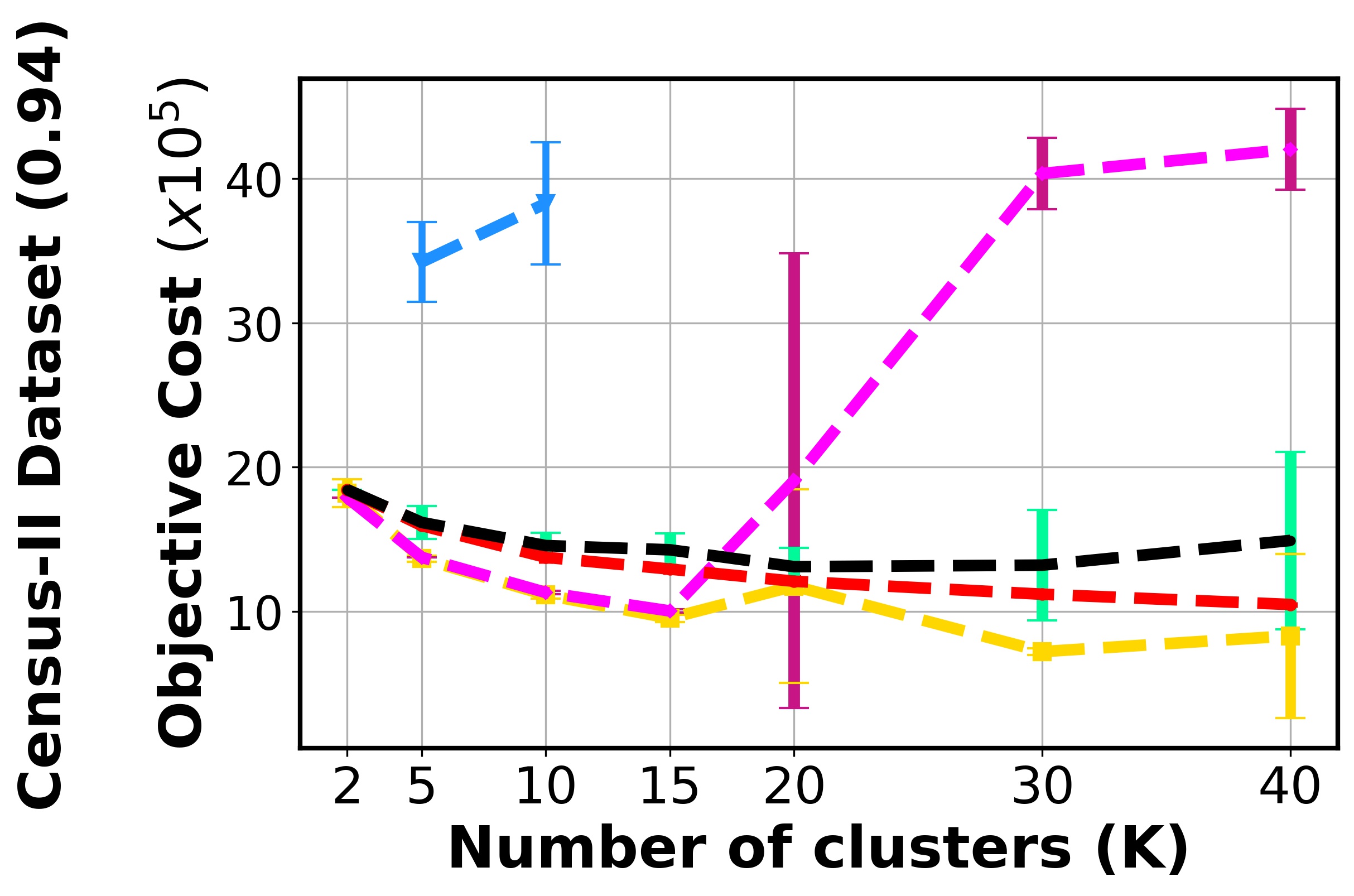}&\includegraphics[width=0.33\textwidth]{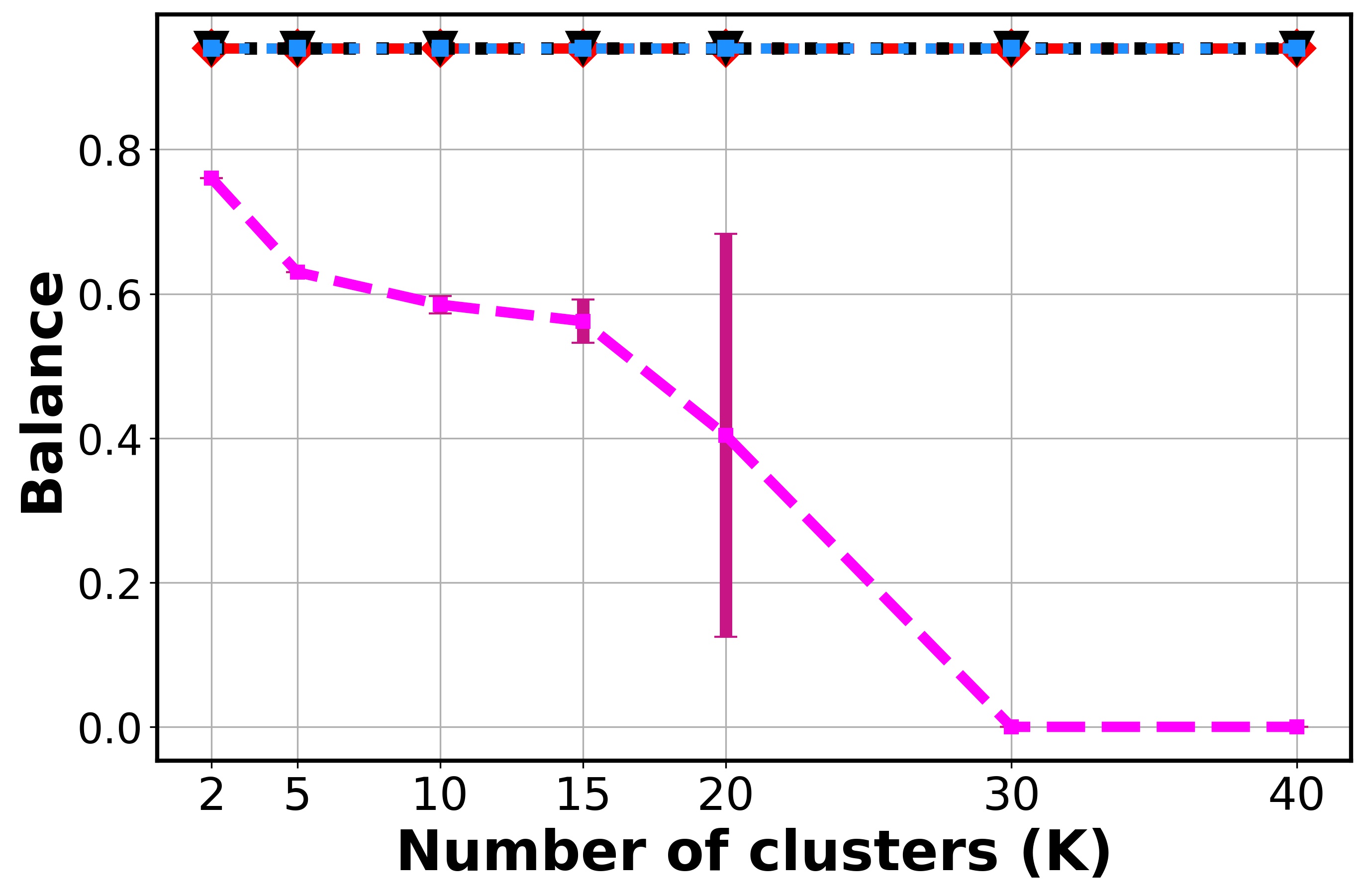}&\includegraphics[width=0.33\textwidth]{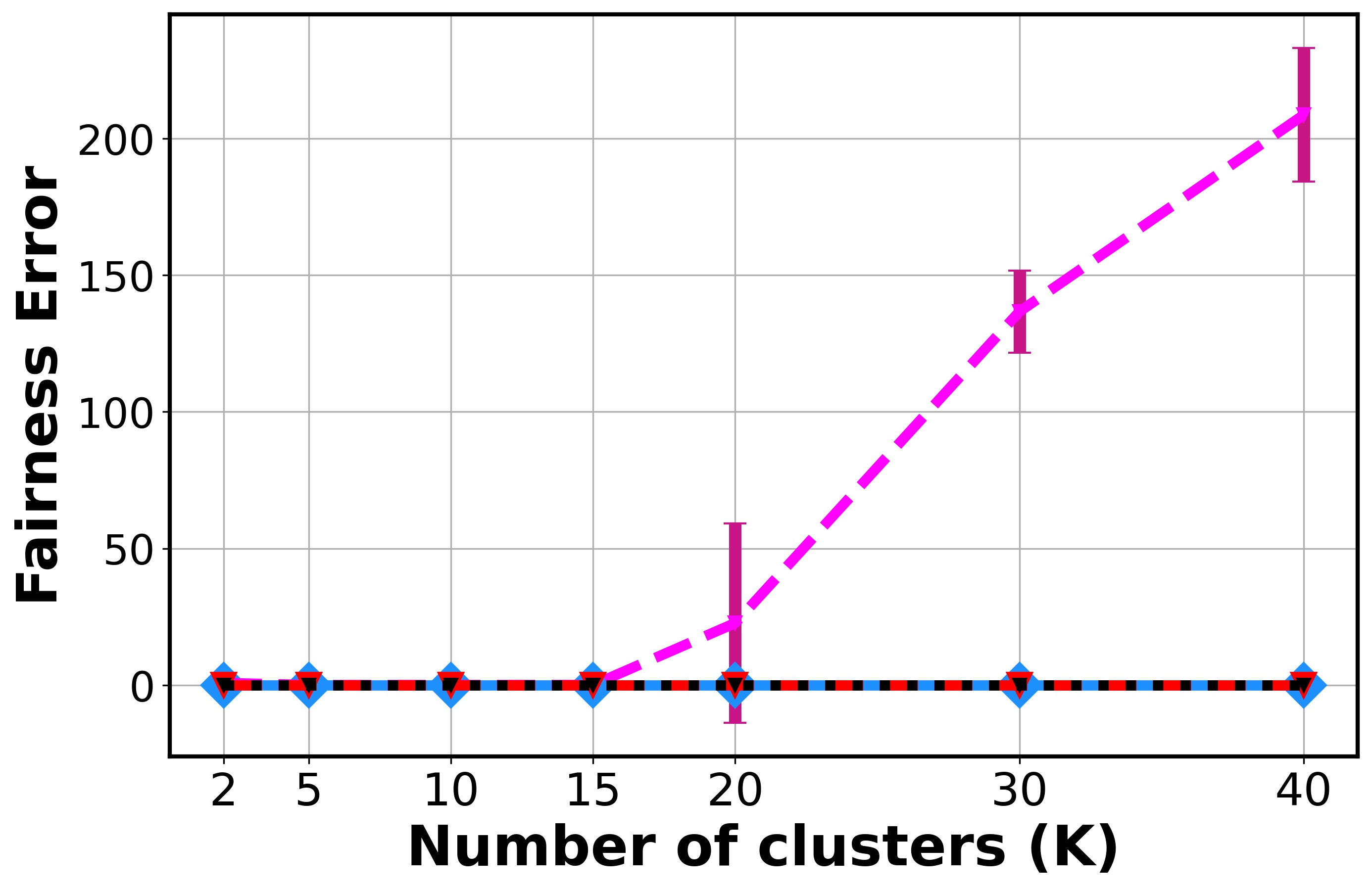}\\
\multicolumn{3}{l}{\includegraphics[width=\textwidth]{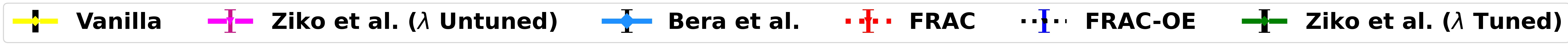}}
\end{tabular}
\caption{The line plot shows variation of evaluation metrics over varying number of cluster center for $k$-means setting. The hyper-tuned variation of Ziko et al. is available only for adult and bank dataset due to expensive computational requirements. For other datasets the hyper-parameter $\lambda$ is taken same as that is reported in Ziko et al. paper ie. $\lambda$=$9000$, $6000$, $6000$, $500000$ for Adult, Bank, Diabetes and Census II dataset respectively. On the similar reasons Bera et al. results for Census-II are evaluated for $k$=5 and $k$=10. (Best viewed in color)}
    \label{fig:VaryingK}
\end{figure}

In the $k$-median setting, it can been observed from the plots that \cite{backurs2019scalable} results in fair clusters with high objective cost. On the other hand \citeauthor{ziko2019variational} achieves better objective costs trading off for fairness. The $k$-median version of \RR, \RROE\  obtains the least fairness error and a \balance\ that is equal to the required dataset ratio ($\tau_{\ell}=\frac{1}{k}$) while having comparable objective cost.

\subsection{Comparison across varying number of clusters ($k$)}
\label{sec:varyExpCluster}
In this experiment, we measure the performance of the $k$-means version of the different approaches across all the datasets as the number of clusters increased from 2 to 40.  Fig. \ref{fig:VaryingK} summarises the results obtained for 2, 5, 10, 15, 20, 30, and 40 number of clusters on all datasets. It can be observed for all datasets that \cite{bera2019fair} maintain fairness constraints but with a much higher objective cost and standard deviation. For the largest dataset, Census-II, results are obtained for only $k=5$ and $k=10$ due to the large time complexity of solving the LP problem. Another interesting observation is that the LP-solver fails to return any solution for $k=2$. When we allow fine tuning of hyperparameter $\lambda$, it can be observed that the trend in the objective cost value for  \citeauthor{ziko2019variational} with increasing the number of clusters follows closely to that of the vanilla $k$-means objective cost on the Adult and Bank datasets. However, there is a significant deterioration in the \balance\ and fairness error measures. The results of \citeauthor{ziko2019variational} when using the $\lambda$ value reported in the paper for a particular $k$ for all the number of clusters show higher objective costs as well as fairness error. This indicates the sensitivity of the approach to the hyper-parameter $\lambda$. The proposed approach \RR\ gives the best result maintaining a relatively low objective cost without compromising fairness. Similarly, \RROE\ has marginal cost difference from \RR\ with same                        fairness guarantees over most of the datasets showing efficacy of the approach.

\begin{figure}[h!]
\centering
\begin{tabular}{@{}c@{}c@{}c@{}}
\includegraphics[width=0.340\textwidth]{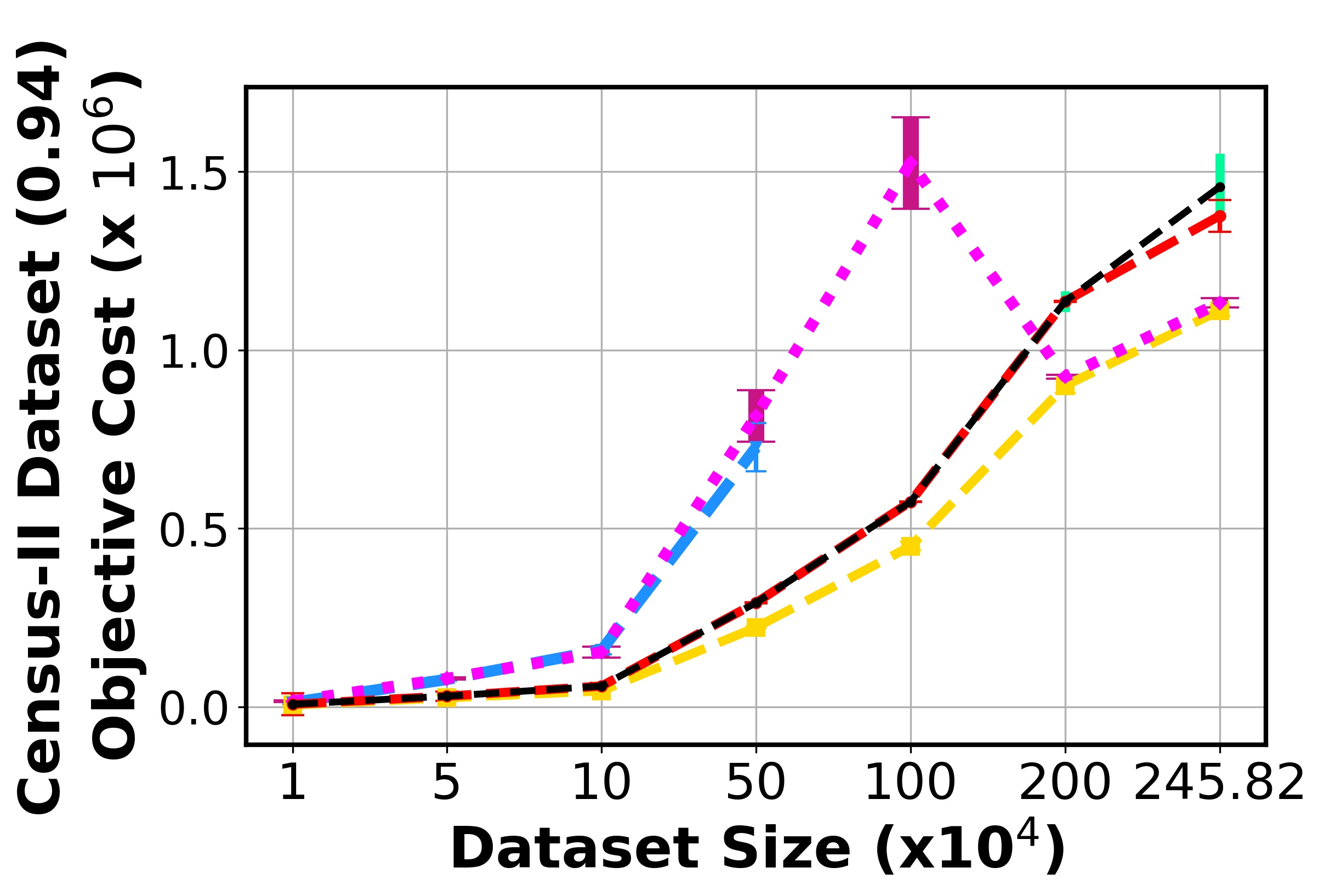}&\includegraphics[width=0.330\textwidth]{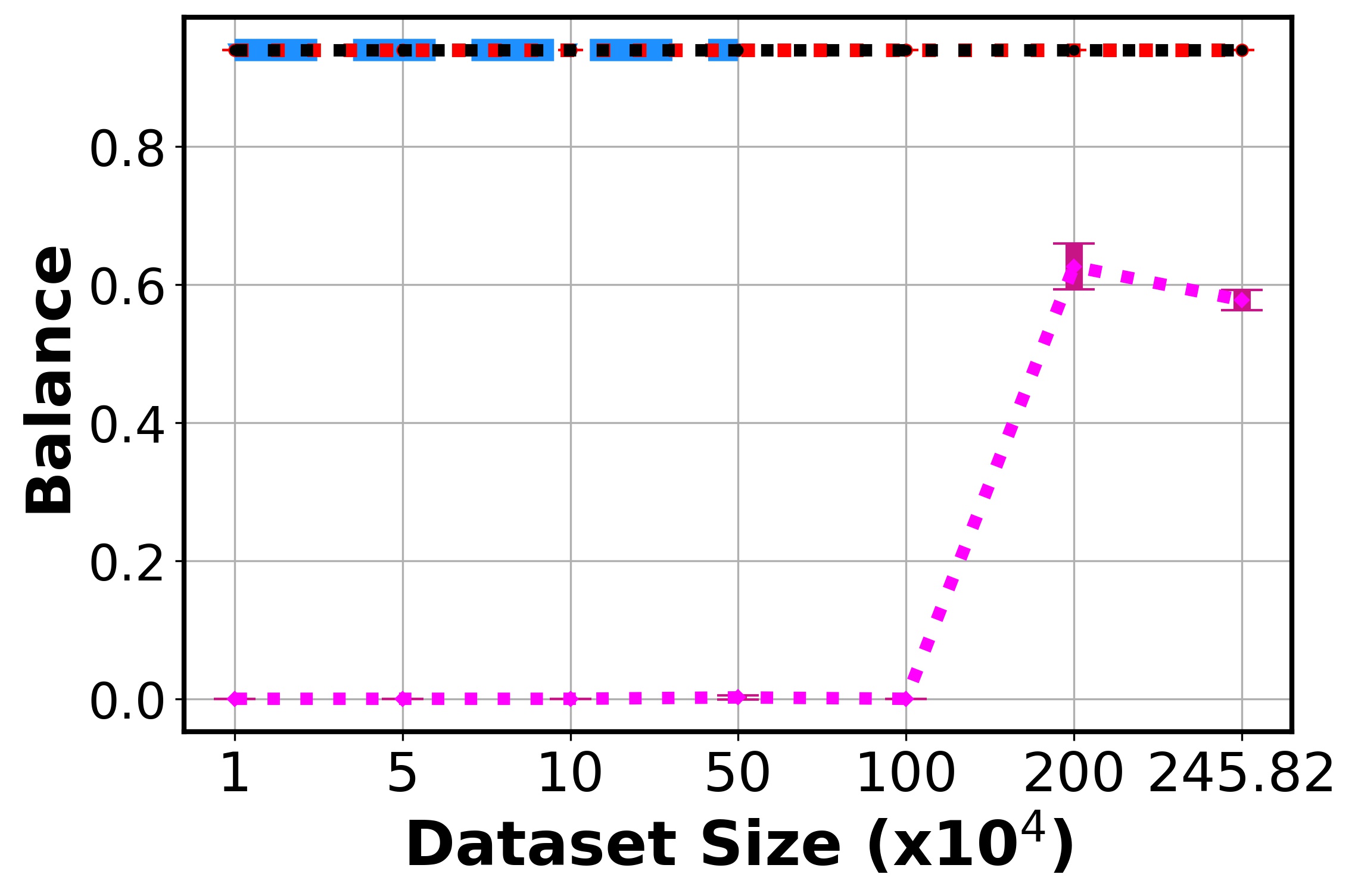}&\includegraphics[width=0.330\textwidth]{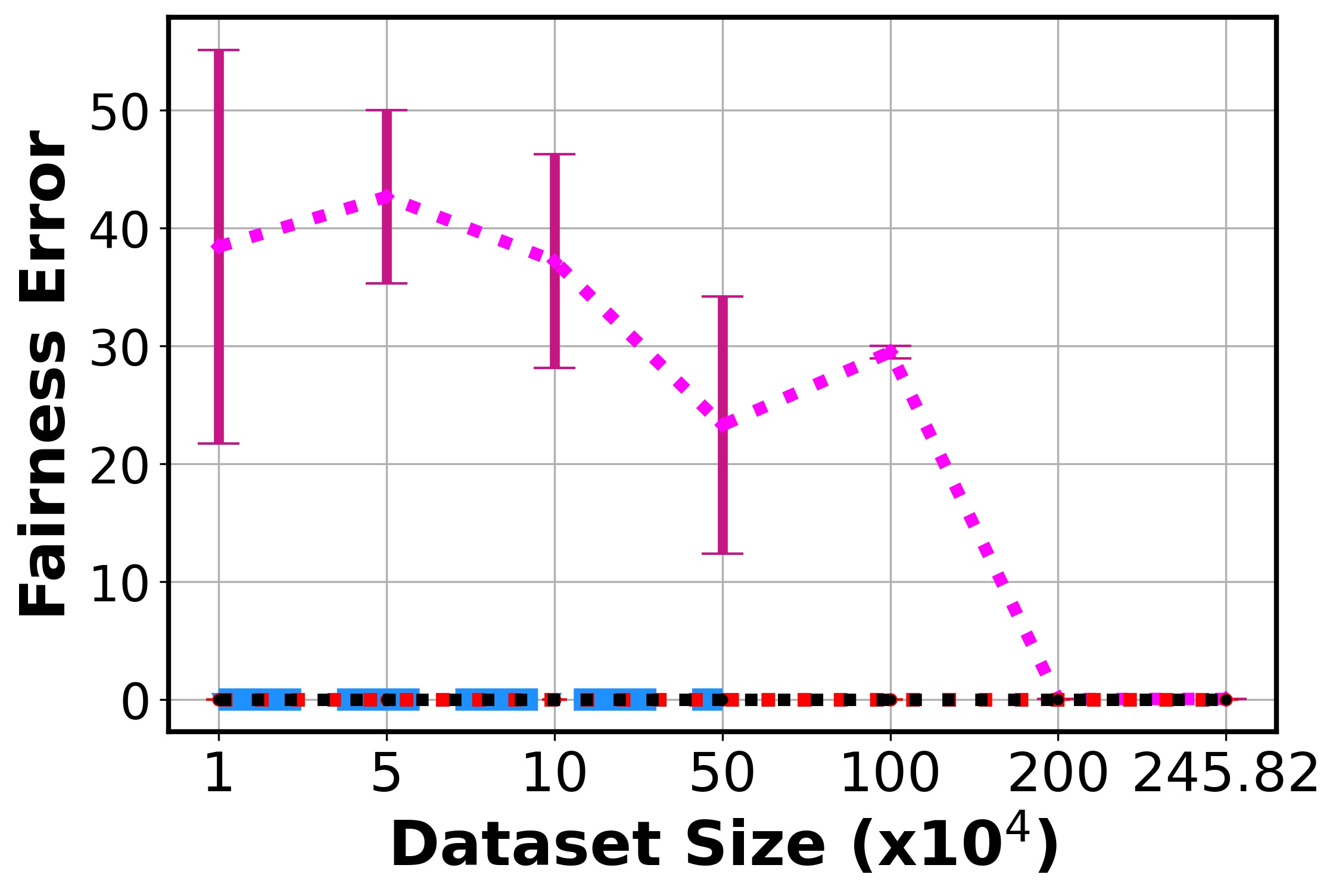}\\ 
\multicolumn{3}{l}{\includegraphics[width=\textwidth]{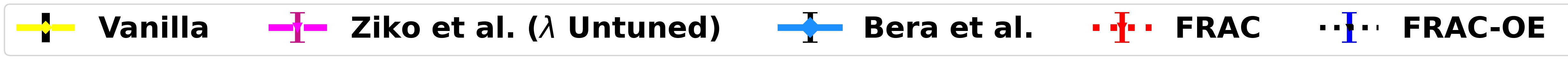}}
\end{tabular}
  \caption{The line plot shows variation of evaluation metrics over varying data set size for $k$(=$10$)-means setting. The hyper-parameter $\lambda$=$500000$ is taken same as that is reported in Ziko et al. paper for Census-II data set due to expensive computational requirements.  On the similar reasons Bera et al. results for Census-II are evaluated up to $50 \times 10^4$. The target balance for Census-II is evident from plot axes and complete data set size is $245.82 \times 10^4$. (Best viewed in color) }
    \label{fig:VaryingDatasetSize}
\end{figure}
\subsection{Comparison across varying data set sizes}
\label{sec:VaryDatasetSizeMetrics}
In this experiment, we measure the performance of $k$(=$10$)-means version of different approaches as number of points in data set increases in largest data set -- Census-II. Fig. \ref{fig:VaryingDatasetSize}. plots the results for evaluation metrics on data set size increasing from $10000$ to complete size of $2458285$ points. The plot clearly reveals that \RR, \RROE, and \cite{bera2019fair} are able to maintain strict fairness constraints. But  \cite{bera2019fair} is able to achieve fairness guarantees at higher objective cost. Due to high computation requirements for \cite{bera2019fair} (refer Section \ref{sec:runtimeAnalysis}), we limit the results up to $500,000$ number of points. For \cite{ziko2019variational}, owning to high tuning time (refer run time analysis section \ref{sec:runtimeAnalysis}) we use the hyper-parameter value for Census-II same as that reported in \cite{ziko2019variational} ie. $\lambda$=$500000$ for complete data set. Though initially \cite{ziko2019variational} is having performance close to other approaches but objective cost increases as data set size increases. One reason for this can be the hyper-parameter value used for approach. It may also be noted that, as the data set size reaches to completion, the objective cost improves to that of vanilla clustering but this comes at significant deterioration in fairness metrics. Both \balance\ and fairness error is quite far from the required target of $0.94$ and $0.0$ respectively. On the other hand  our proposed algorithms \RR\ and \RROE\ achieves strict fairness guarantees with slight increase in objective cost from vanilla clustering. Among \RR\ and \RROE, both have marginal difference in objective cost.

\subsection{ Additional Analysis on Proposed Algorithms    }
In this section we perform additional study on \RR\ and \RROE\ to illustrate their effectiveness.

\subsubsection{\RR\ vs \RROE}
\label{FRACvsFRACOE_ablation}
While \RR\ uses round-robin allocation after every clustering iteration, \RROE\ applies the round-robin allocation only at the end of clustering. Both the approaches will result in a fair allocation, but might exhibit different objective costs. We conduct an experiment under the $k$-means setting with $k=10$ to study the difference in the objective costs for the two approaches. Like other experiments, we conduct this experiment over ten independent runs and plot the mean objective cost (line) and standard deviation (shaded region) at each iteration over different runs. The plots in Fig. \ref{fig:costVariation} indicates that \RR\ has a lower objective cost at convergence than \RROE. The plot for \RROE\ follows the same cost variation as that of vanilla $k$-means in the initial phase, but at the end there is a sudden jump that overshoots the cost of \RR\ (to accommodate fairness constraints). Thus, applying fairness constraints after every iteration is better than applying it only once at the end. The plot also helps us experimentally visualize the convergence of both \RR\ and \RROE\ algorithms. It may be observed that the change in objective cost becomes negligible after a certain number of iterations. 

\begin{figure}[ht!]
\centering
\begin{tabular}{@{}c@{}c@{}c@{}}
\includegraphics[width=0.50\textwidth]{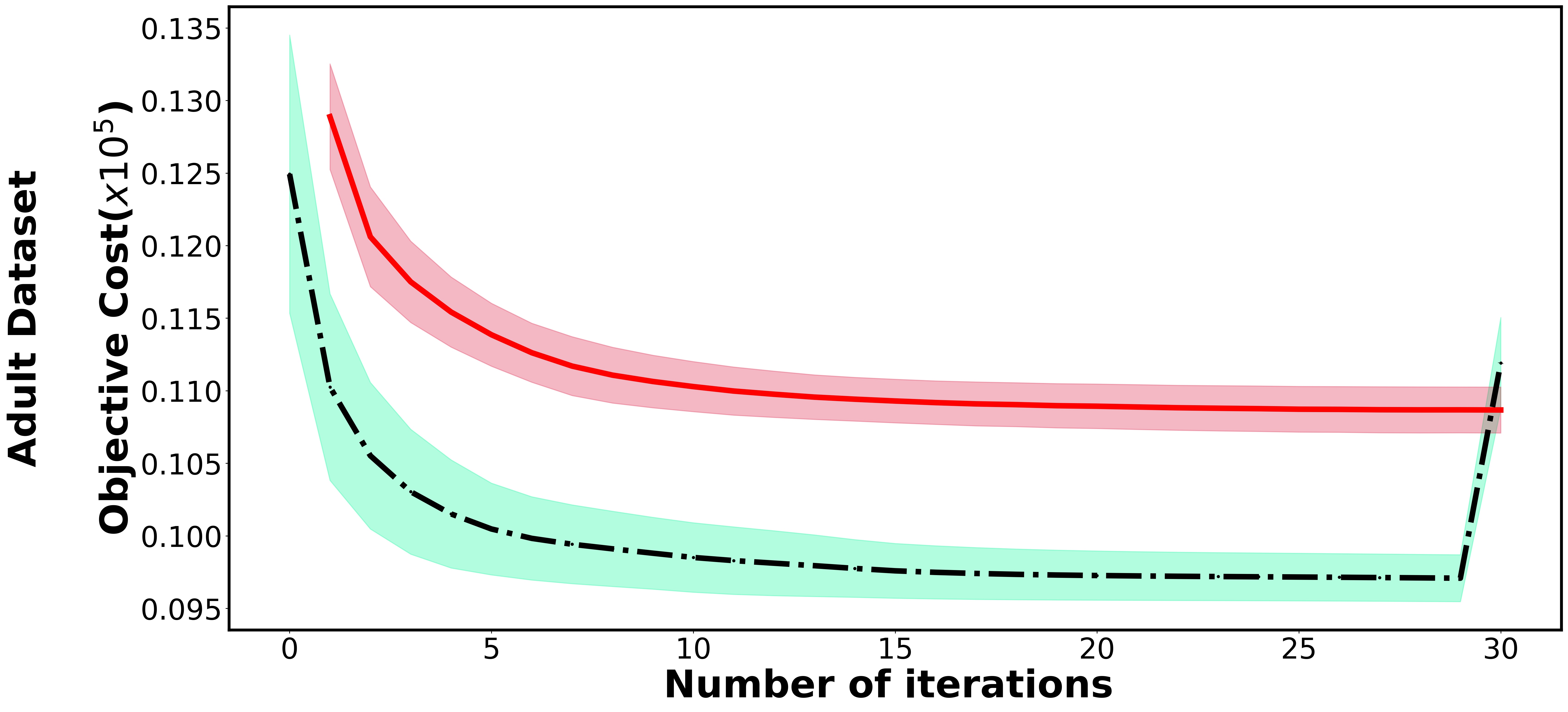}&\includegraphics[width=0.50\textwidth]{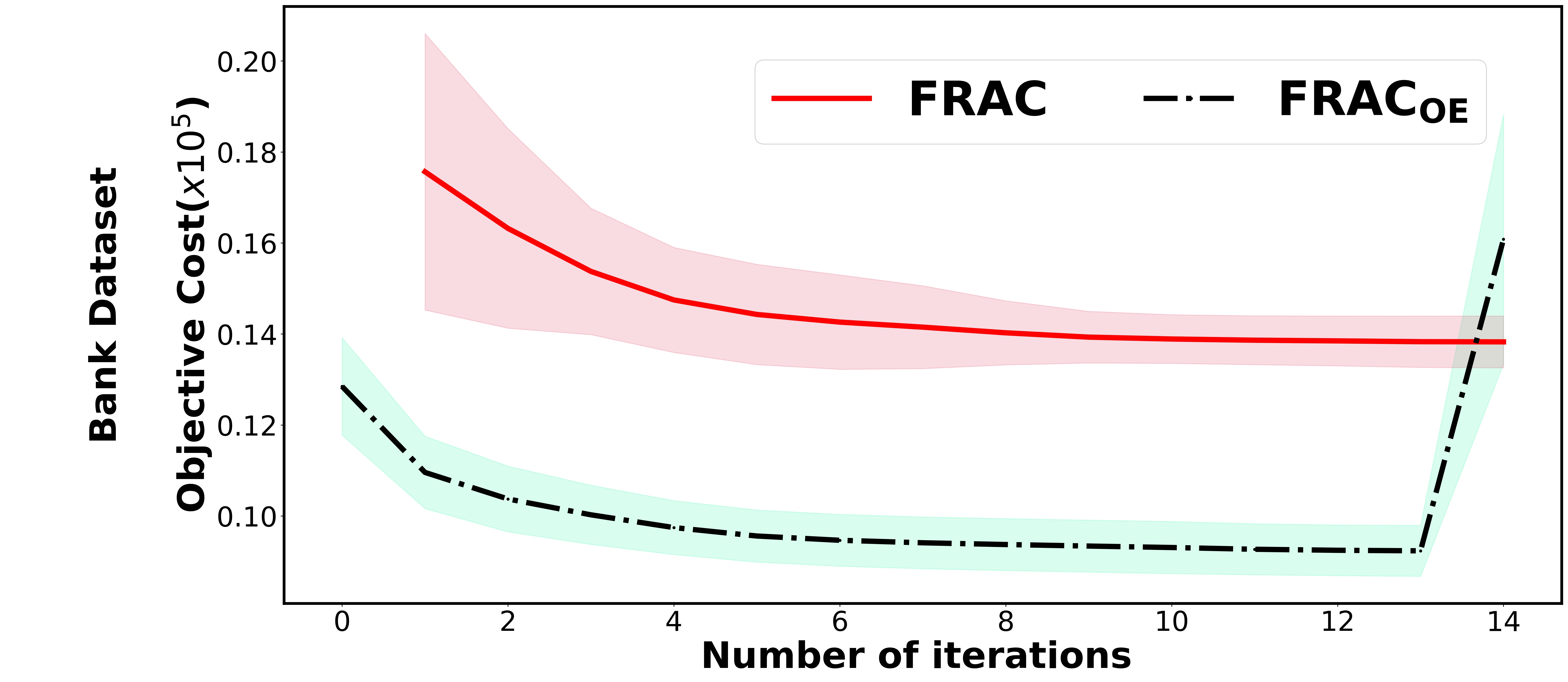}\\
\includegraphics[width=0.50\textwidth]{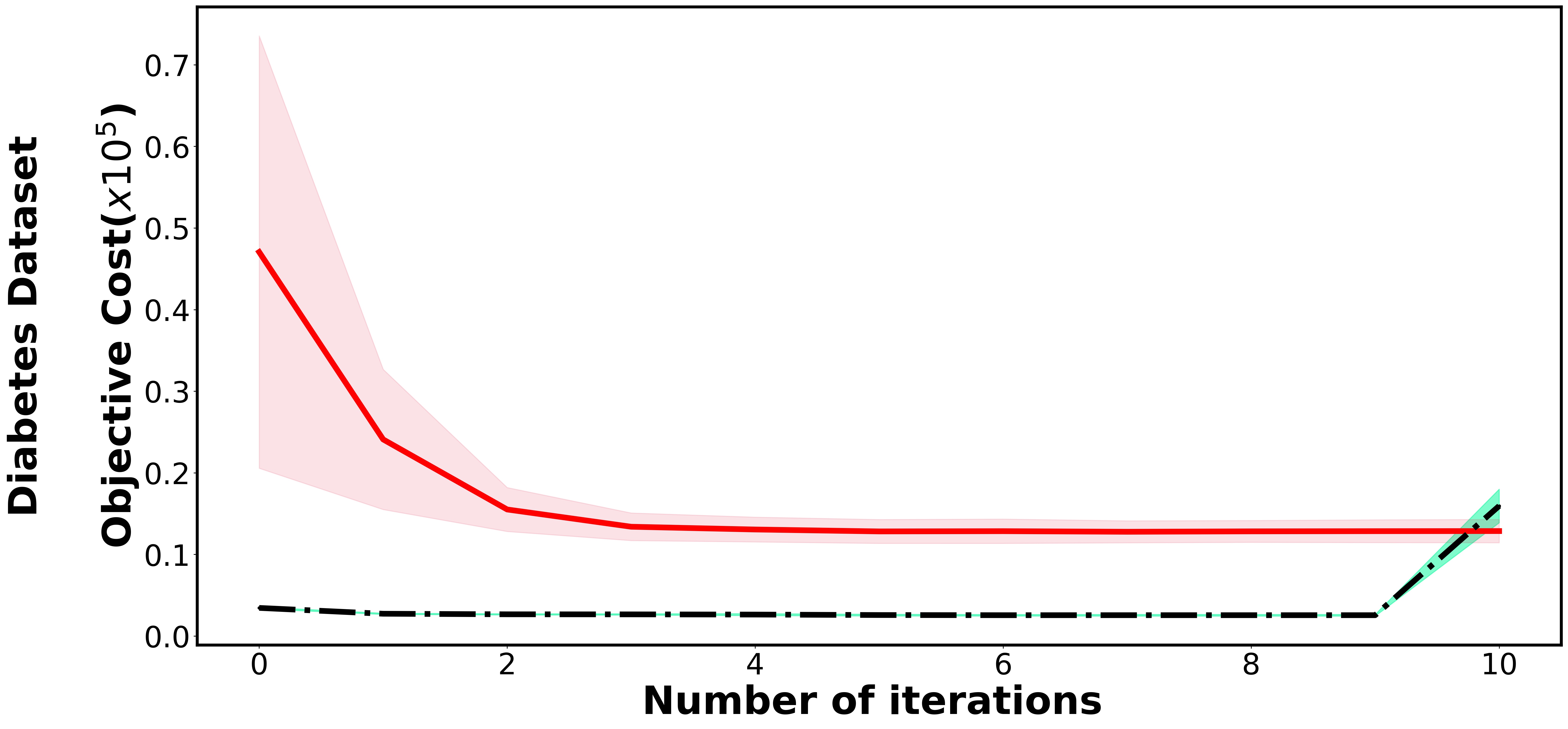}&\includegraphics[width=0.50\textwidth]{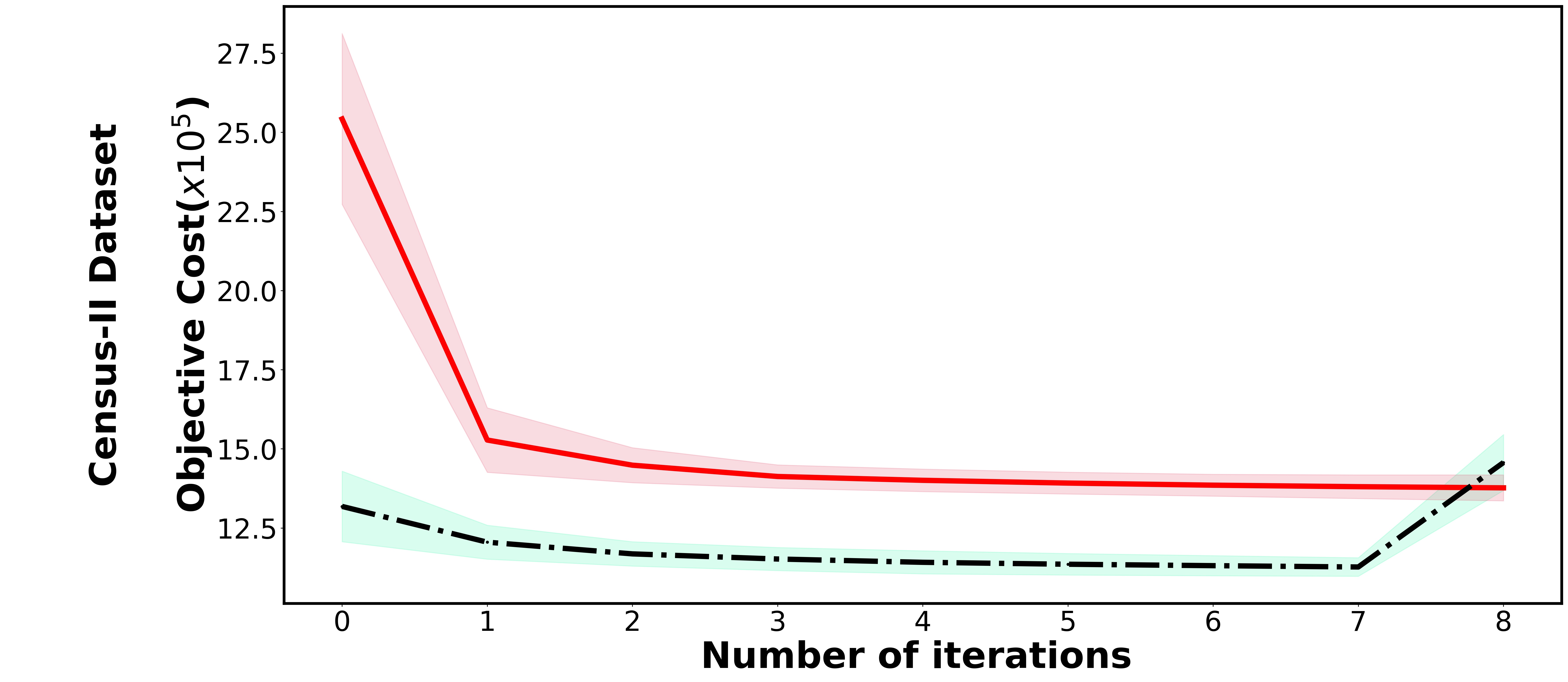} \\
\end{tabular}
  \caption{The cost variation over the iterations for different approaches in $k$-mean setting is plotted for $k$=10.}
    \label{fig:costVariation}
\end{figure}

\subsubsection{Impact of order in which the centers pick the data points }
\label{sec:CenterInvariant}
\RR\ assumes an arbitrary order of the centers for allocating data points at every iteration. We verify if the order in which the centers pick the data points impacts the clustering objective cost. We vary the order of the centers picking the data points for the $k$-mean clustering version with $k=10$. We report the objective cost variance computed across 100 permutations of the ten centers. Applying the permutations at every iteration in \RR\ is an expensive proposition. Hence we restrict the experiment to the \RROE\ version. The variance of the 100 final converged clustering objective costs (averaged over ten trials) is presented in Fig. \ref{fig:Permu} (a). It is evident from the plot that the variance is consistently extremely small for all datasets. Thus, we conclude that \RROE\ (and \RR\ by extension) is invariant to the order in which the centers pick the data points.

\begin{figure}
\centering
\begin{subfigure}[b]{0.5\linewidth}
\includegraphics[width=\textwidth]{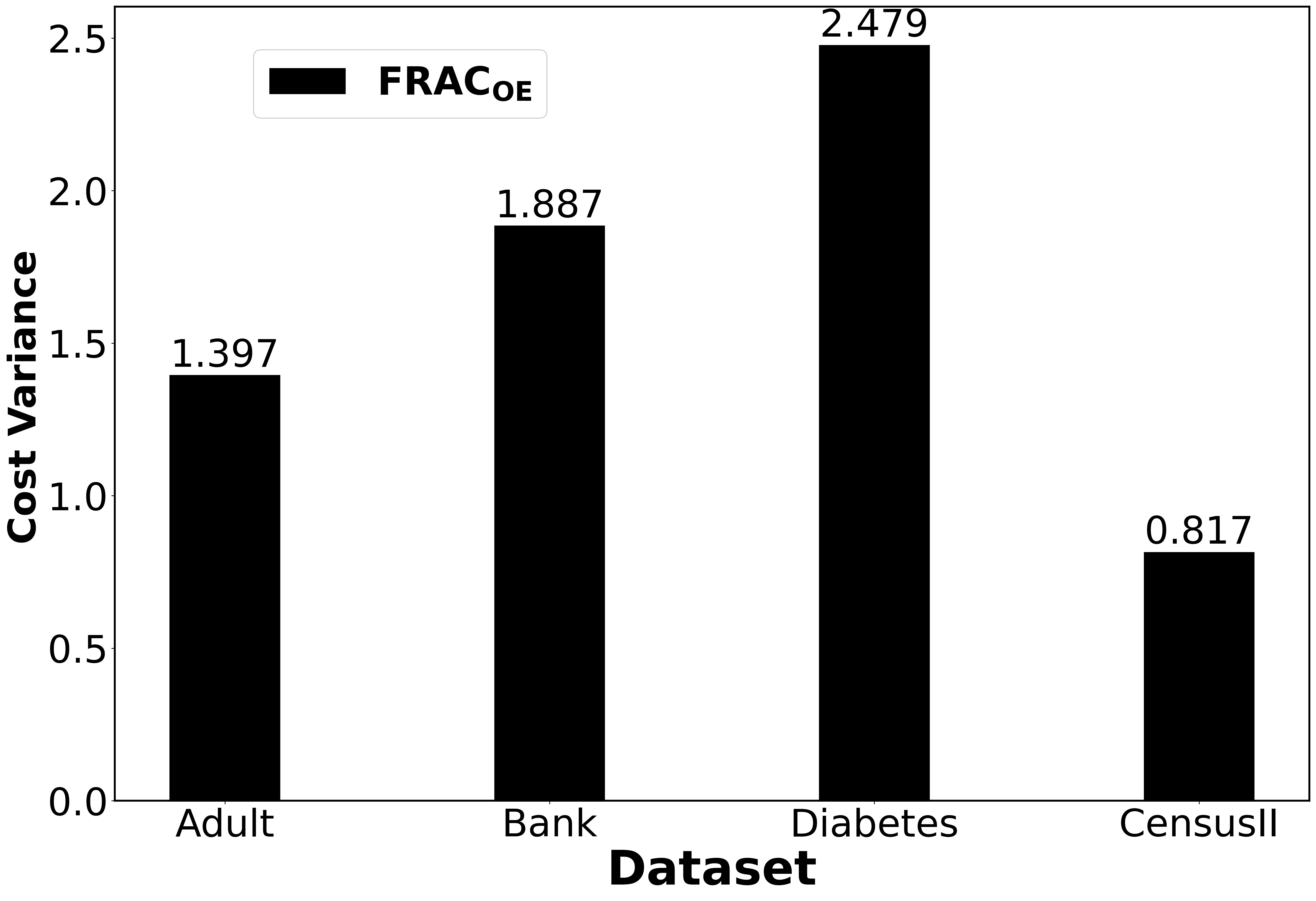}
  \caption{}
  \label{fig:sub1_permu}
\end{subfigure}%
\begin{subfigure}[b]{0.5\linewidth}
 \includegraphics[width=\textwidth]{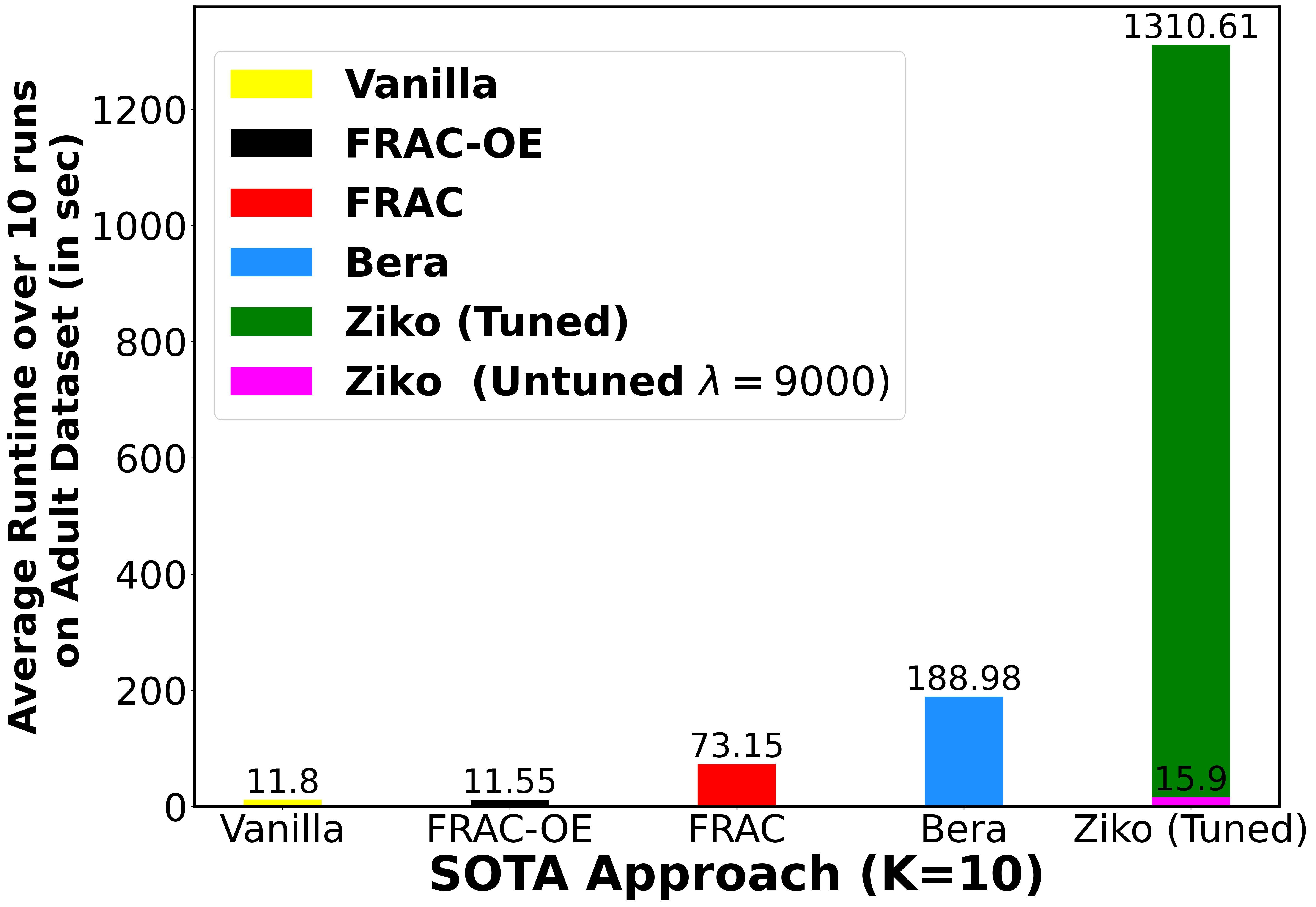}
  \caption{}
  \label{fig:sub2_permu}
\end{subfigure}
\caption{(a) Bar plot shows the variance in objective cost over different 100 random permutations of converged centers returned by standard unfair $k$-means clustering in \RROE. (b) $k$-means runtime analysis of different SOTA approaches on Adult dataset for $k$=10.}
    \label{fig:Permu}
\end{figure}

\subsubsection{Comparison for \tfair\ on fixed number of clusters($k$)}
\label{sec-tauRatio}
All the experiments till now considered the \balance\ to be same as the dataset ratio ($\tau_{\ell} = \frac{1}{k}$). But \RR\ and \RROE\ can be used to obtain any desired \tfair\ fairness constraints other than dataset proportion. The results for other $\tau$ vector values on $k$=$10$ number of clusters are reported in Table \ref{tab:tau-table}. We compare the performance of the proposed approach against \citeauthor{bera2019fair} that also allows for a desired \tfair\ fairness in a restrictive manner. 
\citeauthor{bera2019fair} reduces the degree of freedom  using $\delta$ parameter that controls the lower  and upper bound on number of points needed in each cluster belonging to a protected group. Experimentally $\delta$ can take values only in terms of dataset proportion $r_\ell$ for protected group $\ell \in [m]$, i.e. with lower bound as $r_\ell (1-\delta)$ and upper bound as $\frac{r_\ell}{(1-\delta)}$ . Further $\delta$ needs to be same across all the protected groups making it infeasible to achieve different lower bound for each protected group. Thus \citeauthor{bera2019fair} cannot be used to have any general fairness constraints for each protected group and can act as baseline only for certain $\tau_{\ell}$ values.
 In Table \ref{tab:tau-table} we present results for the $\tau$ corresponding to $\delta$=$0.2,0.8$. Additionally, our algorithms can achieve any generalized $\tau$ vectors like $[0.25,0.12]$, which makes more sense in real-world applications like requiring at least $25\%$ and $12\%$ points in each cluster for males and females. The objective cost obtained by \RR\ and \RROE\ is close to \cite{bera2019fair} but, the work by \cite{bera2019fair} is extendible to multi-valued problem. 

\begin{table}[!ht]
\begin{adjustbox}{width=\columnwidth}
\begin{tabular}{@{}llllll@{}}
\toprule
\multicolumn{1}{c}{\multirow{2}{*}{\textbf{Dataset}}} & \multicolumn{1}{c}{\multirow{2}{*}{\textbf{\begin{tabular}[c]{@{}c@{}}$\tau$- vector\end{tabular}}}} & \multicolumn{1}{c}{\multirow{2}{*}{\textbf{\begin{tabular}[c]{@{}c@{}}\RR\\ \\  Objective Cost\end{tabular}}}} & \multicolumn{1}{c}{\multirow{2}{*}{\textbf{\begin{tabular}[c]{@{}c@{}}\RROE\\  \\ Objective Cost\end{tabular}}}} & \multicolumn{2}{c}{\textbf{Bera et al.}} \\ \cmidrule(l){5-6} 
\multicolumn{1}{c}{} & \multicolumn{1}{c}{} & \multicolumn{1}{c}{} & \multicolumn{1}{c}{} & \multicolumn{1}{c}{\textbf{\begin{tabular}[c]{@{}c@{}}$\delta$\\  Value\end{tabular}}} & \multicolumn{1}{c}{\textbf{Objective Cost}} \\ \midrule
\multirow{3}{*}{\textbf{Adult}} & \textless 0.133, 0.066 \textgreater{} & 9804.65 ±  221.05 & 9616.51 ± 111.49 & 0.8 &  9515.30 ± 19.94 \\
 & \textless 0.535, 0.264 \textgreater{} & 10010.39 ±  211.27 & 10011.78 ± 239.73  & 0.2 &  9788.73 ± 23.32\\
 & \textless 0.25, 0.12 \textgreater{} & 9870.93  ±  261.24 & 9714.06 ± 157.45  & \multicolumn{2}{c}{\textit{Cannot be computed}} \\ \midrule
\multirow{3}{*}{\textbf{Bank}} & \textless 0.121, 0.056, 0.022 \textgreater{} & 9210.38 ±  640.76 & 9043.51 ± 461.23 & 0.2 &  9588.30 ± 48.82 \\
 & \textless 0.485, 0.225, 0.089 \textgreater{} & 10982.63 ± 1228.28 & 11317.61 ± 1310.32  & 0.8 &  8472.65 ± 37.30 \\
 & \textless 0.25, 0.10, 0.04 \textgreater{} & 9548.68 ± 540.86 & 9465.35 ± 476.88 & \multicolumn{2}{c}{\textit{Cannot be computed}} \\ \bottomrule
\end{tabular}
\end{adjustbox}
\caption{$k$-means objective cost for \tfair\ for adult and bank dataset  for $k$=$10$ clusters. }
\label{tab:tau-table}

\end{table}

\subsection{Run-time Analysis}
\label{sec:runtimeAnalysis}
Finally, we compare the run-time of the different approaches for the $k$(=10)-means clustering versions on the Adult dataset. The average run-time over 10 different runs is reported in Fig. \ref{fig:Permu} (b). It can be clearly seen that the run-time of \RR\ is significantly better than the fair SOTA approaches. The run-time of \citeauthor{ziko2019variational} is quite high due to hyper-parameter tuning required to find the best suited $\lambda$ value. The run-time of \citeauthor{ziko2019variational} without hyper-parameter tuning is comparable to vanilla clustering. However, without hyper-tuning it has been observed from previous sections that \citeauthor{ziko2019variational}'s performance can deteriorate significantly on the fairness constraints. \RROE\ runtime has marginal difference from vanilla clustering runtime since \RROE\ applies a single round of fair assignment following vanilla clustering. \citeauthor{bera2019fair} requires double the time of \RR. In general, LP formulations to fair clustering are observed to have higher complexities. In contrast, \RR\ is able to achieve better objective costs and comparable fairness measures with significantly less complexity.

Motivated by \cite{runtimeBlackAnalysis}, we further study the runtime behaviour across varying number of datapoints and varying number of clusters. For the scalablity study, we perform the analysis using Census-II as it is largest dataset. 

\subsubsection{Runtime comparison with    number of cluster(k)}
\label{sec:runtimeVaryk}
In this study we conduct experiment to find the variation in runtime as number of clusters $k$ varies from $2$ to $40$. We observe the results for $2, 5, 10, 15, 20, 30$ and $40$. From the results summarized in Fig. \ref{fig:VaryingKRuntimeCensusII}, we can observe that \cite{bera2019fair} is having significantly high execution time. Thus, we limit the results upto $k$(=$5, 10$)-clustering. As pointed out in previous section \cite{bera2019fair}, LP fails to converge for $k$=$2$. 

\begin{figure}[h!]
\centering
\begin{tabular}{@{}c@{}}
\includegraphics[width=0.7\textwidth]{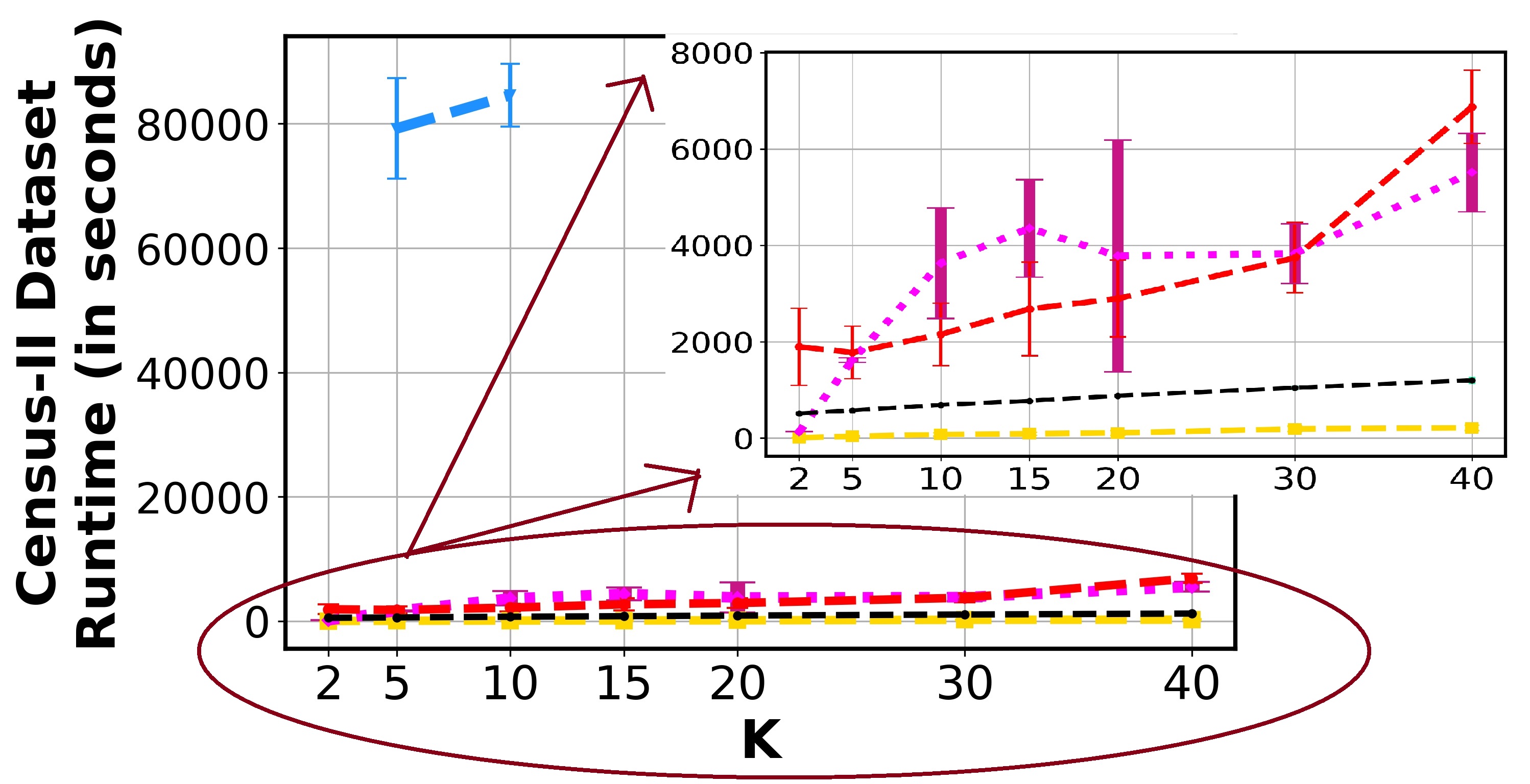}\\ 
\multicolumn{1}{l}{\includegraphics[width=\textwidth]{exp3_legend.png}}
\end{tabular}
  \caption{The line plot shows variation of runtime over varying number of clusters($k$) for $k$-means setting on complete dataset size. The hyper-parameter $\lambda$=$500000$ is taken same as that is reported in Ziko et al. paper for Census-II dataset due to expensive computational requirements.  On the similar reasons Bera et al. results for Census-II are evaluated for $k$=5 and $k$=10. For better visualization the results are zoomed out for approaches other than Bera et al. (Best viewed in color) }
    \label{fig:VaryingKRuntimeCensusII}
\end{figure}
We can clearly see from the plots that \RROE\ has runtime close to vanilla clustering. For \cite{ziko2019variational}, even in untuned version (using same hyper-parameter as reported in \citeauthor{ziko2019variational} paper) we still have runtime close to proposed \RR. Tuning the hyper-parameter will result in significant increase in overall runtime for the approach as observed in Section \ref{sec:runtimeAnalysis}.

\subsubsection{Runtime comparison across varying data set size}
\label{sec:runtimeVaryDatasetSize}

\begin{figure}[h!]
\centering
\begin{tabular}{@{}c@{}}
\includegraphics[width=0.7\textwidth]{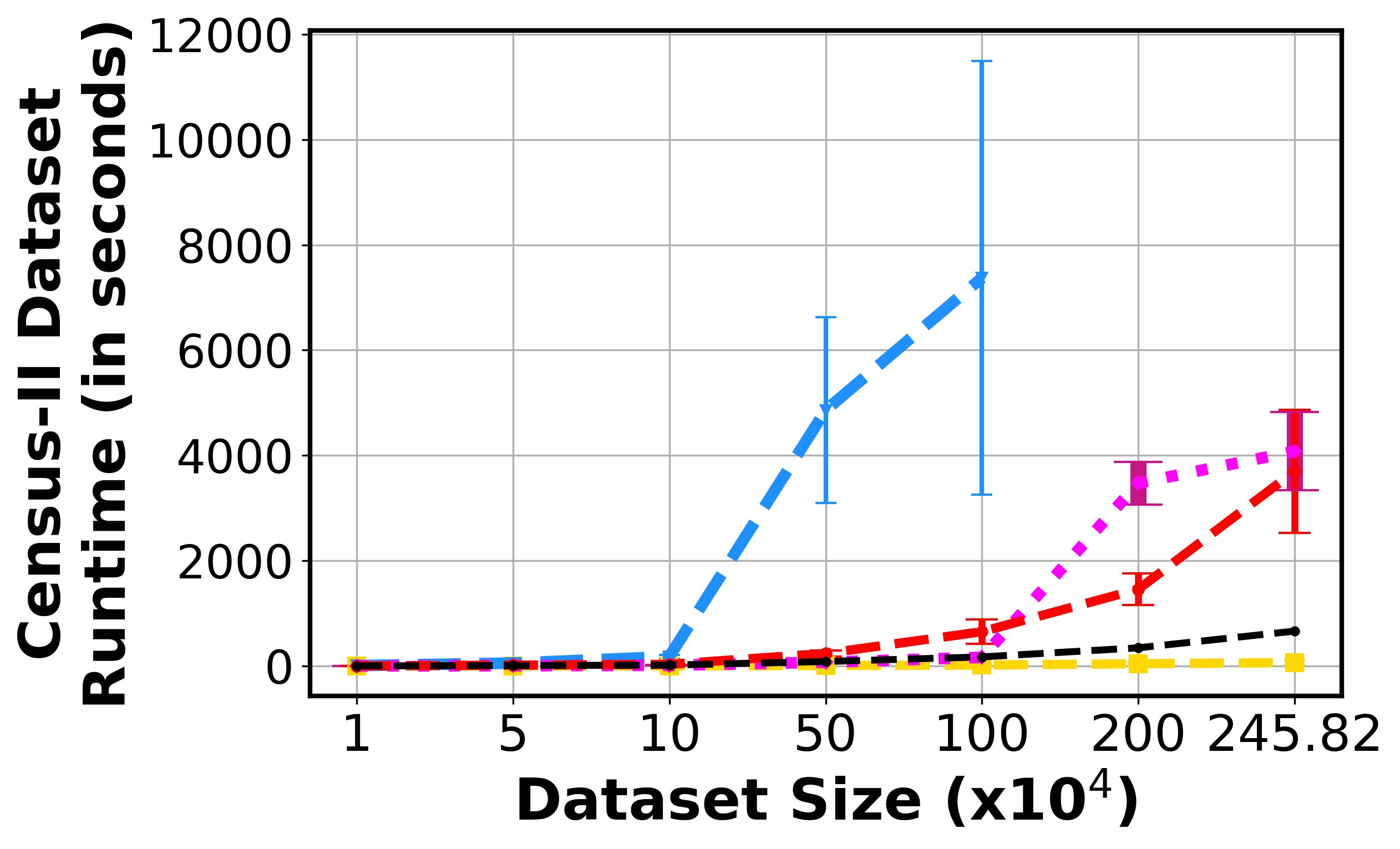}\\ 
\multicolumn{1}{l}{\includegraphics[width=\textwidth]{exp3_legend.png}}
\end{tabular}
  \caption{The line plot shows variation of runtime over varying dataset size (upto complete dataset size of $245.82 \times 10^4$) for $k$=$10$-means setting. The hyper-parameter $\lambda$=$500000$ is taken same as that is reported in Ziko et al. paper for Census-II dataset due to expensive computational requirements.  On the similar reasons Bera et al. results for Census-II are evaluated for dataset size  of $10,000$, $50,000$ and $100000$.  (Best viewed in color) }
    \label{fig:VaryingDatasetSizeRuntimeCensusII}
\end{figure}

We study the scalability of different approaches to increase in the data set size. We conduct experiments using the largest data set, Census-II at $k$=$10$. For \cite{bera2019fair}, plots in Fig. \ref{fig:VaryingDatasetSizeRuntimeCensusII} reveal that the run time significantly increases with $50\times10^4$ points in the data set. So we limit the study up to this size. The run time for untuned \cite{ziko2019variational} is close to vanilla clustering. However, the gap starts to widen after a certain number of data points. On the contrary, our proposed \RROE\ follows a similar trend close to vanilla and does not deteriorate with the varying number of clusters showing the efficiency of \RROE. The \RR\ being an in-processing heuristic has a run time larger than vanilla clustering but is comparable to untuned \cite{ziko2019variational}. Tuning the \cite{ziko2019variational} will result in additional overhead.

\section{Discussion}
We proposed a novel \tfair\ fairness notion. The new notion generalizes the existing \balance\ notion and admits an efficient round-robin algorithm to the corresponding fair assignment problem. We also showed that our  proposed algorithm, \RROE, (i) achieves $2(\alpha+2)$-approximate solution up to three clusters, and (ii) achieves  $2^{k-1}(\alpha+2)$-approximate guarantees to general $k$ with $\tau$=$1/k$. Current proof techniques for $k \leq 3$ requires intricate  case analysis  which becomes  intractable  for larger $k$. However,  our experiments show that \RR\  outperforms SOTA approaches in terms of objective cost and fairness measures even for $k>$3. We also proof the cost approximation for general $\tau$ vector and show convergence analysis for \RROE. An immediate future direction is to analytically prove   $2(\alpha+2)$-approximation guarantee for  general $k$. 

It is worth noting here that the  $\tfair$ fairness ensures the  \balance\ property. However,  if one is to use \balance\ as a constraint, one could get a better approximation guarantee. Surprisingly, we observe from our experiments that this is not the case. We leave the theoretical and experimental  analysis of these two notions of fairness in the presence of large data as an interesting future work. 
Apart from above mentioned immediate  future directions, extending the current  work to multi-valued multiple protected attributes similar to the one proposed by \cite{bera2019fair}, or 
achieving  the notion of individual fairness along while maintaining group fairness are also interesting research problems.  

\section*{Declaration}

\textbf{Funding}: The research is funded by Department of Science \& Technology, India under grant number SRG/2020/001138 (Recipient name- Dr. Shweta Jain).\\

\noindent\textbf{Conflicts of interest/Competing interests}: No potential competing interest was reported by the authors.\\

\noindent\textbf{Availability of data and material}: All datasets used in the experiments are publicly available on UCI repository.\\

\noindent\textbf{Code availability}: The code has been made publicly available at \url{https://github.com/shivi98g/Fair-k-means-Clustering-via-Algorithmic-Fairness}\\

\noindent\textbf{Ethics approval}: Not applicable\\

\noindent\textbf{Consent for publication }: The paper is the authors' own original work, which has not been previously published elsewhere. The paper is not currently being considered for publication elsewhere. The paper reflects the authors' own research and analysis in a truthful and complete manner. The paper properly credits the meaningful contributions of co-authors and co-researchers. 

\bibliography{references}
\end{document}